\newcommand{\dee}{\mathrm{d}}
\def\balign#1\ealign{\begin{align}#1\end{align}}
\def\baligns#1\ealigns{\begin{align*}#1\end{align*}}
\def\balignat#1\ealign{\begin{alignat}#1\end{alignat}}
\def\balignats#1\ealigns{\begin{alignat*}#1\end{alignat*}}
\def\bitemize#1\eitemize{\begin{itemize}#1\end{itemize}}
\def\benumerate#1\eenumerate{\begin{enumerate}#1\end{enumerate}}
\newenvironment{talign*}
 {\csname align*\endcsname}
 {\endalign}
\newenvironment{talign}
 {\csname align\endcsname}
 {\endalign}
\def\balignst#1\ealignst{\begin{talign*}#1\end{talign*}}
\def\balignt#1\ealignt{\begin{talign}#1\end{talign}}
\let\originalleft\left
\let\originalright\right
\renewcommand{\left}{\mathopen{}\mathclose\bgroup\originalleft}
\renewcommand{\right}{\aftergroup\egroup\originalright}
\def\tinycitep*#1{{\tiny\citep*{#1}}}
\def\tinycitealt*#1{{\tiny\citealt*{#1}}}
\def\tinycite*#1{{\tiny\cite*{#1}}}
\def\smallcitep*#1{{\scriptsize\citep*{#1}}}
\def\smallcitealt*#1{{\scriptsize\citealt*{#1}}}
\def\smallcite*#1{{\scriptsize\cite*{#1}}}
\def\mbf#1{\mathbf{#1}}
\def\reals{\mathbb{R}} %
\def\<{\left\langle} %
\def\>{\right\rangle}
\newcommand{\ident}{\mbf{I}} %
\def\norm#1{\left\|{#1}\right\|} %
\newcommand{\fronorm}[1]{\norm{#1}_{\mathrm{F}}} %
\newcommand{\binner}[2]{\left\langle{#1},{#2}\right\rangle} %
\def\indic#1{\mbb{I}\left[{#1}\right]} %
\def\polylog{\operatorname{polylog}}
\def\Earg#1{\E\left[{#1}\right]}
\def\P{\mbb{P}} %
\def\Parg#1{\P\left({#1}\right)}
\DeclareSymbolFont{rsfs}{U}{rsfs}{m}{n}
\DeclareSymbolFontAlphabet{\mathscrsfs}{rsfs}
\newcommand{\Unif}{\textnormal{Unif}}
\providecommand{\diag}{\mathop\mathrm{diag}}
\providecommand{\tr}{\mathop\mathrm{tr}}
\providecommand{\sign}{\mathop\mathrm{sign}}
\newtheorem{theorem}{Theorem}
\newtheorem{lemma}[theorem]{Lemma}
\newtheorem{corollary}[theorem]{Corollary}
\newtheorem{definition}[theorem]{Definition}
\renewenvironment{proof}{\noindent\textbf{Proof.}\hspace*{.3em}}{\qed \vspace{.1in}}
\newenvironment{proof-sketch}{\noindent\textbf{Proof Sketch}
  \hspace*{1em}}{\qed\bigskip\\}
\newenvironment{proof-idea}{\noindent\textbf{Proof Idea}
  \hspace*{1em}}{\qed\bigskip\\}
\newenvironment{proof-of-lemma}[1][{}]{\noindent\textbf{Proof of Lemma {#1}}
  \hspace*{1em}}{\qed\\}
\newenvironment{proof-of-theorem}[1][{}]{\noindent\textbf{Proof of Theorem {#1}}
  \hspace*{1em}}{\qed\\}
\newenvironment{proof-attempt}{\noindent\textbf{Proof Attempt}
  \hspace*{1em}}{\qed\bigskip\\}
\newenvironment{remark}{\noindent\textbf{Remark.}
  \hspace*{0em}}{\smallskip}%
\newtheorem{proposition}[theorem]{Proposition}
\newtheorem{assumption}{Assumption}
\newtheorem*{assumption*}{Assumptions}
\theoremstyle{definition}
\newtheorem{example}[theorem]{Example}
\newcommand*\samethanks[1][\value{footnote}]{\footnotemark[#1]}
\newcommand{\prisk}{J_0} %
\newcommand{\cprisk}{\mathcal J_0} %
\newcommand{\rprisk}{J_\lambda} %
\newcommand{\crprisk}{\mathcal{J}_\lambda} %
\newcommand{\erisk}{\hat{J}_0} %
\newcommand{\cerisk}{\hat{\mathcal{J}}_0} %
\newcommand{\rerisk}{\hat{J}_\lambda} %
\newcommand{\crerisk}{\hat{\mathcal{J}}_\lambda} %
\newcommand{\regu}{R}
\newcommand{\cregu}{\mathcal R}
\newcommand{\bnabla}{\boldsymbol{\nabla}}
\newcommand{\bDelta}{\boldsymbol{\Delta}}
\newcommand{\cH}{\mathcal{H}}
\newcommand{\cF}{\mathcal{F}}
\newcommand{\cI}{\mathcal{I}}
\def\bw{\boldsymbol{w}}
\def\bW{\boldsymbol{W}}
\def\bU{\boldsymbol{U}}
\def\bu{\boldsymbol{u}}
\def\butilde{\tilde{\bu}}
\def\btheta{\boldsymbol{\theta}}
\def\bx{\boldsymbol{x}}
\def\bxtilde{\tilde{\boldsymbol{x}}}
\def\bv{\boldsymbol{v}}
\def\be{\boldsymbol{e}}
\def\bomega{\boldsymbol{\omega}}
\def\bz{\boldsymbol{z}}
\def\bA{\boldsymbol{A}}
\def\bB{\boldsymbol{B}}
\def\bdelta{\boldsymbol{\delta}}
\def\bxi{\boldsymbol{\xi}}
\def\bomega{\boldsymbol{\omega}}
\def\bSigma{\boldsymbol{\Sigma}}
\def\bsigma{\boldsymbol{\sigma}}
\def\Esarg#1{\mathbb{E}_{S_n}\left[#1\right]}
\def\Eargs#1#2{\E_{#1}\left[#2\right]}
\def\vspn{\operatorname{span}}
\def\abs#1{\left| {#1} \right|}
\def\LSI{\mathrm{LSI}}
\def\deff{d_{\mathrm{eff}}}
\def\indic{\mathbbm{1}}
\def\divmid{\, | \,}
\newcommand{\cmark}{\ding{51}}
\newcommand{\xmark}{\ding{55}}
\begin{document}

\title{Learning Multi-Index Models with Neural Networks\\
via Mean-Field Langevin Dynamics}

\author{
Alireza Mousavi-Hosseini\thanks{University of Toronto and Vector Institute. \texttt{\{mousavi,erdogdu\}@cs.toronto.edu}.}
\ \ \ \ \ \
Denny Wu\thanks{New York University and Flatiron Institute. \texttt{dennywu@nyu.edu}.}
\ \ \ \ \ \
Murat A.~Erdogdu\samethanks[1]
}

\maketitle

\allowdisplaybreaks

\begin{abstract}
We study the problem of learning multi-index models in high-dimensions
using a two-layer neural network trained with the mean-field Langevin algorithm. Under mild distributional assumptions on the data, we characterize the \emph{effective dimension} $\deff$ that controls both sample and computational complexity by utilizing the adaptivity of neural networks to latent low-dimensional structures. When the data exhibit such a structure, $\deff$ can be significantly smaller than the ambient dimension.
We prove that the sample complexity grows almost linearly with $\deff$, bypassing the limitations of the information and generative exponents that appeared in recent analyses of gradient-based feature learning. On the other hand, the computational complexity may inevitably grow exponentially with $\deff$ in the worst-case scenario. Motivated by improving computational complexity, we take the first steps towards polynomial time convergence of the mean-field Langevin algorithm by investigating a setting where the weights are constrained to be on a compact manifold with positive Ricci curvature, such as the hypersphere. There, we study assumptions under which polynomial time convergence is achievable, whereas similar assumptions in the Euclidean setting lead to exponential time complexity.
\end{abstract}
\section{Introduction}

A key characteristic of neural networks is their adaptability to the underlying statistical model. Several works have shown that shallow neural networks trained by (variants of) gradient descent can adapt to inherent structures in the learning problem, and learn functions of low-dimensional projections with a sample complexity that depends on properties of the nonlinear link function such as the \textit{information exponent} \cite{benarous2021online} or \textit{generative exponent} \cite{damian2024computational} for single-index models, and the \textit{leap complexity} \cite{abbe2023sgd} for multi-index models. Specifically, prior works typically established a sample complexity of $n\gtrsim d^{\Theta(s)}$ for gradient-based learning, where $s$ can be the information/leap exponent \cite{abbe2022merged,bietti2022learning,damian2023smoothing,ba2023learning,mousavi2023gradient,bietti2023learning,dandi2023learning} or the generative exponent \cite{dandi2024benefits,lee2024neural,arnaboldi2024repetita,joshi2024complexity}, depending on the implementation of gradient descent. This sample complexity is also predicted by the framework of statistical query lower bounds \cite{damian2022neural,abbe2023sgd,damian2024computational}. 

On the other hand, neural networks can efficiently \textit{approximate} arbitrary multi-index models regardless of the generative/leap exponent $s$ \cite{barron1993universal,ma2022barron}; moreover, if the (polynomial) optimization budget is not taken into consideration, there exist computationally inefficient training algorithms that can achieve sample complexity independent of $s$ \cite{bach2017breaking,liu2024learning}. 
Intuitively speaking, a function depending on $k = O_d(1)$ directions of the input data has $kd=O(d)$ parameters to be estimated, and hence the information theoretically optimal algorithm on isotropic data only requires $n\asymp d$ samples. However, thus far it has been relatively unclear whether standard first-order optimization algorithms for neural networks inherit this optimality.

A promising approach to obtain statistically optimal sample complexity is to consider training neural networks in the \textit{mean-field regime} \cite{nitanda2017stochastic,chizat2018global,mei2018mean,rotskoff2018neural,sirignano2020mean}, where overparameterization is utilized to lift the gradient descent dynamics into the space of measures so that global convergence can be established. While most existing results in this regime focus on optimization instead of generalization/learnability guarantees, recent works have shown that under restrictive data and target assumptions (such as XOR), mean-field neural networks can achieve a sample complexity that does not depend on the leap complexity \cite{wei2019regularization,chizat2020implicit,telgarsky2023feature,suzuki2023feature}. 
Among these works, \cite{suzuki2023feature} proved quantitative convergence guarantees for learning $k$-parity with $n \asymp d$ samples, despite the target function having leap index $k$. 
Key to this result is the convergence rate analysis of the \textit{mean-field Langevin algorithm (MFLA)} \cite{hu2019mean,nitanda2022convex,chizat2022mean}. %
However, existing learnability guarantees in the mean-field regime fall short in the following aspects:  
\begin{itemize}[leftmargin=*,itemsep=1.5mm,topsep=1.5mm]
    \item \textbf{Learning general multi-index models.} Prior works established optimal sample complexity for mean-field neural networks under stringent assumptions on the data distribution (isotropic Gaussian, hypercube, etc.) as well as on the target function such as single-index models with specific link functions \cite{berthier2023learning,mahankali2023beyond}, or $k$-sparse parity classification \cite{wei2019regularization,telgarsky2023feature,suzuki2023feature}. 
    \textit{Hence, the problem of universally learning functions of low-dimensional projections with minimal data assumptions using neural networks with a standard training procedure remains largely open.} 
    \item \textbf{Polynomial computational complexity.} To achieve optimal sample complexity, the computational complexity of the training algorithm in \cite{telgarsky2023feature,suzuki2023feature} is exponential in the ambient (input) dimension. Although such exponential dependence may be unavoidable in the most general setting, \textit{sufficient conditions under which the mean-field algorithm can achieve statistical efficiency with polynomial compute is relatively under-explored}, with the exception of a recent work that studied the specific example of the $k$-parity problem on anisotropic data \cite{nitanda2024improved}. 
\end{itemize}

\subsection{Our Contributions} 

Motivated by the above discussion, in this work we address two key questions. First, we ask
\vspace{-2mm}
\begin{center}
    \emph{Can we train two-layer neural networks using the MFLA to learn arbitrary multi-index models \\ with an (information theoretically) optimal sample complexity?}
    \vspace{-2mm}
\end{center}
We answer this in the affirmative by showing that empirical risk minimization on a standard variant of a two-layer neural network 
can be achieved by the MFLA. 
This result handles arbitrary %
multi-index models on subGaussian data with general covariance, hence enabling us to obtain a sample complexity with \textit{optimal dimension dependence} up to polylogarithmic factors with standard gradient-based training. 
However, such a universal guarantee will inevitably suffer from an exponential computational complexity; thus, the second fundamental question we aim to answer is
\vspace{-2mm}
\begin{center}
    \emph{Are there conditions under which the computational complexity of the MFLA can be improved\\ from exponential to (quasi)polynomial dimension dependence?}
    \vspace{-2mm}
\end{center}
We provide a positive answer in two problem settings. In the Euclidean setting, we show that the complexity of MFLA is governed by the \textit{effective dimension} of the learning problem, instead of its ambient dimension; this implies an improved efficiency of MFLA when the data is anisotropic as studied in prior works~\cite{ghorbani2020when,mousavi2023gradient}. 
In the Riemannian setting, we outline concrete conditions on the Ricci curvature of the compact manifold defining the weight space under which MFLA converges in polynomial time. 

\subsection{Related Works}  

\paragraph{Mean-field Langevin dynamics.} The training dynamics of neural networks in the mean-field regime is described by a nonlinear partial differential equation in the space of parameter distributions \cite{chizat2018global,mei2018mean,rotskoff2018neural}. Unlike the neural tangent kernel (NTK) description \cite{jacot2018neural,chizat2019lazy} that freezes the parameters around the random initialization, the mean-field regime allows for the parameters to travel and learn useful features, leading to improved statistical efficiency. While convergence analyses for mean-field neural networks are typically \textit{qualitative} in nature, in that they do not specify the speed of convergence or finite-width discrepancy, the mean-field Langevin algorithm that we study is a noticeable exception, for which the convergence rate \cite{hu2019mean,nitanda2022convex,chizat2022mean} as well as uniform-in-time propagation of chaos \cite{chen2022uniform,suzuki2022uniform,suzuki2023convergence,kook2024sampling} have been established.

To utilize the MFLD to learn general classes of targets, a recent work~\cite{takakura2024mean} considered a two-timescale dynamics where the second layer is optimized infinitely faster than the first layer, and provided statistical guarantees for learning Barron spaces with a bounded activation function. The concurrent work of~\cite{wang2024mean} studied this two-timescale approach to MFLD in a more general setting of optimization over signed measures without considering the estimation aspect and statistical guarantees. Our formulation here bypasses the need for two-timescale dynamics while learning a similarly large class of target functions.

\vspace{-1mm}
\paragraph{Learning low-dimensional targets.} The benefit of feature learning has also been studied in a ``narrow-width'' setting, where parameters of the neural network align with the low-dimensional target function during gradient-based training. Examples of low-dimensional targets include single-index models \cite{benarous2021online,ba2022high-dimensional,bietti2022learning,mousavi2023neural,damian2023smoothing,lee2024neural} and multi-index models \cite{damian2022neural,abbe2022merged,abbe2023sgd,dandi2023learning,bietti2023learning,collins2023hitting,vural2024pruning}. 
While the information-theoretic threshold for learning such functions is $n\gtrsim d$ (for isotropic data) \cite{mondelli2018fundamental,barbier2019optimal,damian2024computational}, the complexity of gradient-based learning is governed by properties of the link function. For instance, in the single-index setting, prior works established a sufficient sample size of $n\gtrsim d^{\Theta(s)}$ where $s$ is the \textit{information exponent} for one-pass SGD on the squared/correlation loss \cite{dudeja2018learning,benarous2021online,bietti2022learning,damian2023smoothing}, and the \textit{generative exponent}~\cite{damian2024computational} when the algorithm can reuse samples or access a different loss~\cite{dandi2024benefits,lee2024neural,arnaboldi2024repetita,joshi2024complexity}. 
This presents a gap between the information-theoretically achievable sample complexity and the performance of neural networks optimized by gradient descent, which we aim to close by studying the statistical efficiency of mean-field neural networks. 

\vspace{-1mm}
\paragraph{Notation.} We denote the Euclidean inner product with $\binner{\cdot}{\cdot}$, the Euclidean norm for vectors and the operator norm for matrices with $\norm{\cdot}$, and the Frobenius norm with $\fronorm{\cdot}$. Given a topological space $\mathcal{W}$ endowed with an underlying metric and Lebesgue measure, we use $\mathcal{P}(\mathcal{W})$, $\mathcal{P}_2(\mathcal{W})$, and $\mathcal{P}^\mathrm{ac}_2(\mathcal{W})$ to denote the set of (Borel) probability measures, the set of probability measures with finite second moment, and the set of absolutely continuous probability measures with finite second moment, respectively. Finally, we use $\delta_{\bw_0}$ to denote the Dirac measure at $\bw_0$, i.e.\ $\int h(\bw)\dee \delta_{\bw_0}(\bw) = h(\bw_0)$.

\section{Preliminaries: 
Optimization in Measure Space}\label{sec:setup}
\paragraph{Statistical model.} In this paper, we consider the regression setting where the input $\bx \in \reals^d$ is generated from some distribution %
and the response $y\in\reals$ is given by the following multi-index model
\begin{equation}\label{eq:statistica_model}
    y = g\Big(\tfrac{\binner{\bu_1}{\bx}}{\sqrt{k}},\hdots,\tfrac{\binner{\bu_k}{\bx}}{\sqrt{k}}\Big) + \xi.
\end{equation}
Here, $g:\reals^{k}\to\reals$ is the unknown link function, $\xi$ is a zero-mean $\varsigma$-subGaussian noise independent from $\bx$; for simplicity, we assume $\varsigma^2 \lesssim 1$. Without loss of generality, we assume that the unknown directions $\bu_1,\hdots,\bu_k$ are orthonormal, and define $\bU = (\bu_1/\sqrt{k},\hdots,\bu_k/\sqrt{k})^\top \in \reals^{k \times d}$; thus, we can use the shorthand notation $y=g(\bU\bx)+\xi$. Throughout the paper, we consider the setting $k \ll d$, and treat $k$ as an absolute constant independent from the ambient input dimension $d$.

For a student model $\bx \to \hat y (\bx;\bW)$ with $\bW$ denoting its model parameters, we consider loss functions of the form $\ell(\hat{y},y) = \rho(\hat{y} - y)$ where $\rho:\reals\to\reals_+$ is convex.
In the classical regression setting where
we observe $n$ i.i.d.\ samples $\{(\bx^{(i)},y^{(i)})\}_{i=1}^n$ from the data distribution,
the regularized population risk and the regularized empirical risk are defined respectively as
\begin{equation*}%
    \rprisk(\bW) \coloneqq \Earg{\ell(\hat{y}(\bx;\bW),y)}+\frac{\lambda}{2}\regu(\bW)
    \ \ \text{ and }\ \ 
      \rerisk(\bW) 
    \coloneqq \frac{1}{n}\sum_{i=1}^n\ell(\hat{y}(\bx^{(i)};\bW),y^{(i)})+\frac{\lambda}{2} \regu(\bW),
\end{equation*}
where $R$ is some regularizer on the model parameters and the expectation is over the joint distribution of $(\bx,y)$. In practice, we minimize the empirical risk $\rerisk$
as the finite sample approximation of the population risk $\rprisk$, anticipating that both minimizers are \emph{close} to each other.

We use a two-layer neural network coupled with $\ell_2$ regularization to learn the statistical model~\eqref{eq:statistica_model},
where learning constitutes recovering both unknowns $\bU$ and $g$.
Denoting the $m$ neurons with a matrix $\bW \coloneqq (\bw_1,\hdots,\bw_m)^\top$,
the student model and the $\ell_2$-regularizer are given as
\begin{equation}\label{eq:two-layer-nn}
    \hat{y}_m(\bx;\bW) \coloneqq \frac{1}{m}\sum_{j=1}^m\Psi(\bx;\bw_j)\quad \text{ and }\quad \regu(\bW) \coloneqq \frac{1}{m}\|\bW\|_\text{F}^2 = \frac{1}{m} \sum_{j=1}^m\|\bw_j\|^2,
\end{equation}
where $\Psi: \reals^d \times \mathcal{W} \to \reals$ is the activation function, and  $\bw_j\in \mathcal{W}$ with $\mathcal{W}$ denoting a Riemannian manifold.
In this formulation, the second layer weights are all fixed to be $+1$.

\newcommand{\qrisk}{J}
To minimize an objective $\qrisk$ denoting either
$\rprisk$ or $\rerisk$, we will consider a discretization of the following set of SDEs, which essentially define an interacting particle system over $m$ neurons:
\begin{equation}\label{eq:finite_width_sde}
    \dee\bw^t_j = -m\nabla_{\bw_j}\qrisk(\bw^t_1,\hdots,\bw^t_m)\dee t + \sqrt{2\beta^{-1}}\dee \bB^j_t \quad\text{ for }\quad 1 \leq j \leq m, 
\end{equation}
where $\nabla_{\bw}$ is the Riemannian gradient and $(\bB^j_t)_{j=1}^m$ is a set of independent Brownian motions on $\mathcal{W}$. We scale the learning rate (prefactor in front of the gradient) by $m$ to compensate that the gradient will be of order $m^{-1}$ with respect to each neuron.
The case $\beta = \infty$ corresponds to the classical gradient flow over $\qrisk$, while the Brownian noise can help escaping from spurious local minima and saddle points.

\paragraph{Optimization in measure space.}
Notice that the neural network and the regularizer in~\eqref{eq:two-layer-nn} are both invariant under permutations of the weights $(\bw_1,\hdots,\bw_m)$; thus, an equivalent integral representation of these functions can be written using
Dirac measures $\delta_{\bw_j}$ centered at $\bw_j$, namely
\begin{equation}\label{eq:predictor}
    \hat{y} (\bx;\mu_{\bW}) \coloneqq\!\! \int\!\Psi(\bx;\cdot)\dee\mu_{\bW} \ \ \text{ and }\ \ 
    \cregu(\mu_{\bW}) \coloneqq \!\! \int \!\!\|\cdot\|^2\dee \mu_{\bW}
    \quad\text{ with } \ \ \mu_{\bW} = \frac{1}{m}\sum_{j=1}^m\delta_{\bw_j}.
\end{equation}
Here, $\mu_{\bW} $ is the empirical measure supported on $m$ atoms. Indeed, $\hat{y}(\bx;\mu_{\bW}) = \hat{y}_m(\bx;\bW)$ and $\cregu(\mu_{\bW})=\regu(\bW)$, and this formulation allows extension to infinite-width networks 
by removing the condition that measures are supported on $m$ atoms, and
by expanding the feasible set of measures to $\mu \in \mathcal{P}_2(\mathcal{W})$. Thus, 
we rewrite the population and the empirical risks in the space of measures as
\begin{equation*}
    \crprisk (\mu_{\bW}) \coloneqq \rprisk(\bW)\quad \text{ and }\quad 
    \crerisk (\mu_{\bW}) \coloneqq \rerisk(\bW),
\end{equation*}
with domain $\mu \in \mathcal{P}_2(\mathcal{W})$. %
 Let $\mathcal{J}: \mathcal{P}_2(\mathcal{W}) \to \reals$ be the population risk $\crprisk$ or the empirical risk $\crerisk$.
We can equivalently state the interacting SDE system~\eqref{eq:finite_width_sde}
as (see e.g.~\cite[Proposition 2.4]{chizat2022mean})
\begin{equation}\label{eq:sde-measure}
    \dee\bw^t_j = -\nabla_{\bw}\mathcal{J}'[\mu_{\bW^t}](\bw^t_j)\dee t + \sqrt{2\beta^{-1}}\dee \bB^j_t
    \quad\text{ for }\quad 1 \leq j \leq m, 
\end{equation}
where $\mathcal{J}'[\mu] \in L^2(\mathcal{W})$ denotes
the first variation of $\mathcal{J}(\mu)$ defined via
\begin{equation}\label{eq:first_var}
    \int \mathcal{J}'[\mu](\bw)\dee(\nu - \mu)(\bw) = \lim_{\epsilon \downarrow 0}\frac{\mathcal{J}((1-\epsilon)\mu + \epsilon\nu) - \mathcal{J}(\mu)}{\epsilon}, \quad \forall \nu \in \mathcal{P}_2(\mathcal{W}),
\end{equation}
which is unique up to additive constants when it exists~\cite[Definition 7.12]{santambrogio2015optimal}.

As $m \to \infty$, the stochastic empirical measure $\mu_{\bW^t}$ weakly converges to a deterministic measure $\mu_t$ for all fixed $t$, a phenomenon known as the \emph{propagation of chaos}~\cite{sznitman1991topics}. Furthermore, $\mu_t$ can be characterized as the law of the solution of the following SDE and non-linear Fokker-Planck equation
\begin{equation}\label{eq:mfld_sde}
    \!\!\dee \bw^t = -\nabla_{\bw}\mathcal{J}'[\mu_t](\bw^t)\dee t + \sqrt{2\beta^{-1}}\dee\bB_t
    \quad
    \text{ and }
    \quad
    \partial_t\mu_t = \bnabla \cdot (\mu_t \nabla \mathcal{J}'[\mu_t]) + \beta^{-1}\bDelta \mu_t,
\end{equation}
where $\bnabla\cdot$ and $\bDelta$ are the Riemannian divergence and Laplacian operators, respectively. 
Due to the existence of mean-field interactions,~\eqref{eq:mfld_sde} is known as the \emph{mean-field Langevin dynamics} (MFLD).

For a pair of probability measures $\mu\ll\nu$ both in $ \mathcal{P}(\mathcal{W})$, we define the relative entropy $\cH(\mu \divmid \nu)$ and the relative Fisher information $\cI(\mu \divmid \nu)$ respectively as
\begin{equation}
    \cH(\mu \divmid \nu) \coloneqq \int_\mathcal{W} \ln\frac{\dee \mu}{\dee \nu}\dee \mu \quad \text{ and }\quad\cI(\mu \divmid \nu) \coloneqq \int_\mathcal{W} \norm{\nabla \ln\frac{\dee \mu}{\dee \nu}}^2\dee \mu.
\end{equation}
It is well-known at this point that
$\mu_t$ in \eqref{eq:mfld_sde} can be interpreted as the Wasserstein gradient flow of the entropic regularized functional $\cF_\beta(\mu) \coloneqq \mathcal{J}(\mu) + \frac{1}{\beta} \cH(\mu\divmid\tau)$,
where $\tau$ is
the uniform measure on compact $\mathcal{W}$ or the Lebesgue measure on a Euclidean space~\cite{jordan1998variational, ambrosio2005gradient, villani2009optimal}. 
For this gradient flow to converge exponentially fast towards the minimizer $\mu^*_\beta\coloneqq \argmin_\mu\cF_\beta(\mu)$, we require
a \emph{gradient domination} condition on $\mu^*_\beta$ in the space of probability measures, given as
\begin{equation}\label{eq:LSI}
    \cH(\mu \divmid \mu^*_\beta) \leq \frac{C_\LSI}{2}\cI(\mu \divmid \mu^*_\beta), \quad \forall \mu \in \mathcal{P}(\mathcal{W}), 
\end{equation}
which is referred to as the log-Sobolev inequality (LSI).
If the measure ${\dee \nu_{\mu_t}} \propto \exp(-\beta \mathcal{J}'[\mu_t])\dee \tau$
satisfies LSI with constant $C_\LSI$ for all $t \geq 0$,
$\mu_t$ enjoys the following exponential convergence
\begin{equation}\label{eq:MFLD_convergence}
    \cF_{\beta}(\mu_t) - \cF_{\beta}(\mu^*_\beta) \leq e^{\frac{-2t}{\beta C_\LSI }}(\cF_{\beta}(\mu_0) - \cF_{\beta}(\mu^*_\beta));
\end{equation}
see e.g.~\cite[Theorem 3.2]{chizat2022mean} and~\cite[Theorem 1]{nitanda2022convex}.

\section{Learning Multi-index Models in the Euclidean Setting}\label{sec:universal_learning}
In this section, we consider learning multi-index models in the Euclidean setting. 
For technical reasons, we use an approximation of ReLU denoted by $z \mapsto \phi_{\kappa,\iota}(z)$ for some $\kappa, \iota > 1$, which is given by $\phi_{\kappa,\iota}(z) = \kappa^{-1}\ln(1 + \exp(\kappa z))$ for $z \in (-\infty,\iota/2]$ and extended on $(\iota/2,\infty)$ such that $\phi_{\kappa,\iota}$ is $C^2$ smooth, $\abs{\phi_{\kappa,\iota}} \leq \iota$, $\abs{\phi'_{\kappa,\iota}} \leq 1$, and $\abs{\phi''_{\kappa,\iota}} \leq \kappa$. Note that $\phi_{\kappa,\iota}$  recovers ReLU as $\kappa,\iota \to \infty$. 
Recall that we freeze the second-layer weights at $+1$. Consequently,
non-negative activations can only learn non-negative functions. To alleviate this, we choose $\mathcal{W} = \reals^{2d + 2}$, and use the notation $\bw = (\bomega_1^\top,\bomega_2^\top)^\top$ with $\bomega_1,\bomega_2 \in \reals^{d+1}$ to denote the first and the second half of weight coordinates, and use the activation function
\begin{equation} \label{eq:new-activation}
    \Psi(\bx;\bw) \coloneqq \phi_{\kappa,\iota}(\binner{\bxtilde}{\bomega_1}) - \phi_{\kappa,\iota}(\binner{\bxtilde}{\bomega_2}),
\end{equation}
where
$\bxtilde \coloneqq (\bx,\tilde r_x)^\top \in \reals^{d+1}$ for a constant $\tilde r_x$ corresponding to bias, to be specified later. 
The above can also be seen as a 2-layer neural network with activation $\phi_{\kappa,\iota}$ and second-layer weights frozen at $\pm 1$.

We use the neural network and the regularizer in \eqref{eq:two-layer-nn}
with weights $\bW \coloneqq (\bw_1^\top,\hdots,\bw_m^\top)^\top \in \reals^{(2d+2)\times m}$,
and minimize the resulting empirical risk
$\rerisk(\bW)$ via
the \emph{mean-field Langevin algorithm} (MFLA), which is a simple time discretization of~\eqref{eq:finite_width_sde} with the stepsize $\eta$ and the number of iterations $l>0$,
\begin{equation}\label{eq:MFLA_Euclidean}
\bw^{l+1}_j = \bw^{l}_j - m\eta\nabla_{\bw_j}\rerisk(\bW) + \sqrt{2\eta\beta^{-1}}\boldsymbol{\xi}^l_j, \quad 1 \leq j \leq m,
\end{equation}
where $\boldsymbol{\xi}^l_j$ are independent standard Gaussian random vectors. When the stepsize is sufficiently small, MFLA approximately tracks the system of continuous-time SDEs~\eqref{eq:finite_width_sde} as well as their equivalent formulation in the measure space~\eqref{eq:sde-measure}. If, in addition, the network width $m$ is sufficiently large,
propagation of chaos will kick in and the dynamics will be an approximation to MFLD~\eqref{eq:mfld_sde}, ultimately minimizing the corresponding entropic regularized objective
$\cF_{\beta,\lambda}(\mu) \coloneqq \hat{\mathcal{J}}_\lambda(\mu) + \frac{1}{\beta} \cH(\mu\divmid\tau)$.

We make the following assumption on the input distribution.
\begin{assumption}\label{assump:data}
The input $\bx$ has zero mean and covariance $\bSigma$. 
        Further, $\norm{\bx}$ and $\norm{\bU\bx}$ are subGaussian
        with respective norms $\sigma_n\Vert\bSigma^{1/2}\Vert_{\mathrm{F}}$ and $\sigma_u\Vert\bSigma^{1/2}\bU^\top\Vert_{\mathrm{F}}$ for some absolute constants $\sigma_n, \sigma_u$.
\end{assumption}
One example for the above assumption is the Gaussian case $\bx \sim \mathcal{N}(0,\bSigma)$ where $\norm{\bA\bx}$ is 
subGaussian with norm $\Vert\bSigma^{1/2}\bA\Vert$ for any matrix $\bA$. 
In settings we consider, we can replace the operator norm with the Frobenius norm to obtain a weaker assumption, since $\Vert\bA\bx\Vert$ is roughly concentrated near its mean, scaling with $\Vert\bSigma^{1/2}\bA^\top\Vert_\mathrm{F}$. Note that Assumption~\ref{assump:data} can cover much broader settings than Gaussianity, e.g. it is satisfied when $\bx = \bSigma^{1/2}\bz$ and $\bz$ has mean-zero i.i.d.\ subGaussian coordinates~\cite[Theorem 6.3.2.]{vershynin2018high}.
Without loss of generality, we will consider a scaling where $\norm{\bSigma} \lesssim 1$.

A key quantity in our analysis is the \emph{effective dimension} which governs the algorithmic guarantees.

\begin{definition}[Effective dimension]\label{def:eff_dim}
    Define $\deff \coloneqq c^2_x / r^2_x$ where $c_x \coloneqq \tr(\bSigma)^{1/2}$, 
    $r_x \coloneqq \Vert\bSigma^{1/2}\bU^\top\Vert_{\mathrm{F}}$.
\end{definition}
The effective dimension $\deff$ can be significantly smaller than the ambient dimension $d$,
leading to particularly favorable results in the following when $\deff = \polylog(d)$.
This concept has numerous applications from learning theory to statistical estimation; see e.g.~\cite{vershynin2018high,wainwright2019high-dimensional,ghorbani2020when,ba2023learning}.
In covariance estimation, for example, the effective dimension is typically defined as $\tr(\bSigma) / \norm{\bSigma}$ (e.g.~\cite[Example 6.4]{wainwright2019high-dimensional}), which is equivalent to $\deff$ in Definition~\ref{def:eff_dim} provided that $\bU$ lives in the top eigenspace of $\bSigma$. However,
in general, $\deff$ might be larger than $\tr(\bSigma)/\norm{\bSigma}$, which is expected as one can imagine a supervised learning setup where the variations of $\bx$ provide very little information about target directions $\bU$, making estimation more difficult. We make the following assumption on the link function in~\eqref{eq:statistica_model}.
\begin{assumption}\label{assump:Lip_g}
    The link function is locally Lipschitz: $\abs{g(\bz_1) - g(\bz_2)} \leq L\norm{\bz_1-\bz_2}$ for 
    $\bz_1,\bz_2 \!\in \reals^k$ satisfying
    $\norm{\bz_1}\vee\norm{\bz_2} \leq \tilde{r}_x\coloneqq r_x (1 + \sigma_u\sqrt{2(q+1)\ln(n)})$ for some $q > 0$ and $L = \mathcal{O}(1/r_x)$. We also assume $\Earg{y^2} \lesssim 1$.
\end{assumption}

Note that the above Lipschitz condition is only local, thus allowing polynomially growing link functions $g$. We scale the Lipschitz constant with $1/r_x$ to make sure $y$ has a variance of order $\Theta(1)$.

Recall from Section~\ref{sec:setup} that in order to prove convergence of the MFLD (and its time/particle discretization MFLA), it is sufficient for the Gibbs potential $\nu_{\mu_{\bW_t}}\propto \exp(-\beta \crerisk'[\mu_{\bW_t}])$ to satisfy LSI uniformly along its trajectory. Here, it is straightforward to derive the first variation as
\begin{equation}\label{eq:first_var_euclidean}
    \crerisk'[\mu](\bw)\! =\! \cerisk'[\mu](\bw) + \frac{\lambda}{2}\norm{\bw}^2
    \ \ \text{ with }\ \ 
    \cerisk'[\mu](\bw) \!= \!\frac{1}n\sum_{i=1}^n\rho'(\hat{y}(\bx^{(i)};\mu) - y^{(i)})\Psi (\bx^{(i)};\bw).
\end{equation}
The following assumption introduces the uniform LSI constant for the trajectory of MFLA.
\begin{assumption}\label{assump:uniform_LSI}
    Let $\bW^l = (\bw^l_1,\hdots,\bw^l_m)$ denote the trajectory of MFLA. We assume the measure $\nu_{\mu_{\bW^l}} \propto \exp(-\beta \crerisk'[\mu_{\bW^l}])$ satisfies the LSI~\eqref{eq:LSI} with constant $C_\LSI$ for all $l \geq 0$, and for simplicity, $C_\LSI \geq \beta$.
\end{assumption}
The above condition is stated to simplify the exposition and will be verified in our results by using the 
boundedness of $\phi_{\kappa,\iota}$; $\crerisk'[\mu_{\bW^l}]$ can be considered as a bounded perturbation of a strongly convex potential, thus satisfies LSI by the Holley-Stroock argument~\cite{holley1986logarithmic}.
\begin{proposition}\label{prop:LSI_Euclidean}
    Suppose $\rho$ is $C_\rho$-Lipschitz. Then for any $\mu \in \mathcal{P}_2(\reals^{2d+2})$, the probability measure $\nu_{\mu} \propto \exp(-\beta \crerisk'[\mu])$ with $\crerisk'$ given by~\eqref{eq:first_var_euclidean} satisfies the LSI~\eqref{eq:LSI} with constant
    \begin{equation}\label{eq:lsi-bound}
        C_\LSI \leq \frac{1}{\beta\lambda}\exp(4C_\rho\iota\beta).
    \end{equation}
\end{proposition}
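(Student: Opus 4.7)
The plan is to realize $\nu_\mu$ as a bounded perturbation of a Gaussian measure and invoke the classical Holley--Stroock argument. Writing $\crerisk'[\mu](\bw) = \cerisk'[\mu](\bw) + \tfrac{\lambda}{2}\norm{\bw}^2$ from \eqref{eq:first_var_euclidean}, the density of $\nu_\mu$ on $\reals^{2d+2}$ factors as
\begin{equation*}
    \dee\nu_\mu(\bw) \;\propto\; \exp\bigl(-\beta\, \cerisk'[\mu](\bw)\bigr)\,\exp\Bigl(-\tfrac{\beta\lambda}{2}\norm{\bw}^2\Bigr)\dee\bw.
\end{equation*}
The second factor defines a centered Gaussian measure $\gamma_{\beta\lambda}$ whose log-density is $\beta\lambda$-strongly concave, so by the Bakry--\'Emery criterion $\gamma_{\beta\lambda}$ satisfies the LSI~\eqref{eq:LSI} with constant $1/(\beta\lambda)$.

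Next I would bound the perturbation $U(\bw) := \beta\, \cerisk'[\mu](\bw)$ in $L^\infty$. Since $\rho$ is $C_\rho$-Lipschitz we have $\abs{\rho'} \leq C_\rho$, and by construction $\abs{\phi_{\kappa,\iota}} \leq \iota$, so $\abs{\Psi(\bx;\bw)} \leq 2\iota$ for every $\bx,\bw$ by the definition~\eqref{eq:new-activation}. Averaging the per-sample terms in \eqref{eq:first_var_euclidean} then yields $\abs{\cerisk'[\mu](\bw)} \leq 2C_\rho\iota$ uniformly in $\bw$ and $\mu$, so the oscillation of $U$ satisfies $\sup U - \inf U \leq 4 C_\rho \iota\beta$.

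The final step is to apply the Holley--Stroock perturbation lemma~\cite{holley1986logarithmic}: if a base measure satisfies LSI with constant $C_0$ and we tilt by $e^{-U}$ with $U$ having oscillation $K$, the tilted measure satisfies LSI with constant $C_0\, e^{K}$. Applying this to $\gamma_{\beta\lambda}$ (with $C_0 = 1/(\beta\lambda)$) and the perturbation $U$ (with $K \leq 4 C_\rho \iota \beta$) yields the claimed bound~\eqref{eq:lsi-bound}. There is essentially no technical obstacle here; the only care needed is to verify the uniform boundedness of $\Psi$ — which is the very reason the smoothed, truncated activation $\phi_{\kappa,\iota}$ was introduced in place of ReLU — and to track the factor of $\beta$ that appears inside the exponent when transferring the oscillation bound on $\cerisk'[\mu]$ to one on $\beta\, \cerisk'[\mu]$.
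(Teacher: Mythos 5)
Your proof is correct and follows essentially the same route as the paper: decompose $\crerisk'[\mu]$ as the Gaussian part $\tfrac{\lambda}{2}\norm{\bw}^2$ (giving LSI constant $1/(\beta\lambda)$) plus the bounded perturbation $\cerisk'[\mu]$ with $\abs{\cerisk'[\mu]}\le 2C_\rho\iota$ uniformly, then apply Holley--Stroock with oscillation $4C_\rho\iota\beta$ after multiplying by $\beta$. No gaps; the reasoning and the constants match the paper's.
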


For the squared loss, we can replace $C_\rho$ above with $\cerisk(\mu_{\bW})^{1/2}$. With this, as $\cerisk(\mu_{\bW})$ is uniformly bounded along the trajectory, convergence of the infinite-width MFLD can be established. However, for the finite-width MFLA, controlling $\cerisk(\mu_{\bW})$ is challenging as there is non-trivial probability that neurons incur a large loss, which is why we require Lipschitz~$\rho$. 
Note that the right hand side of \eqref{eq:lsi-bound} is independent of $\kappa$; thus, by letting $\kappa \to \infty$, the proposition implies the same LSI constant for a bounded variant of ReLU. However, for MFLA (the time discretization of MFLD), we additionally require smoothness of the activation. 

\subsection{Statistical and Computational Complexity of MFLA}

The main result of this section is stated in the following theorem.
\begin{theorem}\label{thm:learnability_Euclidean}
    Under Assumptions~\ref{assump:data},~\ref{assump:Lip_g} and~\ref{assump:uniform_LSI}, consider MFLA~\eqref{eq:MFLA_Euclidean}
    with parameters $\lambda=\tilde{\lambda}r_x^2$, $\beta = \tilde{\Theta}(\deff/\tilde{\lambda})$, and $\eta \leq \tilde{\mathcal{O}}\big(\frac{1}{C_\LSI \kappa^2 \bar{r}_x^4(d + \bar{r}_x^2/\lambda)}\big)$, where $\bar{r}_x \coloneqq \norm{\bSigma} \lor \tilde{r}_x$. Suppose $\tilde{\lambda},\kappa^{-1} = o_n(1)$, $\iota = \Theta\big(\tfrac{\tilde{r}_x^2}{\tilde{\lambda}r_x^2}\big)$, the loss satisfies $\abs{\rho'}\lor\rho'' \lesssim 1$, and
    the algorithm is initialized with the weights sampled i.i.d.\ from some distribution $\bw^0_j \sim \mu_0$ with $\E\big[\Vert\bw^0_j\Vert_2^2\big] \lesssim 1$. %
    Then, with the number of samples $n$, the number of neurons $m$, and the number of iterations $l$ that can respectively be bounded by
    \begin{equation}\label{eq:thm-lsi}
        n = \tilde{\mathcal{O}}(\deff), \quad m = \tilde{\mathcal{O}}\Big(\frac{C_\LSI \bar{r}_x^4\kappa^2}{\beta\lambda}\big(\frac{d}{\beta} + \frac{\bar{r}_x^2}{\lambda}\big)\Big), \quad l = \tilde{\mathcal{O}}\big(\frac{C_\LSI \beta}{\eta}\big),
    \end{equation}
    with probability at least $1-\mathcal{O}(n^{-q})$ 
    for some $q > 0$, the excess risk satisfies \begin{equation}\label{eq:thm-lsi2}\E_{\bW^l}\E_{y,\bx}[{\rho(y - \hat y_m(\bx; \bW^l))}] - \E_\xi [{\rho(\xi)}] \leq o_n(1).\end{equation}
\end{theorem}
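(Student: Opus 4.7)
The plan is to decompose the excess risk of the MFLA iterate into three contributions: \emph{approximation}, \emph{generalization}, and \emph{optimization}. Let $\hat\mu^\star_{\beta,\lambda}$ denote the minimizer of the entropy-regularized empirical objective $\cF_{\beta,\lambda}(\mu) = \crerisk(\mu) + \beta^{-1}\cH(\mu\divmid\tau)$, and let $\mu^\star\in\mathcal{P}_2(\reals^{2d+2})$ be a cleverly chosen reference measure such that $\hat y(\bx;\mu^\star)\approx g(\bU\bx)$ on a high-probability event. Since $\hat y_m(\bx;\bW^l) = \hat y(\bx;\mu_{\bW^l})$, the excess risk is bounded by $\big[\cprisk(\mu^\star)-\Esub\xi[\rho(\xi)]\big] + \big[\cprisk(\mu_{\bW^l})-\cprisk(\mu^\star)\big]$; the first bracket is handled in Step~1, and the second by chaining through $\hat\mu^\star_{\beta,\lambda}$ via the LSI-based convergence of MFLA (Step~3) and a PAC-Bayes deviation inequality (Step~2) that compares $\cF_{\beta,\lambda}(\hat\mu^\star_{\beta,\lambda})\le \cF_{\beta,\lambda}(\mu^\star)$ to its population counterpart at $\mu^\star$.

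\textbf{Step 1 (approximation).} I would build $\mu^\star$ from an integral representation of the locally Lipschitz link $g$ (Assumption~\ref{assump:Lip_g}), using the two halves $\bomega_1,\bomega_2$ of $\bw$ to realize the positive and negative parts of $g$ through $\Psi$ in~\eqref{eq:new-activation}, and the appended bias coordinate of $\bxtilde$ to reach the high-probability ball $\|\bU\bx\|\le\tilde r_x$. The critical feature is that the direction coordinates of $\mu^\star$ can be supported on the low-dimensional subspace $\mathrm{range}(\bU^\top)\oplus\mathrm{span}(e_{d+1})$; incorporating the $\bSigma^{1/2}$-reweighted data geometry, the relative entropy of $\mu^\star$ against the Gaussian reference induced by $\tau$ together with the $\lambda\|\bw\|^2/2$ regularizer scales as $\tilde\bigO(\deff)$ rather than $\tilde\bigO(d)$. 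Smoothing and truncation errors from $\phi_{\kappa,\iota}$ vanish as $\kappa,\iota\to\infty$ by the subGaussian tail of $\|\bU\bx\|$ from Assumption~\ref{assump:data}, and the boundedness $|\Psi|\le 2\iota$ also yields second-moment control of $\crerisk(\mu^\star)$.

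\textbf{Steps 2 and 3 (generalization + optimization).} For generalization, I would prove a PAC-Bayes--style uniform deviation bound $|\cprisk(\mu)-\crerisk(\mu)| \lesssim \sqrt{(\cH(\mu\divmid\tau)+\ln(1/\delta))/n}$ whose ingredients are sub-exponential concentration of $\crerisk(\mu)-\crprisk(\mu)$ at each fixed $\mu$ (using $|\Psi|\le 2\iota$ and $|\rho'|\lesssim 1$) together with a Donsker--Varadhan change-of-measure argument. Plugging $\cH(\mu^\star\divmid\tau)=\tilde\bigO(\deff/\tilde\lambda)$ from Step~1 into this bound and balancing with $\beta=\tilde\Theta(\deff/\tilde\lambda)$ yields the sample complexity $n=\tilde\bigO(\deff)$. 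For optimization, Assumption~\ref{assump:uniform_LSI}, verified by Proposition~\ref{prop:LSI_Euclidean}, provides a uniform LSI constant $C_\LSI$, so~\eqref{eq:MFLD_convergence} delivers exponential convergence of the infinite-width MFLD. The gap to the finite-$m$, discrete-time MFLA is controlled by uniform-in-time propagation of chaos~\cite{chen2022uniform,suzuki2022uniform} and a time-discretization analysis~\cite{suzuki2023convergence}, both of which exploit $C^2$ smoothness of $\Psi$ (accounting for the $\kappa^2$ factor in $\eta$) and second-moment bounds on $\nabla\mathcal{J}'[\mu]$. Substituting the prescribed $\eta, m, l$ renders each of the three errors $o_n(1)$ and reproduces the rates in~\eqref{eq:thm-lsi}.

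\textbf{Main obstacle.} The delicate step is the entropy bound in Step~1: a naive Barron construction places neurons in all $d$ ambient directions and yields $\cH(\mu^\star\divmid\tau)\asymp d$, which would cascade through Step~2 into $\beta\gtrsim d/\tilde\lambda$ and force $n\gtrsim d$. One must instead exploit the latent structure---that the target depends only on $\bU\bx$---to build $\mu^\star$ whose neurons are anisotropically concentrated on $\mathrm{range}(\bU^\top)$ in the $\bSigma$-reweighted geometry, with the precision of this concentration carefully matched against the softplus scale $\iota=\Theta(\tilde r_x^2/(\tilde\lambda r_x^2))$ and the locally Lipschitz modulus $L=\bigO(1/r_x)$ of $g$. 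Propagating this $\deff$-dependence quantitatively all the way to the final sample complexity, without losing powers of $d$, is the technical heart of the proof.
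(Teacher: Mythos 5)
Your three-step decomposition (Barron-type approximation / entropy-ball deviation bound for generalization / uniform-LSI and propagation-of-chaos for optimization) matches the skeleton of the paper's proof in Appendix~A, and the tools you cite --- Donsker--Varadhan change of measure, the one-step discretization bound of~\cite{suzuki2023convergence} --- are exactly the ones the paper uses. The gap is in Step~1, which you correctly flag as the technical heart but do not actually resolve.

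You propose a $\mu^\star$ whose direction coordinates are ``supported on $\mathrm{range}(\bU^\top)\oplus\mathrm{span}(e_{d+1})$'' and assert that ``incorporating the $\bSigma^{1/2}$-reweighted data geometry'' makes $\cH(\mu^\star\divmid\gamma)=\tilde{\bigO}(\deff)$. This is not a workable mechanism. A measure supported exactly on a low-dimensional subspace of $\reals^{2d+2}$ is singular against $\gamma$, so its relative entropy is $+\infty$; one has to smooth, and the smoothing must be charged to the risk. And there is no $\bSigma$-reweighting anywhere in the construction: the $\deff$-dependence does not come from the geometry of $\mu^\star$ at all. What the paper does (Proposition~\ref{prop:approx}, Lemma~\ref{lem:min_value_bound}, Corollary~\ref{cor:new_min_value_bound}, Lemma~\ref{lem:optimal_J_bound}) is take the Bach-type $\mu^\star$ supported on the $2(k+1)$-dimensional subspace with $\cregu(\mu^\star)\le \Delta^2/\tilde r_x^2$ (already $d$-free, since it is a pushforward of a measure on $\mathbb{S}^k$), convolve it with $\gamma=\mathcal{N}(0,(\lambda\beta)^{-1}\ident)$, and use $\cH(\mu^\star*\gamma\divmid\gamma)\le \tfrac{\lambda\beta}{2}\cregu(\mu^\star)$, which is also $d$-free. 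The smoothing perturbs the predictor and adds an empirical-risk error of order $\tfrac{C_\rho}{\sqrt{\lambda\beta}}\Esarg{\|\bxtilde\|}\asymp c_x/\sqrt{\lambda\beta}$; controlling this forces $\lambda\beta\gtrsim c_x^2$, i.e.\ $\beta\gtrsim c_x^2/(\tilde\lambda r_x^2)=\deff/\tilde\lambda$, which is precisely the stated choice $\beta=\tilde\Theta(\deff/\tilde\lambda)$. The entropy of the Gibbs minimizer is then bounded via $\beta^{-1}\cH(\mu^*_\beta\divmid\gamma)\le\tilde{\cF}_{\beta,\lambda}(\mu^\star*\gamma)\lesssim 1$, giving $\cH(\mu^*_\beta\divmid\gamma)=\tilde{\bigO}(\deff/\tilde\lambda)$, and the generalization bound $\iota\sqrt{\beta/n}$ then produces $n=\tilde{\bigO}(\deff)$. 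In short, $\deff$ enters through $c_x=\sqrt{\tr\bSigma}$ in the smoothing error and through $r_x$ in the choice $\lambda=\tilde\lambda r_x^2$; without this specific accounting, an ``anisotropic density'' construction would generically leave a $d$-dependent log-determinant term in the entropy and lose exactly the improvement the theorem is about.
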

The above theorem demonstrates that $(i)$ the effective dimension of Definition~\ref{def:eff_dim} controls the sample complexity, and $(ii)$ the LSI constant of Assumption~\ref{assump:uniform_LSI}
controls the computational complexity. To that end, we can employ the LSI estimate of Proposition~\ref{prop:LSI_Euclidean} to arrive at the following corollary.
\begin{corollary}\label{cor:learnability_Euclidean}
    In the setting of Theorem~\ref{thm:learnability_Euclidean}, using the LSI estimate of Proposition~\ref{prop:LSI_Euclidean}, with the number of samples, the number of neurons, and the number of iterations, respectively bounded by
    \begin{equation}
        n = \tilde{\mathcal{O}}(\deff), \quad m = \tilde{\mathcal{O}}(de^{\tilde{\mathcal{O}}(\deff)}), \quad l = \tilde{\mathcal{O}}(de^{\tilde{\mathcal{O}}(\deff)}),
    \end{equation}
    MFLA can achieve the excess risk bound \eqref{eq:thm-lsi2} with $\tilde{\lambda}^{-1},\kappa = \polylog(n)$.
\end{corollary}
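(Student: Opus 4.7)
The corollary follows from a direct substitution of the LSI estimate of Proposition~\ref{prop:LSI_Euclidean} into the quantitative bounds of Theorem~\ref{thm:learnability_Euclidean}, so my plan is primarily a bookkeeping exercise that tracks how each parameter scales with $d$, $\deff$, and $n$. The first step is to observe that Assumption~\ref{assump:uniform_LSI} is automatic here: Proposition~\ref{prop:LSI_Euclidean} provides an LSI constant for $\nu_\mu \propto \exp(-\beta \crerisk'[\mu])$ that depends only on $\beta$, $\lambda$, $\iota$, and $C_\rho$, uniformly in $\mu$, so the same constant bounds the entire MFLA trajectory. The Lipschitz hypothesis on $\rho$ needed by the proposition is supplied by the assumption $|\rho'| \lesssim 1$ in Theorem~\ref{thm:learnability_Euclidean}, giving $C_\rho = \mathcal{O}(1)$.

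Next, I would compute the exponent $4C_\rho \iota \beta$ appearing in the LSI bound under the parameter choices $\lambda = \tilde\lambda r_x^2$, $\beta = \tilde\Theta(\deff/\tilde\lambda)$, and $\iota = \Theta(\tilde r_x^2/(\tilde\lambda r_x^2))$. Since $\tilde r_x^2/r_x^2 = 1 + \mathcal{O}(\ln n)$ by Assumption~\ref{assump:Lip_g} and $\tilde\lambda^{-1} = \polylog(n)$, this yields
\begin{equation*}
\iota \beta \;=\; \tilde\Theta\!\left(\frac{\tilde r_x^2}{\tilde\lambda^2 r_x^2} \cdot \deff\right) \;=\; \tilde\Theta(\deff),
\end{equation*}
so Proposition~\ref{prop:LSI_Euclidean} delivers $C_\LSI \leq (\beta\lambda)^{-1}\exp(\tilde{\mathcal{O}}(\deff)) = \exp(\tilde{\mathcal{O}}(\deff))$, where the prefactor $1/(\beta\lambda) = \tilde{\mathcal{O}}(1/(\deff r_x^2))$ (using $r_x^2 \leq k\|\bSigma\| \lesssim 1$) is absorbed into the exponential.

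The remaining task is to substitute this bound into the width and iteration expressions \eqref{eq:thm-lsi} of Theorem~\ref{thm:learnability_Euclidean}. For the width, I would verify that the factor $\bar r_x^4 \kappa^2/(\beta\lambda)$ is polylogarithmic (since $\bar r_x^2 = \tilde r_x^2 \lor \|\bSigma\| = \polylog(n)$, $\kappa = \polylog(n)$, and $\beta\lambda$ is at worst polynomially small in $\deff$), and that $d/\beta + \bar r_x^2/\lambda = \tilde{\mathcal{O}}(d)$, since $\bar r_x^2/\lambda = \polylog(n)/\tilde\lambda = \polylog(n)$. Together with the $\exp(\tilde{\mathcal{O}}(\deff))$ factor from $C_\LSI$, this gives $m = \tilde{\mathcal{O}}(d e^{\tilde{\mathcal{O}}(\deff)})$. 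For the iteration count, the stepsize constraint $\eta^{-1} \lesssim C_\LSI \kappa^2 \bar r_x^4(d + \bar r_x^2/\lambda)$ combined with $l = \tilde{\mathcal{O}}(C_\LSI \beta/\eta)$ yields $l = \tilde{\mathcal{O}}(C_\LSI^2 \beta \kappa^2 \bar r_x^4 (d + \bar r_x^2/\lambda))$; the same polylog simplifications, plus $\beta = \tilde\Theta(\deff)$ being absorbed into $e^{\tilde{\mathcal{O}}(\deff)}$ as well, yield $l = \tilde{\mathcal{O}}(d e^{\tilde{\mathcal{O}}(\deff)})$. The sample complexity bound $n = \tilde{\mathcal{O}}(\deff)$ is unchanged from the theorem.

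The only subtle point — and the mildly nontrivial step in an otherwise mechanical plan — is making sure that the choice of $\iota$ and $\tilde\lambda$ keeps the product $C_\rho \iota \beta$ at the scale of $\deff$ times a polylog, so that after exponentiation the dimension dependence is exactly $e^{\tilde{\mathcal{O}}(\deff)}$ rather than something larger like $e^{\tilde{\mathcal{O}}(d)}$. All other factors are either $d$ itself (contributing the prefactor) or polylog in $n$ and $d$, which the $\tilde{\mathcal{O}}$ notation suppresses; no further approximation arguments are needed beyond those already encoded in Theorem~\ref{thm:learnability_Euclidean} and Proposition~\ref{prop:LSI_Euclidean}.
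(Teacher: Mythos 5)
Your proposal matches the paper's proof in both structure and substance: the paper likewise substitutes the LSI estimate $C_\LSI \leq (\beta\lambda)^{-1}\exp(4C_\rho\iota\beta)$ directly into the bounds \eqref{eq:thm-lsi}, computes $\iota\beta = \tilde\Theta(\deff)$ under the stated parameter choices so that $C_\LSI = \exp(\tilde{\mathcal{O}}(\deff))$, and then simplifies the resulting expressions for $m$ and $l$ using $\bar{r}_x^2, \kappa = \polylog(n)$, $\beta\lambda \asymp c_x^2$, and $\deff = c_x^2/r_x^2$. The only trivial slip is your intermediate bound $r_x^2 \leq k\|\bSigma\|$, which should be $r_x^2 = \|\bSigma^{1/2}\bU^\top\|_{\mathrm{F}}^2 = \tfrac{1}{k}\sum_i \bu_i^\top\bSigma\bu_i \leq \|\bSigma\|$ given the $1/\sqrt{k}$ normalization of $\bU$'s rows — but the conclusion $r_x^2 \lesssim 1$ is unaffected, and the rest of the bookkeeping is correct.
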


We observe that the above corollary demonstrates a certain adaptivity to the effective low-dimensional structure, both in terms of \emph{statistical} and \emph{computational} complexity. Remarkably, this property of MFLA emerges  without explicitly encoding any information about the covariance structure in the algorithm. 
In contrast, consider ``fixed-grid'' methods for optimization over the space of measures $\mathcal{P}(\reals^{2d+2})$ (see~\cite{chizat2022convergence} and references therein), in which the algorithm fixes the first-layer of a two-layer network's representation and only trains the second-layer, solving a convex problem similar to the random features regression~\cite{rahimi2007random}.
However, fixed-grid methods do not show any type of adaptivity to low-dimensions, and in particular their computational complexity always scales exponentially with the ambient dimension $d$, unless information about the covariance structure is explicitly used when specifying the fixed representation.

Table~\ref{tab:compare} compares recent works in various aspects.
\cite{bach2017breaking} requires $d^{\frac{k+3}{2}}$ sample complexity 
for learning general $k$-index models, which is worse than the complexity $\deff$ of Theorem~\ref{thm:learnability_Euclidean} even in the worst case $\deff = d$. The improvement in our bound is due to a refined control over $\norm{\bU\bx}$;
while~\cite{bach2017breaking} assumes this quantity  scales with $\sqrt{d}$, it can be verified that for centered $\bx$, its expectation is independent of $d$. Further, \cite{bach2017breaking} does not provide a quantitative analysis of the optimization complexity, and it is not clear if their algorithm is adaptive to the covariance structure.
\cite{nitanda2024improved} studied learning $k$-sparse parities, a subclass of multi-index models we considered, for which it is considerably simpler to construct optimal neural networks with bounded activation. While the effective dimension (and the resulting sample complexity) of~\cite{nitanda2024improved} is not explicitly scale-invariant, we derive a scale-invariant translation of their bound in Appendix~\ref{app:compare_deff}, 
and show that it is always lower bounded by our effective dimension, especially when $\bSigma$ is nearly rank-deficient. 

\begin{table}
\small
    \centering
    \begin{tabular}{c c c c c c}
        \toprule
       \bf Work  &\bf Class of Targets & \bf Sample Complexity & \bf Input & \bf Covariance & \bf \!\!\parbox{2.0cm}{\footnotesize ${\deff}$-adaptive\\ \ \ \phantom{x}Compute}\!\!\!\!\\[0.12em] \hline\hline
        \noalign{\vskip 0.12em}
        \cite{telgarsky2023feature} & 2-parity & $d$ & hypercube & isotropic & \xmark\\[0.12em] \hline
        \noalign{\vskip 0.12em}
        \cite{suzuki2023feature} & $k$-parity & $d$ & hypercube & isotropic & \xmark\\[0.12em] \hline
        \noalign{\vskip 0.12em}
        \cite{nitanda2024improved} & $k$-parity & \!\!$\tr(\bSigma)\sum_{i=1}^k\Vert\bSigma^{1/2}\bu_i\Vert^{-2}$\!\! & parallelotope & full-rank & \cmark\\[0.12em] \hline
        \noalign{\vskip 0.12em}
        \cite{bach2017breaking} & multi-index & $d^{\frac{k+3}{2}}$ & bounded & general & \xmark\\[0.12em]
        \hline
        \noalign{\vskip 0.12em}
\textbf{Theorem~\ref{thm:learnability_Euclidean}} & multi-index & $\tr(\bSigma)/\Vert\bSigma^{1/2}\bU^\top\Vert_{\mathrm{F}}^2$ & subGaussian & general & \cmark\\[0.12em]
        \bottomrule
    \end{tabular}
    \caption{\small Learning guarantees of neural networks with exponential compute (we focus on the dimension dependence). Our Theorem~\ref{thm:learnability_Euclidean} improves upon prior bounds, with a potentially significant gap depending on the problem setup.}
    \label{tab:compare}
\end{table}

\begin{remark}
We make the following remarks on the complexity of learning multi-index models.
\begin{itemize}[leftmargin=*,topsep=0.3mm,itemsep=0.3mm]
\item Even though the complexity in~Corollary~\ref{cor:learnability_Euclidean} 
scales exponentially with $\deff$, in Section~\ref{sec:deff} we outline problem settings where $\deff = \polylog(d)$, under which it
is possible to achieve quasipolynomial runtime for the MFLA.
That said, the exponential dependence in $\deff$ is unavoidable in general in LSI-based analysis~\cite{menz2014poincare}, and is consequently present in the mean-field literature~\cite{chizat2022mean,suzuki2023convergence,suzuki2023feature}. 
\item In the isotropic setting $\bSigma=\mathbf{I}_d$, recent works have shown that certain variants of SGD can learn single-index polynomials with almost linear sample complexity \cite{dandi2024benefits,lee2024neural,arnaboldi2024repetita}, which matches our sample complexity without needing exponential compute. However, these analyses crucially relied on the \textit{polynomial} link function, which has \textit{generative exponent} at most $2$ \cite{damian2024computational} and is SQ-learnable with $n = \tilde{O}_d(d)$ samples \cite{mondelli2018fundamental,barbier2019optimal,chen2020learning}. 
In contrast, our assumption on the link function allows for arbitrarily large generative exponent, and hence the computational lower bound in \cite{damian2024computational} implies that achieving learnability in the $n\asymp d$ scaling requires exponential compute for statistical query learners. 
\end{itemize}    
\end{remark}

\subsection{Utilizing the Effective Dimension}\label{sec:deff}
To better demonstrate the impact of effective dimension $\deff$, we consider two covariance models.

\paragraph{Spiked covariance.}
\begin{figure}
    \centering
    \begin{subfigure}[b]{0.48\textwidth}
        \centering
        \includegraphics[width=0.77\textwidth,keepaspectratio]{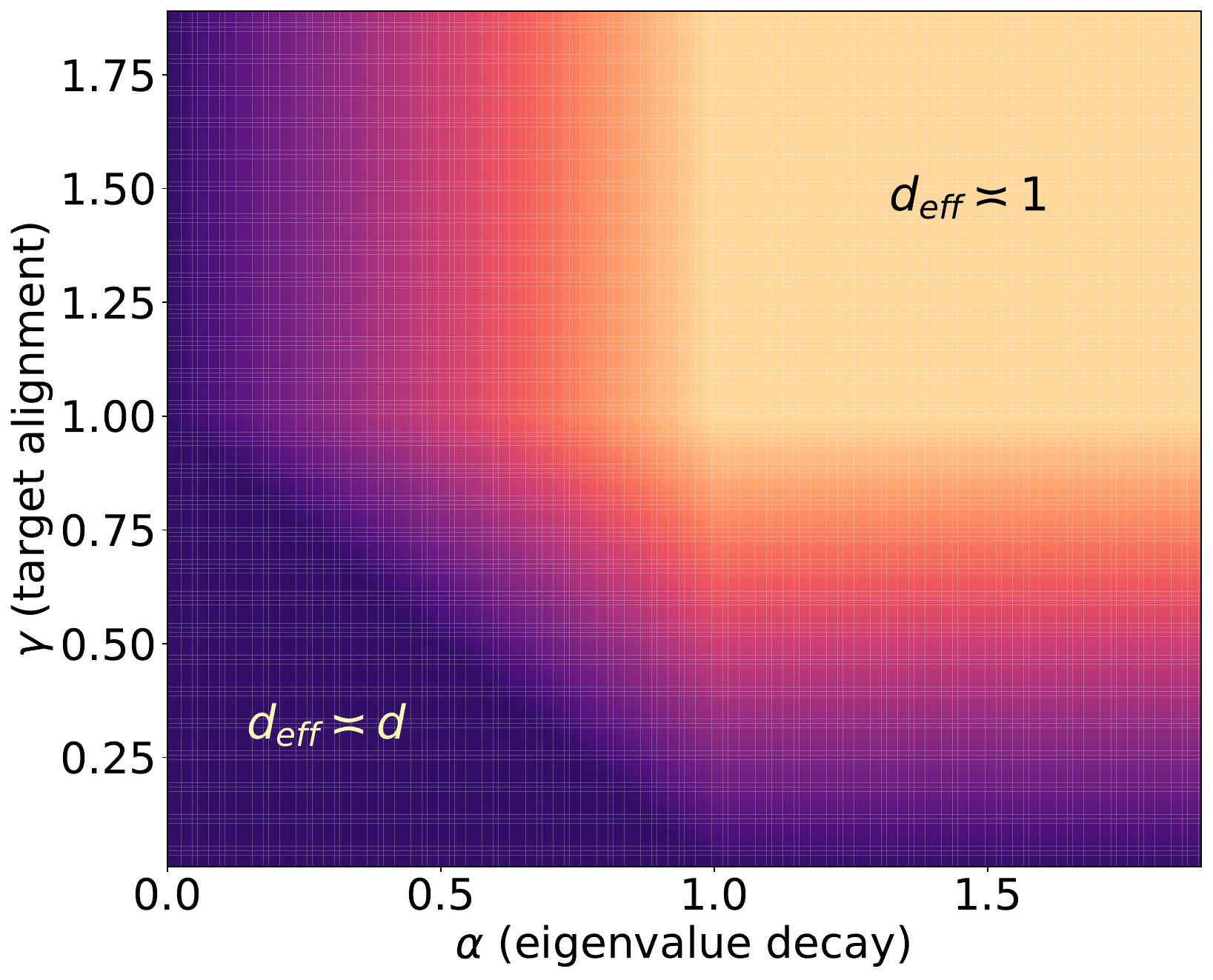}
        \caption{}
        \label{fig:deff}
    \end{subfigure}
    \hfill
    \begin{subfigure}[b]{0.48\textwidth}
        \centering
        \includegraphics[width=0.87\textwidth]{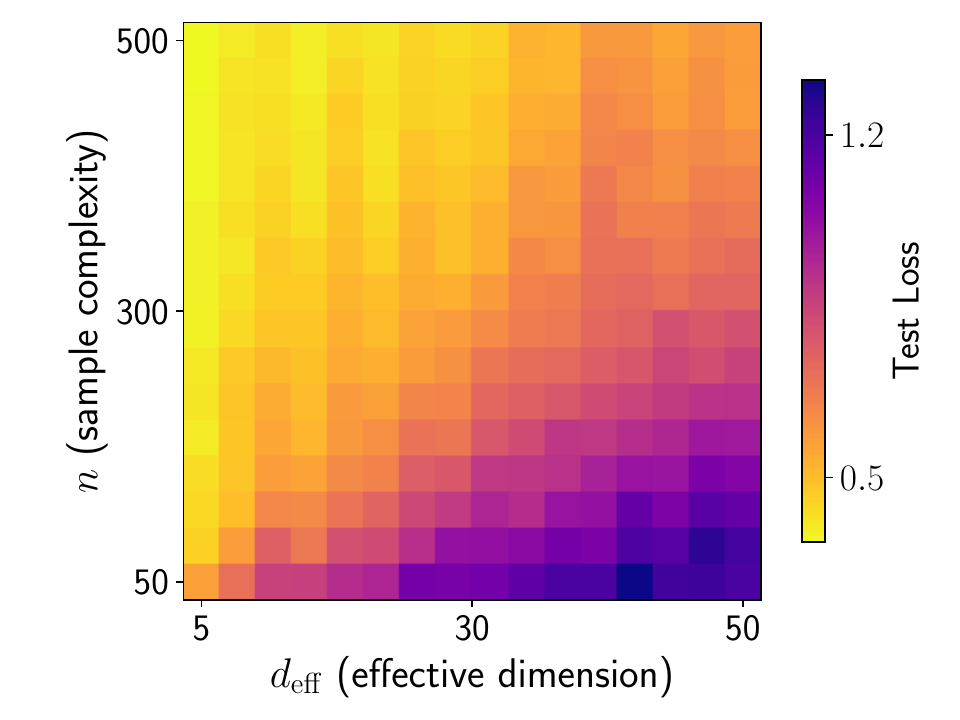}
        \caption{}
        \label{fig:n_vs_deff}
    \end{subfigure}
    \caption{(a) $\deff$ according to Corollary \ref{cor:power-law}. (b) Test loss from MFLA, details in Appendix~\ref{app:experiment}.}
\end{figure}
We consider the spiked covariance model of~\cite{mousavi2023gradient}. Namely, given a spike direction $\btheta \in \mathbb{S}^{d-1}$, suppose the covariance and the target directions satisfy
\begin{equation}\label{eq:cov_spiked}
    \bSigma = \frac{\ident_d + \alpha \btheta\btheta^\top}{1 + \alpha}, \quad \alpha \asymp d^{\gamma_2}, \quad \norm{\bU\btheta} \asymp d^{-\gamma_1}, \quad \gamma_2 \in [0,1],\quad \gamma_1 \in [0,1/2].
\end{equation}
Note that in high-dimensional settings, $\gamma_1 = 1/2$ corresponds to a regime where $\btheta$ is sampled uniformly over $\mathbb{S}^{d-1}$, whereas $\gamma_1 = 0$ corresponds to the case where $\btheta$ has a strong (perfect) correlation with $\bU$. We only consider $\gamma_2 \leq 1$ since $\gamma_2 > 1$ corresponds to a setting where the input is effectively one-dimensional. 
In this setting, effective dimension depends on $\gamma_1$ and $\gamma_2$.
\begin{corollary}\label{cor:spike}
    Under the spiked covariance model~\eqref{eq:cov_spiked}, we have $\deff \asymp d^{1 - \left\{(\gamma_2 - 2\gamma_1) \lor 0\right\}}$. 
\end{corollary}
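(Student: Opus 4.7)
The plan is to obtain the claim by a direct computation of $c_x^2 = \tr(\bSigma)$ and $r_x^2 = \Vert\bSigma^{1/2}\bU^\top\Vert_{\mathrm{F}}^2 = \tr(\bU\bSigma\bU^\top)$ from the structure of the spiked covariance, followed by a short case analysis to identify which of the two terms inside $r_x^2$ dominates.

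First I would compute $c_x^2$. Since $\tr(\btheta\btheta^\top)=1$, we have
\begin{equation*}
c_x^2 = \tr(\bSigma) = \frac{d + \alpha}{1+\alpha}.
\end{equation*}
With $\alpha\asymp d^{\gamma_2}$ and $\gamma_2\in[0,1]$, one has $\alpha\lesssim d$, so the numerator is $\asymp d$; the denominator is $\asymp 1$ if $\gamma_2=0$ and $\asymp d^{\gamma_2}$ otherwise. Either way, $c_x^2\asymp d^{1-\gamma_2}$.

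Next I would compute $r_x^2$. Using orthonormality of $\bu_1,\hdots,\bu_k$, we have $\bU\bU^\top = k^{-1}\ident_k$, so
\begin{equation*}
r_x^2 = \tr\bigl(\bU\bSigma\bU^\top\bigr) = \frac{1}{1+\alpha}\bigl(\tr(\bU\bU^\top) + \alpha\,\Vert\bU\btheta\Vert^2\bigr) = \frac{1 + \alpha\Vert\bU\btheta\Vert^2}{1+\alpha} \asymp \frac{1+d^{\gamma_2-2\gamma_1}}{1+d^{\gamma_2}}.
\end{equation*}
The case analysis is then immediate. If $\gamma_2\leq 2\gamma_1$, the numerator is $\asymp 1$ and $r_x^2\asymp d^{-\gamma_2}$; if $\gamma_2 > 2\gamma_1$, the numerator is $\asymp d^{\gamma_2-2\gamma_1}$ and $r_x^2\asymp d^{-2\gamma_1}$. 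Combining these,
\begin{equation*}
r_x^2 \asymp d^{-\min(\gamma_2,\, 2\gamma_1)}.
\end{equation*}

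Finally, taking the ratio,
\begin{equation*}
\deff = \frac{c_x^2}{r_x^2} \asymp d^{\,1-\gamma_2 + \min(\gamma_2,\,2\gamma_1)} = d^{\,1 - \max(\gamma_2-2\gamma_1,\,0)} = d^{\,1 - \{(\gamma_2-2\gamma_1)\lor 0\}},
\end{equation*}
as claimed. There is no real obstacle here beyond bookkeeping; the only subtlety is ensuring the case split at $\gamma_2 = 2\gamma_1$ is handled uniformly, and that the boundary cases $\gamma_2\in\{0,1\}$ and $\gamma_1\in\{0,1/2\}$ are covered — all of which follow from the $\asymp$ estimates above without issue.
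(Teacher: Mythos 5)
Your computation is correct and is the natural, direct approach: compute $c_x^2 = \tr(\bSigma) = (d+\alpha)/(1+\alpha) \asymp d^{1-\gamma_2}$, compute $r_x^2 = \tr(\bU\bSigma\bU^\top) = (1+\alpha\Vert\bU\btheta\Vert^2)/(1+\alpha) \asymp d^{-\min(\gamma_2,2\gamma_1)}$ using $\bU\bU^\top = k^{-1}\ident_k$, and take the ratio; the identity $1-\gamma_2+\min(\gamma_2,2\gamma_1) = 1-\{(\gamma_2-2\gamma_1)\lor 0\}$ closes the argument. The paper states Corollary~\ref{cor:spike} without an explicit proof (it is treated as a direct consequence of Definition~\ref{def:eff_dim} and~\eqref{eq:cov_spiked}), and your derivation is exactly the expected one.
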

To get improvements over the isotropic effective dimension $d$, either the spike magnitude $\alpha$ or the spike-target alignment $\norm{\bU\btheta}$ needs to be sufficiently large so that $\gamma_2 > 2\gamma_1$. Recall that the effective dimension in the covariance estimation problem is $\tr(\bSigma)/\norm{\bSigma} \asymp d^{1-\gamma_2}$. Therefore, $\deff$ in Corollary~\ref{cor:spike} only matches its unsupervised counterpart when $\gamma_1 = 0$, i.e.\ $\btheta$ has a significant correlation with the target directions $\bU$. As $\gamma_2 \to 1$ and $\gamma_1 \to 0$, the effective dimension will be smaller than $\polylog(d)$, leading to a computational complexity that is quasipolynomial in $d$.

\paragraph{Scaling laws under power-law spectra.} Next,
we consider a more general power-law decay for the eigenspectrum. Specifically, suppose $\bSigma = \sum_{i=1}^d\lambda_i \btheta_i\btheta_i^\top$ is the spectral decomposition of $\bSigma$, and
\begin{equation}\label{eq:power_law}
    \frac{\lambda_i}{\lambda_1} \asymp i^{-\alpha}, \quad \frac{\norm{\bU\btheta_i}^2}{\norm{\bU\btheta_1}^2} \asymp i^{-\gamma}, \quad \text{ for }\quad 1 \leq i\leq d,
\end{equation}
for some absolute constants $\alpha,\gamma > 0$. Notice that $\sum_{i=1}^d\norm{\bU\btheta_i}^2 = \norm{\bU}_{\mathrm{F}}^2 = 1$. 
The following corollary characterizes $\deff$ in terms of the parameters $\alpha$ and $\gamma$.
\begin{corollary}
    Under the power-law eigenspectrum for the covariance matrix~\eqref{eq:power_law}, we have
    \begin{align}
        \deff \asymp \begin{cases}
        d^{1 \land (2 - \alpha - \gamma)} & \quad \alpha < 1, \gamma < 1\\
        d^{1-\alpha} & \quad \alpha < 1, \gamma \geq 1\\
        d^{(1-\gamma) \lor 0} & \quad \alpha \geq 1
    \end{cases}, 
    \label{eq:scaling-law}
    \end{align}
    where $\asymp$ above hides $\polylog(d)$ dependencies. 
    \label{cor:power-law}
\end{corollary}

The scaling of $\deff$ (hence the sample complexity) is illustrated in Figure~\ref{fig:deff}. 
We remark that the power-law assumption \eqref{eq:power_law} is parallel to the \textit{source condition} and \textit{capacity condition} in the nonparametric regression literature \cite{cucker2002mathematical,caponnetto2007optimal}, where the capacity condition measures the decay of feature eigenvalues, and the source condition measures the alignment between the target and feature eigenvectors. \\
Also, based on \eqref{eq:scaling-law}, the width and the number of iterations in Corollary~\ref{cor:learnability_Euclidean} both become quasipolynomial in $d$ when $\alpha,\gamma \geq 1$. This corresponds to the setting where $\bSigma$ is approximately low-rank with most of its eigenspectrum concentrated in the first few eigenvalues, and the corresponding eigenvectors are aligned with the row space of $\bU$.

\section{Polynomial Time Convergence in the Riemannian Setting}\label{sec:poly_LSI}
The strong statistical learning guarantees in the previous section come at a computational price; MFLA may need $\exp(\deff)$ many iterations and neurons to converge. This complexity arises since in the worst case, the LSI constant that governs the convergence of MFLD will be exponential in the inverse temperature parameter $\beta$~\cite{menz2014poincare}. In this section, we provide first steps towards achieving polynomial-time complexity for MFLD. In particular, we show that if we constrain the weight space to be a compact Riemannian manifold with a uniformly lower bounded Ricci curvature such as the hypersphere $\mathbb{S}^{d-1}$, we can establish a uniform LSI constant with polynomial dimension dependence, while the same set of assumptions in the Euclidean setting results in exponential dimension dependence. Notice that due to the manifold constraint on the weights, we no longer require $\ell_2$-regularization, and simply consider the objective $\cF_\beta(\mu) = \cerisk(\mu) + \beta^{-1}\cH(\mu\divmid\tau)$.

Let $(\mathcal{W},\mathfrak{g})$ be a $(d-1)$-dimensional compact Riemannian manifold with metric tensor $\mathfrak{g}$. We denote the Ricci curvature of $\mathcal{W}$ with $\mathrm{Ric}_{\mathfrak{g}}$. We recall the neural network
$
    \hat{y}(\bx;\mu) = \int \Psi(\bx;\bw)\dee\mu(\bw)
$
where, in this case, we choose a $C^2$-smooth activation $\Psi(\bx;\cdot) : \mathcal{W} \to \reals$ defined on the manifold. 
We consider the following model example to demonstrate our results.
\begin{example}\label{example:model}
$\mathcal{W}$ is the hypersphere $\mathbb{S}^{d-1}$ equipped with its canonical metric tensor, and the activation is $\Psi(\bx;\bw) = \phi(\binner{\bw}{\bx})$ for some smooth $\phi : \reals \to \reals$. Suppose $\abs{\phi'}, \abs{\phi''} \lesssim 1$, and the distribution of $\bx$ satisfies the conditions of Assumption~\ref{assump:data}.
\end{example}
The following assumption plays an important role in the analysis.
\begin{assumption}\label{assump:compact_entropy_bound}
 $(\mathcal{W},\mathfrak{g})$  satisfies the curvature-dimension condition
    $\mathrm{Ric}_{\mathfrak{g}} \succcurlyeq \varrho  d\mathfrak{g}$
    for an absolute constant $\varrho > 0$.
    Further,
    there exists some $\bar\mu \in \mathcal{P}(\mathcal{W})$ such that $\cerisk(\bar\mu) \leq \bar\varepsilon$ and $\cH(\bar\mu \divmid \tau) \leq \bar\Delta$ for some constants $ \bar\varepsilon,\bar\Delta$, where $\tau$ is the uniform distribution on $\mathcal{W}$. 
    
\end{assumption}
For the unit sphere $\mathbb{S}^{d-1}$, we have $\mathrm{Ric}_{\mathfrak{g}} \succcurlyeq (d-2)\mathfrak{g}$;
thus, the curvature-dimension condition is satisfied for sufficiently large $d$.
Moreover, if there exists some $\mu$ with $\cerisk(\mu) \leq \bar\varepsilon$ 
(e.g. the minimizer of $\cerisk$) for which $\cH(\mu \divmid \tau) = \infty$, one can construct $\bar\mu$ such that $\cerisk(\bar\mu) \leq \tilde{\mathcal{O}}(\bar\varepsilon)$ and $\cH(\bar\mu \divmid \tau) \leq \tilde{\mathcal{O}}(d)$, by smoothing $\mu$ via convolution with box kernels (see~\cite[Theorem 4.1]{chizat2022convergence} and its proof). Therefore in the worst-case, we have $\bar\Delta = \tilde{\Theta}(d)$. However, under a reasonable model assumption, %
we can verify Assumption~\ref{assump:compact_entropy_bound} with $\bar\Delta = o(d)$, which is demonstrated in the below proposition.
\begin{proposition}\label{prop:Riemannian_ent_bound}
    Let $y = \int \Psi(\bx;\cdot)\dee\mu^*$ for some $\mu^* \in \mathcal{P}(\mathcal{W})$ such that $\dee\mu^* \propto e^{f}\dee\tau$ for $f : \mathcal{W} \to \reals$. Then, $\cerisk(\mu^*) = 0$ and $\cH(\mu^*\divmid \tau) \leq \int f(\dee\mu^* - \dee\tau) \leq \mathrm{osc}(f)$ where $\osc(f) \coloneqq \sup f - \inf f$.
\end{proposition}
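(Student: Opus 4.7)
My plan is to treat the two assertions separately, since they are essentially independent. The first, $\cerisk(\mu^*) = 0$, requires no real calculation: by the realizability hypothesis, $\hat y(\bx;\mu^*) = \int \Psi(\bx;\cdot)\dee\mu^* = y$ holds for every input $\bx$, so on each sample point the loss evaluates to $\rho(\hat y(\bx^{(i)};\mu^*) - y^{(i)}) = \rho(0) = 0$, and $\cerisk(\mu^*)$ is an average of zeros. I would simply record this observation.

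For the entropy bound, I would begin by writing the density explicitly. Setting $Z \coloneqq \int_\mathcal{W} e^{f}\dee\tau$, one has $\dee\mu^*/\dee\tau = e^{f}/Z$, so the definition of relative entropy collapses to
\[
\cH(\mu^* \divmid \tau) \;=\; \int_\mathcal{W} (f - \log Z)\dee\mu^* \;=\; \int f \dee\mu^* - \log Z.
\]
The only genuine ingredient needed now is a lower bound on $\log Z$, and I would obtain it by applying Jensen's inequality to the concave logarithm against the probability measure $\tau$:
\[
\log Z \;=\; \log \int_\mathcal{W} e^{f}\dee\tau \;\geq\; \int_\mathcal{W} f \dee\tau.
\]
Substituting back immediately yields the middle inequality $\cH(\mu^*\divmid\tau) \leq \int f(\dee\mu^* - \dee\tau)$.

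The outer inequality is then a one-line observation: since $\mu^*$ and $\tau$ are both probability measures, $\int f\dee\mu^* \leq \sup f$ and $\int f\dee\tau \geq \inf f$, whence $\int f(\dee\mu^* - \dee\tau) \leq \sup f - \inf f = \mathrm{osc}(f)$. I do not foresee any substantive obstacle here; the entire proposition reduces to the explicit Gibbs form of $\mu^*$ and a single application of Jensen's inequality, and is a purely deterministic fact about log-linear tilts of the reference measure $\tau$ on $\mathcal{W}$, with no dependence on the geometry of $(\mathcal{W},\mathfrak{g})$ beyond $\tau$ being a probability measure.
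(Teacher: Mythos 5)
Your argument is correct and matches the paper's proof essentially verbatim: both obtain $\cH(\mu^*\divmid\tau) = \int f\dee\mu^* - \ln\int e^f\dee\tau$ from the Gibbs form of $\mu^*$ and then apply Jensen's inequality to the concave $\ln$ against $\tau$ to bound $-\ln\int e^f\dee\tau \leq -\int f\dee\tau$. The only difference is that you also spell out the trivial outer bound $\int f(\dee\mu^* - \dee\tau) \leq \osc(f)$ and the realizability argument for $\cerisk(\mu^*)=0$ (using $\rho(0)=0$), which the paper leaves implicit.
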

In the above result, the constants in 
Assumption~\ref{assump:compact_entropy_bound} can be identified as
$\bar\varepsilon=0$ and $\bar\Delta = \osc(f)$ 
which is
the oscillation of the log-density of $\mu^*$. Consequently, if the neurons in the teacher model are sufficiently present in all directions of the weight space, we get $\osc(f)=o(d)$; consider e.g. the extreme case $\mu^*=\tau$ which implies $f$ is constant.
Interestingly, in the case of $k$-multi-index models, this condition implies that $k$ grows with dimension, ruling out the case $k=\mathcal{O}(1)$. We include a natural example of target functions of interest in the form of Proposition~\ref{prop:Riemannian_ent_bound} in Appendix~\ref{app:Riemannian_example}.

For MFLD to converge to a minimizer of $\cerisk$, the parameter $\beta$ needs to satisfy $\beta \geq \tilde{\Omega}(\bar\Delta)$ to ensure the entropic regularization is not the dominant term in the objective $\cF_\beta$. In the Euclidean setting, this implies an LSI constant of order $\exp(\tilde{\mathcal{O}}(\bar\Delta))$,
resulting in a computational complexity
$\exp(\tilde{\mathcal{O}}(\bar\Delta))$ as shown in Theorem~\ref{thm:learnability_Euclidean}.
In what follows, we demonstrate via the Bakry-\'Emery theory~\cite{bakry1985diffusions} that in the Riemmanian setting, under a uniform lower bound on the Ricci curvature, the LSI constant can be independent of $\bar\Delta$ and $d$ as long as we have $\bar\Delta = o(d)$.

\begin{proposition}\label{prop:compact_poly_LSI}
    Suppose Assumption~\ref{assump:compact_entropy_bound} holds and the loss $\rho$ is $C_\rho$-Lipschitz. Then, for all $\mu \in \mathcal{P}(\mathcal{W})$ and $\beta < {\varrho d}/{C_\rho K}$, the probability measure $\nu_{\mu} \propto \exp(-\beta \cerisk'[\mu])$ satisfies the LSI with constant
    \begin{equation}
        C_\LSI \leq (\varrho d - \beta C_\rho K)^{-1},
    \end{equation}
    where 
    $K = \sup_{\norm{\bv}_{\mathfrak{g}} = 1}\Esarg{\abs{\binner{\bv}{\nabla^2_{\bw} \Psi(\bx;\bw)\bv}}}$,
    and $\Esarg{\cdot}$ denotes the expectation under empirical data distribution over $n$ samples.
\end{proposition}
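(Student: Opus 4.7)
The plan is to recognize $\nu_\mu$ as a Gibbs measure on the compact Riemannian manifold $(\mathcal{W},\mathfrak{g})$ with potential $V \coloneqq \beta \cerisk'[\mu]$ relative to the uniform measure $\tau$, and then invoke the Bakry--\'Emery curvature-dimension criterion: if the generalized Ricci tensor $\mathrm{Ric}_\mathfrak{g} + \nabla^2_\bw V$ is uniformly bounded below by $\lambda\,\mathfrak{g}$ with $\lambda > 0$, then $\nu_\mu$ satisfies LSI with constant $C_\LSI \leq 1/\lambda$. So the entire task reduces to producing a uniform lower bound on $\nabla^2_\bw\cerisk'[\mu]$ in the operator sense and combining it with the given Ricci bound.

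I would first compute the Riemannian Hessian of the first variation. Using the explicit expression
\baligns
\cerisk'[\mu](\bw) = \tfrac{1}{n}\sum_{i=1}^n \rho'\bigl(\hat y(\bx^{(i)};\mu) - y^{(i)}\bigr)\,\Psi(\bx^{(i)};\bw),
\ealigns
one can differentiate twice in $\bw$ only (the data-dependent coefficients do not depend on $\bw$) to obtain
\baligns
\nabla^2_\bw \cerisk'[\mu](\bw) = \tfrac{1}{n}\sum_{i=1}^n \rho'\bigl(\hat y(\bx^{(i)};\mu) - y^{(i)}\bigr)\,\nabla^2_\bw \Psi(\bx^{(i)};\bw).
\ealigns
For any unit tangent vector $\bv \in T_\bw\mathcal{W}$ with $\|\bv\|_\mathfrak{g}=1$, the $C_\rho$-Lipschitz assumption on $\rho$ yields $|\rho'|\leq C_\rho$, and taking absolute values inside the sum gives
\baligns
\bigl|\langle \bv, \nabla^2_\bw \cerisk'[\mu](\bw)\bv\rangle\bigr| \leq C_\rho \cdot \tfrac{1}{n}\sum_{i=1}^n \bigl|\langle \bv, \nabla^2_\bw \Psi(\bx^{(i)};\bw)\bv\rangle\bigr| \leq C_\rho K,
\ealigns
by definition of $K$. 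Consequently $\nabla^2_\bw \cerisk'[\mu] \succcurlyeq -C_\rho K\,\mathfrak{g}$ uniformly in $\bw$, and therefore $\nabla^2_\bw V = \beta\nabla^2_\bw \cerisk'[\mu] \succcurlyeq -\beta C_\rho K\,\mathfrak{g}$.

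Combining with the assumed curvature-dimension condition $\mathrm{Ric}_\mathfrak{g}\succcurlyeq \varrho d\,\mathfrak{g}$ yields
\baligns
\mathrm{Ric}_\mathfrak{g} + \nabla^2_\bw V \;\succcurlyeq\; (\varrho d - \beta C_\rho K)\,\mathfrak{g},
\ealigns
which is strictly positive exactly when $\beta < \varrho d/(C_\rho K)$. Invoking Bakry--\'Emery on $(\mathcal{W},\mathfrak{g})$ then delivers the LSI with constant $(\varrho d - \beta C_\rho K)^{-1}$, completing the argument. The only part requiring care is that $\nabla^2_\bw$ denotes the Riemannian Hessian (not a coordinate one), and that the estimate of $K$ passes cleanly through the sign-indefinite Hessians via the triangle inequality with $|\rho'|$; this is the one place where the Lipschitzness of $\rho$ is genuinely used and is the only mildly subtle step in what is otherwise a direct curvature computation.
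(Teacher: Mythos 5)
Your proof is correct and mirrors the paper's argument essentially step for step: both invoke the Bakry--\'Emery curvature-dimension criterion, differentiate the explicit form of the first variation twice in $\bw$ (noting the data-dependent coefficients do not depend on $\bw$), use $|\rho'|\le C_\rho$ together with the definition of $K$ to lower-bound the Hessian by $-C_\rho K\,\mathfrak{g}$, and combine with $\mathrm{Ric}_\mathfrak{g}\succcurlyeq\varrho d\,\mathfrak{g}$. The paper phrases the Hessian bound as a statement about $\lambda_{\min}$ rather than a semidefinite inequality, but that is cosmetic.
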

\begin{remark}
    In the setting of Example~\ref{example:model} with $n \geq \tilde{\Omega}(\tr(\bSigma)/\norm{\bSigma})$, we have $K \lesssim \norm{\bSigma}$ with probability at least $1-\mathcal{O}(n^{-q})$ for some constant $q > 0$.
    Consequently, the LSI constant is independent of $d$.
\end{remark}

We can now present the following global convergence guarantee to the minimizer of $\cprisk$ for large $d$.

\begin{theorem}\label{thm:Riemannian_MFLD_convergence}
    Suppose Assumption~\ref{assump:compact_entropy_bound} holds, and let $K$ be as in Proposition~\ref{prop:compact_poly_LSI}. Let $(\mu_t)_{t \geq 0}$ denote the law of the MFLD. For any $\varepsilon > 0$, let $\beta = {\bar\Delta}/{\varepsilon}$ and $d \geq {2C_\rho K\bar\Delta}/{\varrho\varepsilon}$. Then, we have
    \begin{equation}
        \cerisk(\mu_T) \lesssim \bar\varepsilon + \varepsilon, \quad \text{whenever } \quad T \geq \frac{\bar{\Delta}}{\varepsilon\varrho d}\ln\left(\frac{\cF_\beta(\mu_0)}{\varepsilon}\right).
    \end{equation}
    Moreover, in the setting of Example~\ref{example:model} and for a $1$-Lipschitz loss function, if we have
    $d \gtrsim {\bar\Delta}/{\varrho\varepsilon}$
    and 
    $n \geq \Omega(\bar\Delta(1 + \bar\varepsilon/\varepsilon)/\varepsilon^{2})\lor \tilde{\Omega}(\tr(\bSigma)/\norm{\bSigma})\vee \tilde{\Omega}(1/\varepsilon^4)$, then
    \begin{equation}
        \cprisk(\mu_T) \lesssim \bar\varepsilon + \varepsilon, \quad \text{whenever } \quad T \geq \frac{\bar{\Delta}}{\varepsilon\varrho d}\ln\left(\frac{\cF_\beta(\mu_0)}{\varepsilon}\right),
    \end{equation}
    with probability at least $1 - \mathcal{O}(n^{-q})$ over the randomness of data, for some constant $q > 0$.
\end{theorem}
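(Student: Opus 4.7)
The plan is to proceed in two stages: first establish the empirical risk bound via the uniform LSI from Proposition~\ref{prop:compact_poly_LSI} combined with the exponential convergence~\eqref{eq:MFLD_convergence}, then transfer to population risk via a uniform concentration argument that also controls the curvature quantity $K$ from data.

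\textbf{Step 1: Empirical risk convergence.} With $\beta = \bar\Delta/\varepsilon$ and $d \geq 2 C_\rho K \bar\Delta/(\varrho \varepsilon)$, we have $\beta C_\rho K \leq \varrho d/2$, so Proposition~\ref{prop:compact_poly_LSI} yields a uniform LSI constant $C_\LSI \leq 2/(\varrho d)$ along the MFLD trajectory. Plugging this into~\eqref{eq:MFLD_convergence} gives $\cF_\beta(\mu_t) - \cF_\beta(\mu_\beta^*) \leq e^{-\varepsilon \varrho d\, t/\bar\Delta}(\cF_\beta(\mu_0)-\cF_\beta(\mu_\beta^*))$. For $T$ as in the theorem statement, the exponential factor is at most $\varepsilon/\cF_\beta(\mu_0)$, yielding $\cF_\beta(\mu_T) - \cF_\beta(\mu_\beta^*) \leq \varepsilon$.

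\textbf{Step 2: Bounding the minimum free energy.} Since $\mu_\beta^*$ is the minimizer of $\cF_\beta$, Assumption~\ref{assump:compact_entropy_bound} gives $\cF_\beta(\mu_\beta^*) \leq \cF_\beta(\bar\mu) = \cerisk(\bar\mu) + \beta^{-1}\cH(\bar\mu \divmid \tau) \leq \bar\varepsilon + \bar\Delta/\beta = \bar\varepsilon + \varepsilon$. Since relative entropy is nonnegative, $\cerisk(\mu_T) \leq \cF_\beta(\mu_T) \leq \cF_\beta(\mu_\beta^*) + \varepsilon \leq \bar\varepsilon + 2\varepsilon$, which proves the first claim.

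\textbf{Step 3: Transfer to population risk.} In the setting of Example~\ref{example:model}, I would bound $\cprisk(\mu_T) - \cerisk(\mu_T)$ by a uniform deviation argument over measures in $\mathcal{P}(\mathbb{S}^{d-1})$, using (i) boundedness of $|\Psi|$ on the sphere from the smoothness assumptions together with subGaussian norm of $\bx$, and (ii) that the loss is $1$-Lipschitz. A symmetrization plus Rademacher complexity bound for the class $\{\bx \mapsto \int \Psi(\bx;\bw)\,\mathrm{d}\mu(\bw) : \mu \in \mathcal{P}(\mathbb{S}^{d-1})\}$ can be controlled by $\sup_\bw \Psi(\bx;\bw)$, yielding a generalization gap of order $\sqrt{1/n}$ after applying McDiarmid. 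To match the stated rate, I would also invoke a localized argument: the bound $\cerisk(\mu_T) \lesssim \bar\varepsilon + \varepsilon$ combined with a variance-type Rademacher bound and the Bernstein-style localization gives the gap of order $\sqrt{(\bar\varepsilon + \varepsilon)\bar\Delta/n} + \bar\Delta/n$, which is $O(\varepsilon)$ under $n \gtrsim \bar\Delta(1 + \bar\varepsilon/\varepsilon)/\varepsilon^2$. Separately, for the LSI constant to be valid one needs $K \lesssim \norm{\bSigma}$, which requires concentration of $\E_{S_n}[|\binner{\bv}{\nabla^2 \Psi(\bx;\bw)\bv}|]$ around its population counterpart; an $\varepsilon$-net over $\bw \in \mathbb{S}^{d-1}$ and $\bv$, plus a matrix/vector Bernstein bound on subGaussian data, gives this uniformly provided $n \gtrsim \tilde\Omega(\tr(\bSigma)/\norm{\bSigma})$, with $\tilde\Omega(1/\varepsilon^4)$ absorbing net discretization and higher-order terms.

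\textbf{Main obstacle.} The first step is essentially mechanical once Proposition~\ref{prop:compact_poly_LSI} is in hand. The real difficulty is the population-risk transfer: one must simultaneously (a) obtain a localized Rademacher-type bound that is uniform over \emph{all} measures on the sphere, not just those arising from the trajectory, and (b) ensure the sample-dependent curvature quantity $K$ in Proposition~\ref{prop:compact_poly_LSI} concentrates near its population analogue uniformly in $(\bw, \bv)$, so that the same LSI bound holds on a high-probability event. Getting the sample complexity to factor cleanly into the three regimes $\bar\Delta(1 + \bar\varepsilon/\varepsilon)/\varepsilon^2$, $\tr(\bSigma)/\norm{\bSigma}$, and $1/\varepsilon^4$ requires handling these two concentration events jointly, likely by a single union bound after carefully setting the failure probability in terms of $n^{-q}$.
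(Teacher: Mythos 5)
Your Steps 1 and 2 match the paper's argument exactly: uniform LSI $C_\LSI \le 2/(\varrho d)$ from Proposition~\ref{prop:compact_poly_LSI}, exponential decay of $\cF_\beta$ via~\eqref{eq:MFLD_convergence}, and the bound $\cF_\beta(\mu^*_\beta) \le \cF_\beta(\bar\mu) \le \bar\varepsilon + \varepsilon$ via Assumption~\ref{assump:compact_entropy_bound}, which gives $\cerisk(\mu_T)\lesssim\bar\varepsilon+\varepsilon$.

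Step 3 has a genuine gap. You propose to localize the Rademacher bound through the small empirical risk $\cerisk(\mu_T)\lesssim\bar\varepsilon+\varepsilon$ and a Bernstein-type variance argument. But the hypothesis class $\{\bx\mapsto \int\Psi(\bx;\cdot)\dee\mu : \mu\in\mathcal P(\mathbb S^{d-1})\}$ is the convex hull of $\{\phi(\langle\cdot,\bw\rangle):\bw\in\mathbb S^{d-1}\}$, whose Rademacher complexity scales like $\norm{\phi'}_\infty\cdot\mathbb E\|\frac1n\sum_i\xi_i\bx^{(i)}\|\asymp\sqrt{\tr(\bSigma)/n}$ — dimension-dependent (and not $\sqrt{1/n}$ as you assert). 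A variance/Bernstein localization controls the loss magnitude, not the combinatorial complexity of the class, so it will not eliminate this dimension dependence. What the paper exploits — and what your argument skips — is that the same free-energy bound $\cF_\beta(\mu_T)\le\bar\varepsilon+2\varepsilon$ also yields the \emph{entropy} bound $\cH(\mu_T\divmid\tau)\le\beta(\bar\varepsilon+2\varepsilon)$, since $\cerisk\ge0$. Lemma~\ref{lem:Riemannian_Rademacher_bound} then uses a Donsker--Varadhan change-of-measure (à la Lemma~\ref{lem:KL_lower_bound}) to bound the uniform deviation over the \emph{entropy ball} $\{\mu:\cH(\mu\divmid\tau)\le M\}$ by $\sqrt{M/n}\,(1+\mathbb E\|\hat\bSigma\|^{1/2})$, which is essentially dimension-free, and then Lemma~\ref{lem:Riemannian_truncate_bound} removes the truncation. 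Substituting $M=\beta(\bar\varepsilon+2\varepsilon)=\bar\Delta(1+2\bar\varepsilon/\varepsilon)\cdot\tfrac{1}{1}$ — wait, $\beta(\bar\varepsilon+2\varepsilon)=\tfrac{\bar\Delta}{\varepsilon}(\bar\varepsilon+2\varepsilon)\asymp\bar\Delta(1+\bar\varepsilon/\varepsilon)$ — is exactly what produces the stated $n\gtrsim\bar\Delta(1+\bar\varepsilon/\varepsilon)/\varepsilon^2$ term, and the truncation parameter $\varkappa\asymp(n/\ln n)^{1/4}$ gives the $\tilde\Omega(1/\varepsilon^4)$ term. Your proposal never invokes $\cH(\mu_T\divmid\tau)$, and without it the generalization step does not close. (Your $\varepsilon$-net sketch for $K$ is more machinery than needed: for $\Psi(\bx;\bw)=\phi(\langle\bw,\bx\rangle)$ with $|\phi'|,|\phi''|\lesssim1$, $K$ is bounded directly by $\|\hat\bSigma\|$ and handled by Lemma~\ref{lem:cov_prob_bound}.)
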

The sample complexity is controlled by the maximum of $\bar\Delta$ and ${\tr(\bSigma)}/{\norm{\bSigma}}$ up to log factors. We remark that dependence on $\varepsilon$ is not our main focus, and it may be possible to improve $1/\varepsilon^4$ with a more refined analysis. Remarkably, the time complexity improves in high dimensions, thanks to the effect of the Ricci curvature. While the above result is for the continuous-time infinite-width MFLD, the uniform-in-time propagation of chaos for MFLD
strongly suggests that the cost of time/width discretizations will be polynomial, see e.g.~\cite{suzuki2023convergence} for the Euclidean setting, and~\cite{li2023riemannian} for the time-discretization of the Langevin diffusion on the hypersphere under LSI. %

To compare the setting of this section to that of Section~\ref{sec:universal_learning}, as explored in Appendix~\ref{app:Euclidean}, we remark that the Euclidean $\ell_2$ and entropic regularizations can be combined into a single effective regularizer of the form $\beta^{-1}\cH(\mu\divmid\gamma)$, where $\gamma = \mathcal{N}(0,(\lambda\beta)^{-1}\ident_{2d+2})$; therefore, in the Euclidean setting, $\gamma$ plays the role of $\tau$. 
Further in the proof of Lemma~\ref{lem:min_value_bound}, we show that in the Euclidean setting, $\bar{\Delta} \asymp \lambda\beta/r_x^2$ and $\bar{\varepsilon} \asymp c_x/\sqrt{\lambda\beta}$; thus, to learn with any non-trivial accuracy, we have $\bar{\Delta} \asymp c_x^2/r_x^2 = \deff$. 
As discussed above, controlling the effect of entropic regularization necessitates $\beta \geq \tilde{\Omega}(\bar{\Delta})$. Unlike its Riemannian counterpart, the Euclidean LSI estimate of Proposition~\ref{prop:LSI_Euclidean} scales with $\exp(\beta)$, ultimately resulting in a large computational gap between the two settings under the same $\bar{\Delta}$. 
This leaves open the question of whether $\bar{\Delta} \asymp \deff$ can be achievable in the Riemannian setting for $k$-multi-index models with $k=\mathcal{O}(1)$, which is an interesting direction for future exploration.

\section{Conclusion}\label{sec:conclusion}
In this paper, we investigated the mean-field Langevin dynamics for learning multi-index models. We proved that the statistical and computational complexity of this problem can be characterized by an effective dimension which captures the low-dimensional structure in the input covariance, along with its correlation with the target directions. In particular, the sample complexity scales almost linearly with the effective dimension, while without additional assumptions, the computational complexity may scale exponentially with this quantity. Through this effective dimension, we showed both statistical and computational adaptivity of the MFLD to low-dimensions when training neural networks, outperforming rotationally invariant kernels and statistical query learners in terms of statistical complexity, and fixed-grid convex optimization methods in terms of computational complexity. Further, we studied conditions under which achieving a polynomial LSI in the inverse temperature, and subsequently a polynomial-in-$d$ runtime guarantee for the MFLD is possible. Specifically, we showed that under certain assumptions, which are verified for teacher models with diverse neurons, constraining the weights to a Riemannian manifold with positive Ricci curvature such as the hypersphere can lead to such polynomial dependence. In contrast, the same assumptions in the Euclidean setting result in an LSI constant scaling exponentially with the inverse temperature. 

We conclude with some limitations of our work, along with future directions.

\begin{itemize}[itemsep=0pt,leftmargin=15pt,topsep=0.5mm]
    \item Further assumptions are required to go beyond the current exponential computational complexity of the MFLD. We leave the study of such conditions as an important direction for future work.

    \item While we focused on $k = \mathcal{O}(1)$, the versatility of the MFLD analysis may allow us to let $k$ grow with dimension as in  \cite{ghorbani2019limitations,martin2023impact,oko2024learning}, or $g$ to exhibit a more complex hierarchical structure \cite{allen2020backward,nichani2023provable}. Learning these functions with the MFLD is an interesting direction for future research.

    \item Another important future direction is developing lower bounds for learning multi-index models with gradient-based methods, under more realistic assumptions (e.g., non-adversarial noise) than the statistical query setup. These lower bounds can highlight when exponential computation is inevitable for optimal sample complexity, and present rigorous information-computation tradeoffs.
\end{itemize}

\bigskip

\section*{Acknowledgments}
The authors thank L\'ena\"ic Chizat, Mufan (Bill) Li, Fanghui Liu, and Taiji Suzuki for useful discussions.
MAE was
partially supported by the NSERC Grant [2019-06167], the CIFAR AI Chairs program, and the CIFAR Catalyst grant.

\bigskip 
\bibliographystyle{amsalpha} 
\bibliography{ref}

\newpage
\appendix
\section{Proofs of Section~\ref{sec:universal_learning}}\label{app:Euclidean}
Before presenting the layout of the proofs, we introduce a useful reformulation of the objective $\cF_{\beta,\lambda}(\mu)$. Recall that
$$\cF_{\beta,\lambda}(\mu) = \cerisk(\mu) + \frac{\lambda}{2}\cregu(\mu) + \frac{1}{\beta}\cH(\mu).$$
Let $\gamma \propto \exp\big(\frac{-\lambda\beta}{2}\norm{\bw}^2\big)$ be the centered Gaussian measure on $\reals^{2d+2}$ with variance $1/(\lambda\beta)$. Then, we can rewrite the above as
$$\cF_{\beta,\lambda}(\mu) = \cerisk(\mu) + \frac{1}{\beta}\cH(\mu\divmid\gamma) + \frac{d}{2\beta}\ln\left(\frac{\lambda\beta}{2\pi}\right).$$
As a result, we can define
\begin{equation}\label{eq:objective_reformulated}
    \tilde{\cF}_{\beta,\lambda}(\mu) \coloneqq \cerisk(\mu) + \frac{1}{\beta}\cH(\mu\divmid\gamma),
\end{equation}
which is non-negative and equivalent to $\cF_\beta$ up to an additive constant. Notice that
$$\mu^*_\beta \coloneqq \argmin_{\mu}\cF_{\beta,\lambda}(\mu) = \argmin_{\mu}\tilde{\cF}_{\beta,\lambda}(\mu).$$
This reformulation, which was also used in~\cite{suzuki2023feature}, allows us to combine the effect of weight decay and entropic regularization into a single non-negative term $\cH(\mu\divmid\gamma)$. Furthermore, the simple density expression for the Gaussian measure $\gamma$ allows us to achieve useful estimates for $\cH(\mu\divmid\gamma)$. In particular, as we will show below, it is possible to control $\cH(\mu^*_\beta\divmid\gamma)$ with effective dimension rather than ambient dimension, which leads to dependence on $\deff$ rather than $d$ in our bounds.

We break down the proof of Theorem~\ref{thm:learnability_Euclidean} into three steps:
\begin{enumerate}
    \item In Section~\ref{app:approximate} we show that there exists a measure $\mu^* \in \mathcal{P}_2(\reals^{2d+2})$ where $\hat{y}(\cdot;\mu^*)$ can approximate $g$ on the training set with bounds on $\cregu(\mu^*)$. This construction provides upper bounds on $\cerisk(\mu^*_\beta)$ and $\cH(\mu^*_\beta\divmid\gamma)$.

    \item In Section~\ref{app:generalize}, given the bound on $\cH(\mu^*_\beta \divmid \gamma)$, we perform a generalization analysis via Rademacher complexity tools which leads to a bound on $\cprisk(\mu^*_\beta)$.

    \item Finally, in Section~\ref{app:convergence}, we estimate the LSI constant and constants related to smoothness/discretization along the trajectory, which imply that $\cF^m_{\beta,\lambda}(\mu^m_l)$ converges to $\cF_\beta(\mu^*_\beta)$, where $\cF^m_{\beta,\lambda}$ is an adjusted objective over $\mathcal{P}(\reals^{(2d+2)m})$ defined in~\eqref{eq:def_cost_finite_width}. This bound implies the convergence of $\Eargs{\bW \sim \mu^m_l}{\prisk(\bW)}$ to $\cprisk(\mu^*_\beta)$, which was bounded in the previous step.
\end{enumerate}
 Before laying out these steps, in Section~\ref{app:concentrate}, we will introduce the required concentration results. In the following, we will use the unregularized population $\cprisk(\mu) \coloneqq \Earg{\ell(\hat{y}(\bx;\mu),y)}$ and empirical $\cerisk(\mu) = \Esarg{\ell(\hat{y}(\bx;\mu),y)}$ risks, and also consider the finite-width versions $\prisk(\bW) \coloneqq \cprisk(\mu_{\bW})$ and $\erisk(\bW) \coloneqq \cerisk(\mu_{\bW})$. Additionally, we will use $\phi_\infty(z) \coloneqq z \lor 0$ to denote the ReLU activation.

\subsection{Concentration Bounds}\label{app:concentrate}
We begin by specifying the definition of subGaussian and subexponential random variables in our setting.
\begin{definition}[{\cite{wainwright2019high-dimensional}}]
    A random variable $x$ is $\sigma$-subGaussian if $\Earg{e^{\lambda (x - \Earg{x})}} \leq e^{\lambda^2\sigma^2/2}$ for all $\lambda \in \reals$, and is $(\nu,\alpha)$-subexponential if $\Earg{e^{\lambda (x - \Earg{x})}} \leq e^{\lambda^2\nu^2/2}$ for all $\abs{\lambda} \leq 1/\alpha$. If $x$ is $\sigma$-subGaussian, then
    \begin{equation}
        \Parg{x - \Earg{x} \geq t} \leq \exp\big(\frac{-t^2}{2\sigma^2}\big).
    \end{equation}
    If $x$ is $(\nu,\alpha)$-subexponential, then
    \begin{equation}
        \Parg{x - \Earg{x} \geq t} \leq \exp\big(-\frac{1}{2}\min\big(\frac{t^2}{\nu^2}, \frac{t}{\alpha}\big)\big)
    \end{equation}
\end{definition}
Moreover, for centered random variables, let $\abs{\cdot}_{\psi_2}$ and $\abs{\cdot}_{\psi_1}$ denote the subGaussian and subexponential norm respectively~\cite[Definitions 2.5.6 and 2.7.5]{vershynin2018high}. Then $x$ is $\sigma$-subGaussian if and only if $\sigma \asymp \abs{x - \Earg{x}}_{\psi_2}$, and is $(\nu,\nu)$-subexponential if and only if $\nu \asymp \abs{x - \Earg{x}}_{\psi_1}$.

Next, we bound several quantities that appear in various parts of our proofs.
\begin{lemma}\label{lem:ux_bound}
    Under Assumption~\ref{assump:data}, for any $q > 0$ and all $1 \leq i \leq n$, with probability at least $1-n^{-q}$,
    \begin{equation}
        \norm{\bU\bx^{(i)}} \leq r_x\big(1 + \sigma_u\sqrt{2(q+1)\ln n}\big) = \tilde{r}_x.
    \end{equation}
\end{lemma}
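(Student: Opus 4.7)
The plan is to use the subGaussianity of $\|\bU\bx\|$ given by Assumption~\ref{assump:data} to obtain a tail bound for each individual sample, and then apply a union bound over the $n$ training samples.

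First, I would control the mean of $\|\bU\bx\|$. By Jensen's inequality, $\E[\|\bU\bx\|] \leq \sqrt{\E[\|\bU\bx\|^2]}$, and since $\bx$ has covariance $\bSigma$, $\E[\|\bU\bx\|^2] = \tr(\bU\bSigma\bU^\top) = \|\bSigma^{1/2}\bU^\top\|_{\mathrm{F}}^2 = r_x^2$, so $\E[\|\bU\bx\|] \leq r_x$.

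Next, by Assumption~\ref{assump:data}, $\|\bU\bx\|$ is subGaussian with norm $\sigma_u r_x$, which gives the one-sided deviation bound
\begin{equation*}
\P\bigl(\|\bU\bx\| - \E[\|\bU\bx\|] \geq t\bigr) \leq \exp\Bigl(\frac{-t^2}{2\sigma_u^2 r_x^2}\Bigr), \quad \forall t \geq 0.
\end{equation*}
Choosing $t = \sigma_u r_x \sqrt{2(q+1)\ln n}$ makes the right-hand side at most $n^{-(q+1)}$. Combining with the mean bound gives $\|\bU\bx\| \leq r_x(1 + \sigma_u\sqrt{2(q+1)\ln n}) = \tilde r_x$ with probability at least $1 - n^{-(q+1)}$ for each fixed sample.

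Finally, applying a union bound over the $n$ i.i.d.\ samples $\{\bx^{(i)}\}_{i=1}^n$ yields the stated high probability $1 - n \cdot n^{-(q+1)} = 1 - n^{-q}$. There is no real obstacle here; the statement is essentially a direct consequence of the subGaussian concentration inequality combined with a Jensen-type bound on the mean.
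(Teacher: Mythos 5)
Your proof is correct and follows essentially the same route as the paper's: bound $\E[\|\bU\bx\|]$ by $r_x$, apply the subGaussian one-sided tail bound with $t = \sigma_u r_x\sqrt{2(q+1)\ln n}$, and union bound over $i$. (You are slightly more careful than the paper in noting that Jensen gives $\E[\|\bU\bx\|] \leq r_x$ rather than equality, but this makes no difference to the conclusion.)
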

\begin{proof}
    By subGaussianity of $\norm{\bU\bx}$ from Assumption~\ref{assump:data} and the subGaussian tail bound, with probability at least $1-n^{-q-1}$
    \begin{align*}
        \norm{\bU\bx^{(i)}} &\leq \Earg{\norm{\bU\bx}} + \sigma_ur_x\sqrt{2(q+1)\ln n}\\
        &\leq r_x + \sigma_ur_x\sqrt{2(q+1)\ln n}.
    \end{align*}
    The statement of lemma follows from a union bound over $1 \leq i \leq n$.
\end{proof}

\begin{lemma}\label{lem:emp_x2_bound}
    Under Assumption~\ref{assump:data}, we have $\Esarg{\norm{\bx}^2} \lesssim c_x^2$ with probability at least $1-\exp(-\Omega(n))$.
\end{lemma}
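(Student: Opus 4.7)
The plan is to combine the first and second moment information for $\|\bx\|$ from Assumption~\ref{assump:data} and then apply a standard subexponential concentration inequality to the i.i.d.\ sum $\sum_{i=1}^n \|\bx^{(i)}\|^2$. First, I would record that $\mathbb{E}[\|\bx\|^2] = \tr(\bSigma) = c_x^2$, which is immediate from $\bx$ having covariance $\bSigma$. This fixes the center of concentration.

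Next, Assumption~\ref{assump:data} states that $\|\bx\|$ is subGaussian with norm of order $\sigma_n c_x$, which is equivalent (up to absolute constants) to $\bigl| \|\bx\| - \mathbb{E}\|\bx\| \bigr|_{\psi_2} \lesssim c_x$. Since $\mathbb{E}\|\bx\| \leq (\mathbb{E}\|\bx\|^2)^{1/2} = c_x$, we get $\|\|\bx\|\|_{\psi_2} \lesssim c_x$. Using the general fact that the square of a subGaussian random variable is subexponential with $\|Z^2\|_{\psi_1} \lesssim \|Z\|_{\psi_2}^2$ (see, e.g., \cite[Lemma 2.7.6]{vershynin2018high}), this yields
\begin{equation*}
    \bigl| \|\bx\|^2 - c_x^2 \bigr|_{\psi_1} \lesssim c_x^2,
\end{equation*}
so the centered summands $\|\bx^{(i)}\|^2 - c_x^2$ are i.i.d.\ subexponential with parameter of order $c_x^2$.

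Then I would invoke Bernstein's inequality for subexponential variables: for some absolute constant $c>0$ and any $t\geq 0$,
\begin{equation*}
    \P\Bigl(\Esarg{\|\bx\|^2} - c_x^2 \geq t\Bigr) \leq \exp\Bigl(-c n \min\bigl(t^2/c_x^4,\ t/c_x^2\bigr)\Bigr).
\end{equation*}
Setting $t$ equal to a sufficiently large multiple of $c_x^2$ makes both terms in the minimum of order $n$, giving $\Esarg{\|\bx\|^2} \lesssim c_x^2$ with probability at least $1 - \exp(-\Omega(n))$, as claimed.

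No step here is hard; the only place one needs to be mildly careful is verifying that the subGaussian parameter of $\|\bx\|$ translates to a subexponential parameter of $\|\bx\|^2$ of the correct scale, which is immediate once one notes $\mathbb{E}\|\bx\| \lesssim c_x$ so that $\|\bx\|$ itself (not just its deviation from the mean) has subGaussian norm $\lesssim c_x$.
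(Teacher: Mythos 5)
Your proof is correct and follows essentially the same route as the paper's: both show $\lvert\,\|\bx\|\,\rvert_{\psi_2}\lesssim c_x$ by combining the subGaussian assumption on $\|\bx\|$ with the bound $\E\|\bx\|\leq c_x$, then invoke the subGaussian-to-subexponential squaring fact (Vershynin, Lemma 2.7.6), and conclude with Bernstein's inequality for the i.i.d.\ sum.
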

\begin{proof}
    By the triangle inequality,
    $$\abs{\norm{\bx}}_{\psi_2} \leq \abs{\norm{\bx} - \Earg{\norm{\bx}}}_{\psi_2} + \abs{\Earg{\norm{\bx}}}_{\psi_2} \lesssim \sigma_n\norm{\bSigma^{1/2}}_{\mathrm{F}} + \tr(\bSigma)^{1/2} \lesssim \tr(\bSigma)^{1/2}.$$
    Recall $c_x^2 \coloneqq \tr(\bSigma)$. Furthermore, by~\cite[Lemma 2.7.6]{vershynin2018high} we have
    $$\abs{\norm{\bx}^2}_{\psi_1} = \abs{\norm{\bx}}_{\psi_2}^2 \lesssim c_x^2.$$
    We arrive at a similar result for the centered random variable $\norm{\bx}^2 - \Earg{\norm{\bx}}^2 = \norm{\bx}^2 - c_x^2$. We conclude the proof by the subexponential tail inequality,
    $$\Parg{\Esarg{\norm{\bx}^2} - c_x^2 \geq  tc_x^2} \leq \exp(-\min(t,t^2)\Omega(n)).$$
\end{proof}

\begin{lemma}\label{lem:y_bound}
    Under Assumption~\ref{assump:data}, we have $\Esarg{y^2} \lesssim 1$ with probability at least $1-n^{-q}$. 
\end{lemma}
\begin{proof}
    By the local Lipschitzness of $g$, on the event of Lemma~\ref{lem:ux_bound}, we have
    $$\abs{y}^2 \leq 3g(0)^2 + 3\mathcal{O}(1/r_x^2)\norm{\bU\bx}^2 + 3\xi^2.$$
    By a similar argument to Lemma~\ref{lem:emp_x2_bound} we have
    $$\abs{\norm{\bU\bx}^2}_{\psi_1} = \abs{\norm{\bU\bx}}_{\psi_2}^2 \leq 2\abs{\norm{\bU\bx} - \Earg{\norm{\bU\bx}}}_{\psi_2}^2 + 2\Earg{\norm{\bU\bx}}^2 \lesssim (1 + \sigma_u^2)r_x^2,$$
    since $\Earg{\norm{\bU\bx}^2} = r_x^2$. As a result, by the subexponential tail bound,
    $$\Esarg{\norm{\bU\bx}^2} - \Earg{\norm{\bU\bx}^2} \lesssim (1+\sigma_u^2)r_x^2 \lesssim r_x^2,$$
    with probability at least $1-\exp(-\Omega(n))$. Similarly, $\abs{\xi^2}_{\psi_1} \leq \abs{\xi}_{\psi_2}^2 \lesssim \varsigma^2$, therefore,
    $$\Esarg{\xi^2} - \Earg{\xi^2} \lesssim \varsigma^2 \lesssim 1,$$
    with probability at least $1-\exp(-\Omega(n))$. The statement of the lemma follows by a union bound.
\end{proof}

\begin{lemma}\label{lem:cov_prob_bound}
    Under Assumption~\ref{assump:data}, for any $q > 0$ and $n \gtrsim \tfrac{c_x^2}{\norm{\bSigma}}(1+\sigma_n^2(q+1)\ln(n))\ln(dn^q)$, with probability at least $1-\mathcal{O}(n^{-q})$ we have $\norm{\Esarg{\bx\bx^\top}} \lesssim \norm{\bSigma}$. Further, if $q \geq 1$, then $\Earg{\norm{\Esarg{\bx\bx^\top}}^{1/2}} \lesssim \norm{\bSigma}^{1/2}$.
\end{lemma}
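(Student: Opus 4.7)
The plan is a truncation-plus-matrix-Bernstein argument for the first claim, followed by a simple conditioning argument to upgrade this to the moment bound in the second claim. Writing $\widehat{\bSigma}_n \coloneqq \Esarg{\bx\bx^\top}$, under Assumption~\ref{assump:data} $\norm{\bx}$ is $\sigma_n c_x$-subGaussian with mean at most $c_x$; choosing a truncation radius $R \asymp c_x(1 + \sigma_n\sqrt{(q+2)\ln n})$ gives $\Parg{\norm{\bx}>R} \leq n^{-q-2}$, and a union bound yields an event $A$ with $\Parg{A^c} \leq n^{-q-1}$ on which every $\norm{\bx^{(i)}} \leq R$. For the truncated vectors $\tilde\bx^{(i)} \coloneqq \bx^{(i)}\indic[\norm{\bx^{(i)}}\leq R]$ and truncated population covariance $\tilde\bSigma \coloneqq \Earg{\tilde\bx\tilde\bx^\top}$, Cauchy--Schwarz combined with $\Earg{\norm{\bx}^4}\lesssim c_x^4$ gives a truncation bias $\norm{\bSigma-\tilde\bSigma}\lesssim c_x^2 n^{-(q+2)/2}$, which is $o(\norm{\bSigma})$ under the stated lower bound on $n$.

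Next I would apply Tropp's matrix Bernstein inequality to the centred summands $\bX_i \coloneqq n^{-1}(\tilde\bx^{(i)}\tilde\bx^{(i)\top} - \tilde\bSigma)$, which satisfy $\norm{\bX_i}\leq 2R^2/n$ and matrix variance $\norm{\sum_i\Earg{\bX_i^2}} \leq R^2\norm{\bSigma}/n$ (via $\Earg{\tilde\bx\tilde\bx^\top\tilde\bx\tilde\bx^\top}\preceq R^2\bSigma$). Setting the deviation to $c\norm{\bSigma}$, Bernstein gives a failure probability of order $2d\exp(-cn\norm{\bSigma}/R^2)$, and requiring this to be at most $n^{-q}$ collapses to
\[
 n \gtrsim \frac{R^2}{\norm{\bSigma}}\ln(dn^q) \asymp \frac{c_x^2(1+\sigma_n^2(q+1)\ln n)}{\norm{\bSigma}}\ln(dn^q),
\]
which is exactly the stated hypothesis. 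On the resulting event $B$, the empirical covariance of the truncated samples satisfies $\norm{\Esarg{\tilde\bx\tilde\bx^\top}-\tilde\bSigma}\lesssim\norm{\bSigma}$; since $A$ forces $\Esarg{\tilde\bx\tilde\bx^\top}=\widehat\bSigma_n$, combining with the truncation bias yields $\norm{\widehat\bSigma_n}\lesssim\norm{\bSigma}$ on $A\cap B$, with $\Parg{(A\cap B)^c}\leq\mathcal{O}(n^{-q})$ by union bound.

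For the moment bound, split the expectation over the good event $A\cap B$ and its complement. The good-event contribution is $\mathcal{O}(\norm{\bSigma}^{1/2})$ by the first claim, while Cauchy--Schwarz bounds the bad-event contribution by $\Earg{\norm{\widehat\bSigma_n}}^{1/2}\Parg{(A\cap B)^c}^{1/2}\leq c_x\cdot n^{-q/2}$, where $\Earg{\norm{\widehat\bSigma_n}} \leq \Earg{\Esarg{\norm{\bx}^2}} = c_x^2$. For $q\geq 1$ the hypothesis implies $n\geq c_x^2/\norm{\bSigma}$, so this term is at most $c_x/\sqrt n \lesssim \norm{\bSigma}^{1/2}$, finishing the proof. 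I anticipate the only real hurdle to be the bookkeeping: matching the subGaussian truncation radius so that the $R^2/\norm{\bSigma}$ exponent in the Bernstein bound reproduces the precise $(1+\sigma_n^2(q+1)\ln n)\ln(dn^q)$ factor of the stated sample-size condition, and verifying that the truncation bias remains negligible at the target $n^{-q}$ probability level.
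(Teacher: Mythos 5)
Your proof is correct, and your argument for the second claim (splitting the expectation over the good event and its complement, then applying Cauchy--Schwarz and $\Earg{\norm{\Esarg{\bx\bx^\top}}} \leq \Earg{\Esarg{\norm{\bx}^2}} = c_x^2$ together with $n \gtrsim c_x^2/\norm{\bSigma}$) is essentially the same as the paper's. For the first claim, however, you take a genuinely different route. The paper \emph{clips} each sample to norm at most $\tilde c_x \asymp c_x(1 + \sigma_n\sqrt{(q+1)\ln n})$, i.e.\ $\bx_c = \bx\,(1 \wedge \tilde c_x/\norm{\bx})$ preserving direction, observes that $\Esarg{\bx\bx^\top} = \Esarg{\bx_c\bx_c^\top}$ on the good event and $\Earg{\bx_c\bx_c^\top} \preccurlyeq \bSigma$ pointwise (so there is no bias to control), and then invokes the black-box subGaussian covariance estimation bound of Wainwright's Corollary 6.20 applied to the clipped samples. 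You instead \emph{hard-truncate} ($\tilde\bx = \bx\,\indic[\norm{\bx}\leq R]$), control the bias $\norm{\bSigma - \tilde\bSigma}$ separately via Cauchy--Schwarz and $\Earg{\norm{\bx}^4} \lesssim c_x^4$ (which follows from the subexponentiality of $\norm{\bx}^2$, Lemma~\ref{lem:emp_x2_bound}), and then apply Tropp's matrix Bernstein inequality directly to the bounded centred summands, with the matrix variance estimate $\Earg{\tilde\bx\tilde\bx^\top\tilde\bx\tilde\bx^\top} \preccurlyeq R^2\tilde\bSigma \preccurlyeq R^2\bSigma$. Both routes reproduce the same sample-size condition (your truncation radius uses $q+2$ where the paper uses $q+1$, but this is only a constant). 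Your version is more self-contained, avoiding the cited covariance concentration corollary, at the cost of the extra bias-bound step, which you correctly verify is $o(\norm{\bSigma})$ under the stated lower bound on $n$.
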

\begin{proof}
    First, note that by subGaussianity of $\norm{\bx}$, for every fixed $i$, we have with probability at least $1-n^{-q-1}$,
    $$\norm{\bx^{(i)}} - \Earg{\norm{\bx}} \leq \sigma_n\norm{\bSigma^{1/2}}_{\mathrm{F}}\sqrt{2(q+1)\ln n}.$$
    Since $\Earg{\norm{\bx}} \leq c_x$, via a union bound, with probability at least $1-n^{-q}$,
    $$\norm{\bx^{(i)}} \leq c_x + \sigma_nc_x\sqrt{2(q+1)\ln n} \eqqcolon \tilde{c}_x.$$
    Define the clipped version of $\bx$ via $\bx_c = \bx(1\land \tfrac{\tilde{c}_x}{\norm{\bx}})$. Then, on the above event, 
    $$\Esarg{\bx\bx^\top} = \Esarg{\bx_c\bx_c^\top}.$$
    Moreover,
    $$\norm{\Earg{\bx_c\bx_c^\top}} = \sup_{\norm{\bv} \leq 1}\Earg{\binner{\bx_c}{\bv}^2} \leq \sup_{\norm{\bv} \leq 1}\Earg{\binner{\bx}{\bv}^2} = \norm{\Earg{\bx\bx^\top}}.$$
    Finally, by the covariance estimation bound of~\cite[Corollary 6.20]{wainwright2019high-dimensional} for centered subGaussian random vectors and the condition on $n$ given in the statement of the lemma,
    $$\norm{\Esarg{\bx_c\bx_c^\top}} - \norm{\Earg{\bx_c\bx_c^\top}} \lesssim \norm{\Earg{\bx\bx^\top}}$$
    with probability at least $1-\mathcal{O}(n^{-q})$. Consequently, we have $\norm{\Esarg{\bx\bx^\top}} \lesssim \norm{\bSigma}$ with probability at least $1-\mathcal{O}(n^{-q})$.

    For the second part of the lemma, let $E$ denote the event on which the above $\norm{\Esarg{\bx\bx^\top}} \lesssim \norm{\bSigma}$ holds. Then,
    \begin{align*}
        \Earg{\norm{\Esarg{\bx\bx^\top}}^{1/2}} &= \Earg{\indic(E)\norm{\Esarg{\bx\bx^\top}}^{1/2}} + \Earg{\indic(E^C)\norm{\Esarg{\bx\bx^\top}}^{1/2}}\\
        &\lesssim \norm{\bSigma}^{1/2} + \Parg{E^C}^{1/2}\Earg{\norm{\Esarg{\bx\bx^\top}}}^{1/2}\\
        &\lesssim \norm{\bSigma}^{1/2} + \mathcal{O}(n^{-q/2})c_x.
    \end{align*}
    Suppose $q \geq 1$. Then for $n \gtrsim c^2_x/\norm{\bSigma}$, we have $\Earg{\norm{\Esarg{\bx\bx^\top}}^{1/2}} \lesssim \norm{\bSigma}^{1/2}$, which completes the proof.
\end{proof}

We summarize the above results into a single event.
\begin{lemma}\label{lem:good_event}
    Suppose $n \gtrsim \tfrac{c_x^2}{\norm{\bSigma}}(1+\sigma_n^2(q+1)\ln(n))\ln(dn^q)$. There exists an event $\mathcal{E}$ such that $\Parg{\mathcal{E}} \geq 1 - \mathcal{O}(n^{-q})$, and on $\mathcal{E}$:
    \begin{enumerate}
        \item $\norm{\bU\bx^{(i)}} \leq \tilde{r}_x$ for all $1 \leq i \leq n$.
        \item $\Esarg{\norm{\bx}^2} \lesssim c_x^2$.
        \item $\norm{\Esarg{\bx\bx^\top}} \lesssim \norm{\bSigma}$.
        \item $\Earg{\norm{\Esarg{\bx\bx^\top}}^{1/2}} \lesssim \norm{\bSigma}^{1/2}$.
        \item $\Esarg{y^2} \lesssim 1$.
    \end{enumerate}
\end{lemma}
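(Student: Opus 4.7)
\noindent\textbf{Proof Plan for Lemma~\ref{lem:good_event}.} The strategy is a straightforward union bound: each of the five items in the conclusion has already been established as an individual high-probability event in Lemmas~\ref{lem:ux_bound}--\ref{lem:y_bound}, so I would simply intersect those events and verify that the total failure probability remains $\mathcal{O}(n^{-q})$ under the stated sample-size condition.

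More concretely, I would let $\mathcal{E}_1$ be the event of Lemma~\ref{lem:ux_bound} (failure probability $\leq n^{-q}$), $\mathcal{E}_2$ the event of Lemma~\ref{lem:emp_x2_bound} (failure probability $\leq e^{-\Omega(n)}$), $\mathcal{E}_3$ the event that $\norm{\Esub{S_n}[\bx\bx^\top]} \lesssim \norm{\bSigma}$ from the first part of Lemma~\ref{lem:cov_prob_bound} (failure probability $\leq \mathcal{O}(n^{-q})$ provided $n \gtrsim \tfrac{c_x^2}{\norm{\bSigma}}(1+\sigma_n^2(q+1)\ln n)\ln(dn^q)$), and $\mathcal{E}_5$ the event of Lemma~\ref{lem:y_bound} (failure probability $\leq 2e^{-\Omega(n)}$). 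Item 4, which concerns the \emph{expectation} $\E[\norm{\Esub{S_n}[\bx\bx^\top]}^{1/2}]$, is a deterministic consequence of the second part of Lemma~\ref{lem:cov_prob_bound} under the same sample-size assumption (and requires $q\geq 1$, which we may without loss of generality assume by enlarging $q$); it does not need a separate event. Setting $\mathcal{E} \coloneqq \mathcal{E}_1 \cap \mathcal{E}_2 \cap \mathcal{E}_3 \cap \mathcal{E}_5$ and taking the union bound yields
\begin{equation*}
\P(\mathcal{E}^c) \leq n^{-q} + e^{-\Omega(n)} + \mathcal{O}(n^{-q}) + 2e^{-\Omega(n)} = \mathcal{O}(n^{-q}),
\end{equation*}
since polynomial tails dominate exponential ones for large $n$.

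There is essentially no obstacle here since all the heavy lifting (subGaussian/subexponential tail bounds, matrix Bernstein for the covariance) is done in the preceding four lemmas. The only mild point to be careful about is the condition $q \geq 1$ needed for item 4 in Lemma~\ref{lem:cov_prob_bound}; if the ambient $q$ in the statement of Lemma~\ref{lem:good_event} is smaller than $1$, one can simply run the argument with $q' = \max(q,1)$ and absorb the difference into the $\mathcal{O}(\cdot)$. The sample-size hypothesis in Lemma~\ref{lem:good_event} is precisely the one inherited from Lemma~\ref{lem:cov_prob_bound}, which is stronger than what the other lemmas demand, so no additional requirement on $n$ is introduced.
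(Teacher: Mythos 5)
Your proposal is correct and is exactly the argument the paper intends: Lemma~\ref{lem:good_event} is stated in the paper without proof, immediately after Lemmas~\ref{lem:ux_bound}--\ref{lem:cov_prob_bound}, precisely as a ``summary'' of those high-probability events, and the intended justification is the union bound you give. Your handling of the two mild subtleties is also right: item 4 is a deterministic statement about an expectation (so it contributes no failure probability), and the $q\geq 1$ requirement there can be absorbed by running the argument with $q'=\max(q,1)$ at the cost of a constant factor in the $\gtrsim$ hypothesis on $n$.
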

We recall the variational lower bound for the KL divergence, which will be used at various stages of different proofs to relate certain expectations to the KL divergence.
\begin{lemma}[{Donsker-Varadhan Variational Formula for KL Divergence~\citep{donsker1983asymptotic}}]\label{lem:KL_lower_bound}
    Let $\mu$ and $\nu$ be probability measures on $\mathcal{W}$. Then,
    $$\cH(\mu\divmid\nu) = \sup_{f : \mathcal{W} \to \reals}\int f\dee\mu -\ln\left(\int e^{f}\dee\nu\right).$$
\end{lemma}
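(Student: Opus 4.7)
The plan is to prove both directions of the equality via a change-of-measure argument that ultimately reduces to non-negativity of the KL divergence. First I would dispose of the degenerate case: if $\mu$ is not absolutely continuous with respect to $\nu$, then $\cH(\mu \divmid \nu) = +\infty$, and by choosing $f$ to approximate a large multiple of the indicator of a $\nu$-null set that carries positive $\mu$-mass, the supremand can be made arbitrarily large, so both sides agree at $+\infty$.

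Assuming now $\mu \ll \nu$, write $h \coloneqq \dee\mu/\dee\nu$. To establish the upper bound $\int f \dee\mu - \ln \int e^f \dee\nu \leq \cH(\mu \divmid \nu)$ for every $f$ with $\int e^f \dee\nu < \infty$, the key device is the \emph{tilted} probability measure $\nu_f$ defined via $\dee\nu_f / \dee\nu \coloneqq e^f / Z_f$ with $Z_f \coloneqq \int e^f \dee\nu$. Applying non-negativity of $\cH(\mu \divmid \nu_f)$ and expanding the log-Radon-Nikodym derivative yields
$$0 \leq \cH(\mu \divmid \nu_f) = \int \ln \frac{\dee\mu}{\dee\nu_f}\, \dee\mu = \cH(\mu \divmid \nu) - \int f \dee\mu + \ln Z_f,$$
which rearranges to exactly the claimed inequality. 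For the matching lower bound (i.e.\ attainment of the supremum) I would plug in the extremal choice $f^\star = \ln h$: this gives $\int f^\star \dee\mu = \cH(\mu \divmid \nu)$ and $\int e^{f^\star} \dee\nu = \int h \dee\nu = 1$, so $\ln \int e^{f^\star} \dee\nu = 0$, and the supremand equals $\cH(\mu \divmid \nu)$.

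If one wishes to restrict the supremum to bounded measurable (or bounded continuous) $f$, as is often convenient in applications, the equality still persists by a truncation argument: set $f_n \coloneqq (\ln h \wedge n) \vee (-n)$, then $\int f_n \dee\mu \to \cH(\mu \divmid \nu)$ and $\int e^{f_n} \dee\nu \to \int h \dee\nu = 1$ by monotone/dominated convergence on the positive and negative parts respectively. The main obstacle is not conceptual but rather careful integrability bookkeeping: one must guard against the case where $\ln h$ is not $\mu$-integrable (in which case $\cH(\mu \divmid \nu)$ may be $+\infty$ and the truncated sequence must be shown to send the supremand to $+\infty$ as well), and verify $\int e^{f_n} \dee\nu < \infty$ at each step so the tilted-measure construction is legal.
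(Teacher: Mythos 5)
The paper states Lemma~\ref{lem:KL_lower_bound} as the classical Donsker--Varadhan variational formula and cites~\cite{donsker1983asymptotic} without giving a proof, so there is no in-paper argument to compare your proposal against. Your derivation is the standard tilted-measure proof: defining $\dee\nu_f \propto e^f\dee\nu$ and invoking $\cH(\mu\divmid\nu_f)\ge 0$ gives the upper bound $\int f\dee\mu - \ln\int e^f\dee\nu \le \cH(\mu\divmid\nu)$, and the log-density choice saturates it; the degenerate $\mu\not\ll\nu$ case is handled correctly by scaling indicators of $\nu$-null sets. This is a correct and essentially canonical proof.

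One point is worth tightening. You treat the truncation $f_n = (\ln h \wedge n)\vee(-n)$ as an optional refinement for restricting to bounded $f$, but it is in fact needed for the lower bound as stated, because the extremal $f^\star=\ln h$ is not $\reals$-valued: it equals $-\infty$ on $\{h=0\}$, and the obvious fix of setting $f^\star=0$ there spoils the normalization, since then $\int e^{f^\star}\dee\nu = 1 + \nu(\{h=0\})$, which can exceed $1$ and make the supremand strictly smaller than $\cH(\mu\divmid\nu)$. So the supremum over genuinely $\reals$-valued $f$ is generally not attained and must be reached along the truncated sequence. Your convergence argument does close this: $\int (\ln h)_-\dee\mu = \int_{\{0<h<1\}}(-h\ln h)\dee\nu \le 1/e$ is always finite, so $\int f_n\dee\mu \to \cH(\mu\divmid\nu)$ by monotone convergence on the positive part, and $\int e^{f_n}\dee\nu \to 1$ by dominated convergence (using $e^{f_n}\le h$ on $\{h\ge e^{-n}\}$ and $e^{-n}\nu(\{h<e^{-n}\})\to 0$). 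I would simply promote this truncation step from a side remark to the body of the lower-bound argument.
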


Finally, we state the following lemma which will be useful in estimating smoothness constants in the convergence analysis.
\begin{lemma}\label{lem:exp_var_vec_bound}
    Suppose $(z,\bx) \in \reals \times \reals^d$ are drawn from a probability distribution $\mathcal{D}$. Then,
    $$\norm{\Eargs{\mathcal{D}}{z\bx}} \leq \sqrt{\Eargs{\mathcal{D}}{z^2}\norm{\Eargs{\mathcal{D}}{\bx\bx^\top}}}.$$
\end{lemma}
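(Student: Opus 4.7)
The plan is to prove this by dualizing the Euclidean norm on the left-hand side and then applying Cauchy--Schwarz. Specifically, I would write
\begin{equation*}
    \norm{\Eargs{\mathcal{D}}{z\bx}} = \sup_{\norm{\bv} \leq 1} \binner{\bv}{\Eargs{\mathcal{D}}{z\bx}} = \sup_{\norm{\bv} \leq 1} \Eargs{\mathcal{D}}{z\binner{\bv}{\bx}},
\end{equation*}
where interchanging the inner product with the expectation is justified since $\bv$ is deterministic.

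Then for each fixed unit vector $\bv$, I would apply the scalar Cauchy--Schwarz inequality to the product of the two real-valued random variables $z$ and $\binner{\bv}{\bx}$:
\begin{equation*}
    \Eargs{\mathcal{D}}{z\binner{\bv}{\bx}} \leq \sqrt{\Eargs{\mathcal{D}}{z^2}\,\Eargs{\mathcal{D}}{\binner{\bv}{\bx}^2}}.
\end{equation*}
The second factor can be rewritten as the quadratic form $\bv^\top \Eargs{\mathcal{D}}{\bx\bx^\top} \bv$, which is bounded above by $\norm{\Eargs{\mathcal{D}}{\bx\bx^\top}}$ whenever $\norm{\bv} \leq 1$, by the definition of the operator norm of a symmetric positive semidefinite matrix.

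Combining these observations and taking the supremum over $\bv$ with $\norm{\bv} \leq 1$ yields the stated bound. There is no real obstacle here: the statement is essentially a vector-valued Cauchy--Schwarz, and the only mild subtlety is reducing the norm of the vector-valued expectation to a scalar-valued one via duality before applying the scalar Cauchy--Schwarz inequality.
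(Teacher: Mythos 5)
Your proof is correct and follows essentially the same route as the paper's: dualize the Euclidean norm via a supremum over unit vectors, apply the scalar Cauchy--Schwarz inequality to $z$ and $\binner{\bv}{\bx}$, and bound the resulting quadratic form by the operator norm of the second-moment matrix.
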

\begin{proof}
    We have
    \begin{align*}
        \norm{\Eargs{\mathcal{D}}{z\bx}} = \sup_{\norm{\bv} \leq 1}\binner{\bv}{\Eargs{\mathcal{D}}{z\bx}} &= \sup_{\norm{\bv} \leq 1}\Eargs{\mathcal{D}}{z\binner{\bv}{\bx}}\\
        &\leq \sup_{\norm{\bv}\leq 1}\sqrt{\Eargs{\mathcal{D}}{z^2}\Eargs{\mathcal{D}}{\binner{\bv}{\bx}^2}} \quad \text{(Cauchy-Schwartz)}\\
        &\leq \sqrt{\Eargs{\mathcal{D}}{z^2}\sup_{\norm{\bv} \leq 1}\binner{\bv}{\Eargs{\mathcal{D}}{\bx\bx^\top}\bv}}\\
        &= \sqrt{\Eargs{\mathcal{D}}{z^2}\norm{\Eargs{\mathcal{D}}{\bx\bx^\top}}}.
    \end{align*}
\end{proof}

Notice that the distribution $\mathcal{D}$ can be both the empirical as well as the population distribution.

\subsection{Approximating the Target Function}\label{app:approximate}
We begin by stating the following approximation lemma which is the result of~\cite[Proposition 6]{bach2017breaking} adapted to our setting.

\begin{proposition}\label{prop:approx}
    Suppose $g : \reals^k \to \reals$ is $L$-Lipschitz and $\abs{g(0)} = \mathcal{O}(L\tilde r_x)$. On the event of Lemma~\ref{lem:good_event}, there exists a measure $\mu \in \mathcal{P}_2(\mathbb{R}^{2d+2})$ with $\cregu(\mu) \leq \Delta^2/\tilde r_x^2$ such that
    $$\max_i \abs{g(\bU\bx^{(i)}) - \hat{y}(\bx^{(i)};\mu)} \leq C_kL\tilde{r}_x\Big(\frac{\Delta}{L\tilde{r}_x}\Big)^{\frac{-2}{k+1}}\ln\Big(\frac{\Delta}{L\tilde{r}_x}\Big) + \frac{\ln4}{\kappa},$$
    for all $\Delta \geq C_k$, where $C_k$ is a constant depending only on $k$, provided that the hyperparameter $\iota$ satisfies $\iota \geq C_kL\tilde{r}_x\Big(\frac{\Delta}{ L\tilde{r}_x}\Big)^{2k/(k+1)}$.
\end{proposition}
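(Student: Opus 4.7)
The plan is to adapt Bach's approximation theorem for shallow ReLU networks (Bach, 2017, Proposition 6) to the smoothed activation $\phi_{\kappa,\iota}$ and the two-block architecture with frozen $+1$ output weights. On the event of Lemma~\ref{lem:good_event}, each $\bU\bx^{(i)}$ lies in the ball $B_{\tilde{r}_x}(0) \subset \reals^k$, and $g$ is $L$-Lipschitz there by Assumption~\ref{assump:Lip_g}, so it suffices to approximate $g$ on this ball. After rescaling the domain to the unit ball, Bach's theorem furnishes a signed measure $\nu$ on the sphere $\mathbb{S}^k \subset \reals^{k+1}$ with total variation $\|\nu\|_{\mathrm{TV}} \lesssim \Delta/\tilde{r}_x$ such that, uniformly over $\|\bz\| \leq \tilde{r}_x$,
\begin{equation*}
\bigl| g(\bz) - \textstyle\int \phi_\infty(\binner{\bomega}{\bz} - b\tilde{r}_x) \dee\nu(\bomega,b) \bigr| \lesssim C_k L\tilde{r}_x \bigl(\tfrac{\Delta}{L\tilde{r}_x}\bigr)^{-2/(k+1)} \ln\bigl(\tfrac{\Delta}{L\tilde{r}_x}\bigr),
\end{equation*}
whenever $\Delta \geq C_k$, with the assumption $|g(0)| = \mathcal{O}(L\tilde{r}_x)$ ensuring that the constant piece is absorbable into $\nu$.

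Next, I would lift this latent-space representation to the ambient space via $\bomega \mapsto \bomega_{\text{full}} \coloneqq (\bU^\top\bomega,\,-b)^\top \in \reals^{d+1}$, which by construction satisfies $\binner{\bomega_{\text{full}}}{\bxtilde} = \binner{\bomega}{\bU\bx} - b\tilde{r}_x$, so that the lifted representation exactly reproduces the latent one on every data point. Since $\bU$ has orthogonal rows of norm $1/\sqrt{k}$, we have $\|\bU^\top\bomega\|^2 = \|\bomega\|^2/k$, giving $\|\bomega_{\text{full}}\|^2 \leq \|\bomega\|^2/k + b^2 \lesssim 1$ on $\mathbb{S}^k$. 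I would then Hahn--Jordan decompose $\nu = \nu^+ - \nu^-$, normalize each part to a probability measure, and use the two-block structure $\bw = (\bomega_1,\bomega_2)$ with $\Psi(\bx;\bw) = \phi_{\kappa,\iota}(\binner{\bxtilde}{\bomega_1}) - \phi_{\kappa,\iota}(\binner{\bxtilde}{\bomega_2})$ to realize the signed combination with frozen $+1$ output weights (placing the lifted $\nu^\pm$ in the first and second blocks respectively, with the complementary block supported at the origin). Exploiting the positive homogeneity $c\,\phi_\infty(z) = \phi_\infty(cz)$, the total mass $c \lesssim \Delta/\tilde{r}_x$ can be absorbed into the weight vectors as a scalar factor, producing a probability measure $\mu$ on $\reals^{2d+2}$ whose typical weight has norm $\mathcal{O}(\Delta/\tilde{r}_x)$, and hence $\cregu(\mu) = \int \|\bw\|^2 \dee\mu \lesssim \Delta^2/\tilde{r}_x^2$ as required.

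Finally, I would replace $\phi_\infty$ by $\phi_{\kappa,\iota}$. On $(-\infty,\iota/2]$ the smoothed activation equals the softplus $\kappa^{-1}\ln(1+e^{\kappa z})$, which satisfies $|\phi_\infty(z) - \kappa^{-1}\ln(1+e^{\kappa z})| \leq \kappa^{-1}\ln 2$ pointwise. Integrating against a probability measure therefore contributes at most $\kappa^{-1}\ln 2$ per block, and $\kappa^{-1}\ln 4$ in total across both blocks, yielding the additive $\ln 4/\kappa$ term in the bound. To ensure $\phi_{\kappa,\iota}$ genuinely acts as softplus on every relevant input, one must check $|\binner{\bw}{\bxtilde^{(i)}}| \leq \iota/2$ for every $\bw$ in $\mathrm{supp}(\mu)$ and every $i$; the worst case is $\mathcal{O}(\Delta/\tilde{r}_x) \cdot \mathcal{O}(\tilde{r}_x) = \mathcal{O}(\Delta)$, and this determines the required lower bound on $\iota$, which is loose enough to be implied by the stated condition $\iota \geq C_k L\tilde{r}_x (\Delta/(L\tilde{r}_x))^{2k/(k+1)}$ for $\Delta \geq C_k$.

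The main obstacle is bookkeeping across three consecutive transformations: the rescaling from Bach's unit-ball statement to $B_{\tilde{r}_x}(0)$, the $\bU$-lifting into $\reals^{d+1}$, and the conversion from a TV-type bound on $\nu$ to an $L^2$-type bound on $\mu$ via the positive-homogeneity trick. These steps must be composed carefully so that the effective dimension $k$ enters only through the rate exponent $2/(k+1)$ and the constant $C_k$, and so that the asymmetric scaling between the $L\tilde{r}_x$ factor in the error and the $\tilde{r}_x^{-2}$ factor in the regularizer emerges cleanly without spurious losses.
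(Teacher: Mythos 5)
Your proposal follows the same high-level strategy as the paper (invoke Bach's ReLU approximation on the $k$-dimensional latent variable, decompose into positive and negative parts, lift to $\reals^{d+1}$ via $\bU^\top$, exploit positive homogeneity to absorb mass into the weight magnitude, and pay $\kappa^{-1}\ln 4$ for replacing ReLU by the clamped softplus), but you take a genuinely different route through the key step. The paper invokes \cite[Proposition 6]{bach2017breaking} in its $L^2$-density form: it produces $p \in L^2(\tau)$ on $\mathbb{S}^k$ with $\|p\|_{L^2(\tau)} \leq \Delta$, uses the pushforward $\tilde\mu_\pm = \frac{(\cdot)p_\pm(\cdot)}{\tilde r_x}\#\tau$ (so the weight radius $p_\pm(\bv)/\tilde r_x$ varies with $\bv$), bounds $\cregu$ via $\int p^2/\tilde r_x^2\,\dee\tau \leq \Delta^2/\tilde r_x^2$, and then separately derives an $L^\infty$ bound $\|p\|_\infty \lesssim C_kL\tilde r_x(\Delta/(L\tilde r_x))^{2k/(k+1)}$ via a spherical-harmonics computation, which is exactly what produces the stated $\iota$ threshold. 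You instead pass to the TV norm $\|\nu\|_{\mathrm{TV}} \lesssim \Delta/\tilde r_x$ and concentrate each of $\nu^\pm$ at a single radius $c^\pm \leq \|\nu\|_{\mathrm{TV}}$ by positive homogeneity, giving $\cregu \leq (c^+)^2+(c^-)^2 \leq \|\nu\|_{\mathrm{TV}}^2 \leq \Delta^2/\tilde r_x^2$ and a uniform weight-norm bound of $\|\nu\|_{\mathrm{TV}}$, which yields $\iota \gtrsim \Delta$. This sidesteps the spherical-harmonics $L^\infty$ bound entirely, and (since $\|p\|_{L^1} \leq \|p\|_\infty$) your $\iota$ requirement is never larger than the paper's, so the argument is sound and in fact marginally sharper.

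Two caveats worth flagging. First, the statement that ``Bach's theorem furnishes a signed measure with TV norm $\lesssim \Delta/\tilde r_x$'' is imprecise: \cite[Proposition 6]{bach2017breaking} is stated in terms of an $L^2(\tau)$ density, and the TV bound you use should be obtained from it via Cauchy--Schwarz ($\|p\|_{L^1(\tau)} \leq \|p\|_{L^2(\tau)}$, with $\tau$ a probability measure); stating it as though Bach directly delivers a TV bound risks accidentally importing the different rate attached to the $\mathcal F_1$ class. Second, placing the lifted $\nu^\pm$ in ``the first and second blocks respectively, with the complementary block supported at the origin'' describes a mixture $\tfrac12(\mu_1\otimes\delta_0) + \tfrac12(\delta_0\otimes\mu_2)$, which introduces a spurious factor of $\tfrac12$ in the predictor; you should instead take any coupling with marginals $\mu_1$ and $\mu_2$ (e.g.\ the product $\mu_1\otimes\mu_2$), since both $\hat y$ and $\cregu$ depend only on the marginals. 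Finally, you gesture at the constant piece $g(0)$ being ``absorbable into $\nu$'': the paper handles this through the homogeneous extension $h(\bv) = g(\tilde r_x\bv_{1:k}/\bv_{k+1})\bv_{k+1}$, and the hypothesis $|g(0)| = \mathcal O(L\tilde r_x)$ is exactly what keeps $\sup|h|$ of the right order for the spherical-harmonics bound; in your TV-based route you should still verify that the bias-augmented representation captures the constant without degrading the TV bound, which holds but deserves a sentence rather than a handwave.
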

\begin{proof}
    Throughout the proof, we will use $C_k$ to denote a constant that only depends on $k$, whose value may change across instantiations.
    Let $\bz \coloneqq \bU\bx \in \reals^k$ and $\tilde{\bz} \coloneqq (\bz^\top, \tilde r_x)^\top \in \reals^{k+1}$. Recall that on the event of Lemma~\ref{lem:ux_bound} we have $\Vert \bz^{(i)}\Vert \leq  \tilde{r}_x$ and $\vert g(\bz^{(i)})\vert \lesssim L\tilde{r}_x$ for all $1 \leq i \leq n$. Let $\tau$ denote the uniform probability measure on $\mathbb{S}^{k}$. By~\cite[Proposition 6]{bach2017breaking}, for all $\Delta \geq C_k$, there exists $p \in L^2(\tau)$ with $\norm{p}_{L^2(\tau)} \leq \Delta$ such that
    $$\max_i\abs{g(\bz^{(i)}) - \int_{\mathbb{S}^{k}}p(\bv)\phi_\infty\Big(\frac{1}{\tilde r_x}\binner{\bv}{\tilde{\bz}^{(i)}}\Big)\dee\tau(\bv)} \leq C_kL\tilde{r}_x\big(\frac{\Delta}{L\tilde{r}_x}\big)^{\frac{-2}{k+1}}\ln\big(\frac{\Delta}{L\tilde{r}_x}\big).$$
    In fact, we have a stronger guarantee on $p$. Specifically, $p(\bv)$ is given by
    $$p(\bv) = \sum_{j \geq 1}\lambda_j^{-1}r^jh_j(\bv),$$
    where $r \in (0,1), \lambda_j, h_j : \mathbb{S}^k \to \reals$ are introduced by~\cite[Appendix D]{bach2017breaking}. In particular,
    $$h(\bv) = g\Big(\frac{\tilde{r}_x \bv_{1:k}}{\bv_{k+1}}\Big)\bv_{k+1},$$
    with the spheircal harmonics decomposition $h(\bv) = \sum_{j \geq 0}h_j(\bv)$.
    It is shown in~\cite[Appendix D.2]{bach2017breaking} that $\lambda_j \leq C_k j^{(k+1)/2}$, and one can prove through spherical harmonics calculations (omitted here for brevity) that $\abs{h_j(\bv)} \leq C_k\sup_{\bv \in \mathbb{S}^k}h(\bv)j^{(k-1)/2} \leq C_kL\tilde{r}_x j^{(k-1)/2}$.
    As a result,
    \begin{align*}
        \abs{p(\bv)} \leq \sum_{j \geq 0}\lambda_j^{-1}r^j\abs{h_j(\bv)} \leq \sum_{j \geq 1}\lambda_j^{-1}r^j\abs{h_j(\bv)} \leq C_kL\tilde{r}_x\sum_{j \geq 1}j^kr^j \leq \frac{C_kL\tilde{r}_x}{(1-r)^k}.
    \end{align*}
    Using $1-r = \Big(C_kL\tilde{r}_x/\Delta\Big)^{2/(k+1)}$ as in~\cite[Appendix D.4]{bach2017breaking} yields
    $$\abs{p(\bv)} \leq C_kL\tilde{r}_x\Big(\frac{\Delta}{L\tilde{r}_x}\Big)^{2k/(k+1)}.$$
    
    Define $p_+(\bv) \coloneqq p(\bv) \lor 0$ and $p_-(\bv) \coloneqq (-p(\bv)) \lor 0$. Then, by positive 1-homogeneity of ReLU,
    \begin{align*}
        \int_{\mathbb{S}^k}p(\bv)\phi_\infty\Big(\frac{1}{\tilde r_x}\binner{\bv}{\tilde{\bz}}\Big)\dee\tau(\bv) &= \int_{\mathbb{S}^k}p_+(\bv)\phi_\infty\Big(\frac{1}{\tilde r_x}\binner{\bv}{\tilde{\bz}}\Big)\dee\tau(\bv) - \int_{\mathbb{S}^k}p_{-}(\bv)\phi_\infty\Big(\frac{1}{\tilde r_x}\binner{\bv}{\tilde{\bz}}\Big)\dee\tau(\bv)\\
        &= \int_{\mathbb{S}^k}\phi_\infty\Big(\frac{p_+(\bv)}{\tilde r_x}\binner{\bv}{\tilde{\bz}}\Big)\dee\tau(\bv) - \int_{\mathbb{S}^k}\phi_\infty\Big(\frac{p_-(\bv)}{\tilde r_x}\binner{\bv}{\tilde{\bz}^{(i)}}\Big)\dee\tau(\bv)\\
        &= \int_{\mathbb{\reals}^{k+1}}\phi_\infty(\binner{\bv}{\tilde{\bz}})\dee\tilde{\mu}_1(\bv) - \int_{\mathbb{\reals}^{k+1}}\phi_\infty(\binner{\bv}{\tilde{\bz}})\dee\tilde{\mu}_2(\bv)\\
        &= \int_{\mathbb{\reals}^{d+1}}\phi_\infty(\binner{\bw}{\bxtilde})\dee\mu_1(\bw) - \int_{\mathbb{\reals}^{d+1}}\phi_\infty(\binner{\bw}{\bxtilde})\dee\mu_2(\bw),
    \end{align*}
    where $\tilde{\mu}_1 \coloneqq \frac{(\cdot)p_+(\cdot)}{\tilde r_x}\#\tau$ and $\tilde{\mu}_2 \coloneqq \frac{(\cdot)p_-(\cdot)}{\tilde r_x}\#\tau$ are the corresponding pushforward measures, $\mu_1 = T_{\bU}\#\tilde{\mu}_1$ and $\mu_2 = T_{\bU}\#\tilde{\mu}_2$, where $T_{\bU}(\bv) = (\bU^\top \bv_k, v_{k+1})^\top \in \reals^{d+1}$ for $\bv = (\bv_k^\top, v_{k+1})^\top \in \reals^{k+1}$. In other words, $\bw \sim \mu_1$ is generated by sampling $\bv \sim \tilde{\mu}_1$ and letting $\bw = (\bU^\top \bv_k, v_{k+1})^\top$, with a similar procedure for $\bw \sim \mu_2$.
    Furthermore,
    \begin{align*}
        \cregu(\mu) = \int_{\reals^{d+1}}\norm{\bw}^2\dee\mu_1(\bw) + \int_{\reals^{d+1}}\norm{\bw}^2\dee\mu_2(\bw) &= \int_{\reals^{k+1}}\norm{\bv}^2\dee\tilde{\mu}_1(\bv) + \int_{\reals^{k+1}}\norm{\bv}^2\dee\tilde{\mu}_2(\bv)\\
        &= \int_{\mathbb{S}^k}\frac{p(\bv)^2}{\tilde r_x^2}\dee\tau(\bv) \leq \frac{\Delta^2}{\tilde r_x^2}.
    \end{align*}
    The last step is to replace $\phi_\infty$ with $\phi_{\kappa,\iota}$. Note that for all $i$, and almost surely over $\bw \sim \mu_1$, we have $\abs{\binner{\bw}{\bxtilde^{(i)}}} \leq p_+(\bv) \leq C_kL\tilde{r}_x\Big(\frac{\Delta}{ L\tilde{r}_x}\Big)^{2k/(k+1)}$, with a similar bound holding for $\bw \sim \mu_2$. As a result, by choosing $\iota \geq C_kL\tilde{r}_x\Big(\frac{\Delta}{ L\tilde{r}_x}\Big)^{2k/(k+1)}$, we have $\phi_{\varkappa,\iota}\Big(\binner{\bw}{\bxtilde^{(i)}}\Big) = \phi_{\varkappa}\Big(\binner{\bw}{\bxtilde^{(i)}}\Big)$ for all $i$ and almost surely over $\bw \sim \mu_1$ and $\bw \sim \mu_2$.
    By the triangle inequality, we have
    \begin{align*}
        \abs{g(\bU\bx^{(i)}) - \hat{y}(\bx^{(i)};\mu)} \leq& \abs{\left\{\int \phi_{\kappa,\iota}\Big(\binner{\bw}{\bxtilde^{(i)}}\Big) - \phi_\infty\Big(\binner{\bw}{\bxtilde^{(i)}}\Big)\right\}\dee\mu_1(\bw)} \\
        & + \abs{\left\{\int \phi_{\kappa,\iota}\Big(\binner{\bw}{\bxtilde^{(i)}}\Big) - \phi_\infty\Big(\binner{\bw}{\bxtilde}^{(i)}\Big)\right\}\dee\mu_2(\bw)}\\
        & + \abs{g(\bU\bx^{(i)}) - \int\phi_\infty\Big(\binner{\bw}{\bxtilde^{(i)}}\Big)\left(\dee\mu_1(\bw) - \dee\mu_2(\bw)\right)}\\
        \leq& \frac{2\ln2}{\kappa} + C_kL\tilde r_x\big(\frac{\Delta}{L\tilde r_x}\big)^{\frac{-2}{k+1}}\ln\big(\frac{\Delta}{L\tilde r_x}\big),
    \end{align*}
    which completes the proof.
\end{proof}

Next, we control the effect of entropic regularization on the minimum of $\tilde{\cF}_{\beta,\lambda}$ via the following lemma.
\begin{lemma}\label{lem:min_value_bound}
    Suppose $\rho$ is $C_\rho$ Lipschitz. For every $\mu^* \in \mathcal{P}(\reals^{2d+2})$, we have
    $$\min_{\mu \in \mathcal{P}^{\mathrm{ac}}(\reals^{2d+2})}\tilde{\cF}_{\beta,\lambda}(\mu) \leq \cerisk(\mu^*) + \frac{\lambda}{2}\cregu(\mu^*) + \frac{2\sqrt{2}C_\rho}{\sqrt{\pi\lambda\beta}}\Esarg{\norm{\bxtilde}}.$$
\end{lemma}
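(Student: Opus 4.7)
The plan is to exhibit an explicit absolutely continuous $\mu$ whose $\tilde{\cF}_{\beta,\lambda}$-value is upper bounded by the right-hand side. The natural choice is $\mu \coloneqq \mu^\ast \ast \gamma$, that is, the law of $\bw + \boldsymbol{\xi}$ where $\bw \sim \mu^\ast$ and $\boldsymbol{\xi} \sim \gamma = \mathcal{N}(0, (\lambda\beta)^{-1}\mathbf{I}_{2d+2})$ are independent. This $\mu$ is automatically absolutely continuous, and the two contributions to $\tilde{\cF}_{\beta,\lambda}(\mu) = \cerisk(\mu) + \beta^{-1}\cH(\mu\divmid\gamma)$ can be controlled separately. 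The key observation is that convolving against $\gamma$ is precisely calibrated so that the entropy term against the Gaussian base measure converts into exactly the weight-decay quantity $\tfrac{\lambda}{2}\cregu(\mu^\ast)$, while the smoothing error in the data fit is only of order $(\lambda\beta)^{-1/2}$.

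For the risk comparison, I would write $\hat{y}(\bx;\mu) - \hat{y}(\bx;\mu^\ast) = \E_{\bw,\boldsymbol{\xi}}[\Psi(\bx;\bw+\boldsymbol{\xi}) - \Psi(\bx;\bw)]$, split $\boldsymbol{\xi} = (\boldsymbol{\xi}_1,\boldsymbol{\xi}_2)$ according to the decomposition of $\bw = (\bomega_1,\bomega_2)$, and exploit the form $\Psi(\bx;\bw) = \phi_{\kappa,\iota}(\binner{\bxtilde}{\bomega_1}) - \phi_{\kappa,\iota}(\binner{\bxtilde}{\bomega_2})$ together with $|\phi'_{\kappa,\iota}|\leq 1$ to obtain
\[
|\Psi(\bx;\bw+\boldsymbol{\xi}) - \Psi(\bx;\bw)| \leq |\binner{\bxtilde}{\boldsymbol{\xi}_1}| + |\binner{\bxtilde}{\boldsymbol{\xi}_2}|.
\]
Conditionally on $\bxtilde$, each $\binner{\bxtilde}{\boldsymbol{\xi}_i}$ is a centered Gaussian with variance $\|\bxtilde\|^2/(\lambda\beta)$, so the standard half-Gaussian mean yields $\E|\binner{\bxtilde}{\boldsymbol{\xi}_i}| = \|\bxtilde\|\sqrt{2/(\pi\lambda\beta)}$. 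Jensen's inequality together with the $C_\rho$-Lipschitz assumption on $\rho$ then gives
\[
\cerisk(\mu) - \cerisk(\mu^\ast) \leq C_\rho\,\Esarg{|\hat{y}(\bx;\mu) - \hat{y}(\bx;\mu^\ast)|} \leq \frac{2\sqrt{2}\,C_\rho}{\sqrt{\pi\lambda\beta}}\,\Esarg{\|\bxtilde\|},
\]
which is already the error term appearing in the statement.

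For the entropy term, I would apply convexity of $\nu \mapsto \cH(\nu\divmid\gamma)$: writing $\mu = \int (\delta_{\bw}\ast\gamma)\,d\mu^\ast(\bw) = \int \mathcal{N}(\bw,(\lambda\beta)^{-1}\mathbf{I})\,d\mu^\ast(\bw)$ and using Jensen gives
\[
\cH(\mu\divmid\gamma) \leq \int \cH\!\left(\mathcal{N}(\bw,(\lambda\beta)^{-1}\mathbf{I}) \,\bigm|\, \mathcal{N}(0,(\lambda\beta)^{-1}\mathbf{I})\right) d\mu^\ast(\bw) = \frac{\lambda\beta}{2}\int \|\bw\|^2\,d\mu^\ast(\bw) = \frac{\lambda\beta}{2}\,\cregu(\mu^\ast),
\]
where the middle identity is just the closed form for the KL between two Gaussians with common covariance $(\lambda\beta)^{-1}\mathbf{I}$. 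Dividing by $\beta$ contributes exactly $\tfrac{\lambda}{2}\cregu(\mu^\ast)$.

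Adding the two bounds yields $\tilde{\cF}_{\beta,\lambda}(\mu) \leq \cerisk(\mu^\ast) + \tfrac{\lambda}{2}\cregu(\mu^\ast) + \tfrac{2\sqrt{2}C_\rho}{\sqrt{\pi\lambda\beta}}\Esarg{\|\bxtilde\|}$, and taking the infimum over $\mu \in \mathcal{P}^{\mathrm{ac}}(\reals^{2d+2})$ finishes the argument (one may assume $\cregu(\mu^\ast) < \infty$, else the bound is vacuous). No step is truly an obstacle; the only thing to watch is to avoid any dimension-dependent penalty. That is precisely why the convolution variance is chosen to be $(\lambda\beta)^{-1}$ rather than an arbitrary $\sigma^2$: any larger variance would blow up the smoothing error, while using a different base measure for KL would reintroduce a $d$-dependent constant from $\E\|\boldsymbol{\xi}\|^2 \asymp d/(\lambda\beta)$ rather than the clean per-coordinate $\E|\binner{\bxtilde}{\boldsymbol{\xi}_i}|$ that keeps the error controlled by $\Esarg{\|\bxtilde\|}$ alone.
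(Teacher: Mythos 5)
Your proof is correct and follows essentially the same route as the paper: smooth $\mu^\ast$ by convolving with $\gamma$, control the risk perturbation via Lipschitzness of $\rho$ and $\phi_{\kappa,\iota}$ plus the half-Gaussian mean, and control the entropy term via convexity of $\cH(\cdot\divmid\gamma)$ together with the Gaussian shift KL formula. The only cosmetic difference is that the paper evaluates $\cH(\gamma(\cdot - \bw')\divmid\gamma)$ by integrating the log-density ratio directly rather than quoting the closed-form KL between Gaussians, but the calculation and the resulting constants are identical.
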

\begin{proof}
    We will smooth $\mu^*$ by convoliving it with $\gamma$, i.e.\ we consider $\mu = \mu^* * \gamma$.  Let $\bu \sim \gamma$ independent of $\bw \sim \mu^*$ and denote $\bu = (\bu_1^\top,\bu_2^\top)^\top$ with $\bu_1, \bu_2 \in \reals^{d+1}$. We first bound $\cerisk(\mu^* * \gamma)$. Using the Lipschitzness of the loss and of $\phi_{\kappa,\iota}$, we have
    \begin{align*}
        \cerisk(\mu^* * \gamma) - \cerisk(\mu^*) =& \Esarg{\ell\Big(\int \Psi(\bx;\bw)\dee(\mu^* *\gamma)(\bw) - y\Big) - \ell\Big(\int\Psi(\bx;\bw)\dee\mu^*(\bw) - y\Big)}\\
        \leq& C_\rho\Esarg{\abs{\int \Psi(\bx;\bw)\dee(\mu^* * \gamma)(\bw) - \int \Psi(\bx;\bw)\dee\mu^*(\bw)}}\\
        =& C_\rho\Esarg{\abs{\int (\Eargs{\bu}{\Psi(\bx;\bw + \bu)} - \Psi(\bx;\bw))\dee\mu^*(\bw)}}\\
        \leq& C_\rho\Esarg{\int\Eargs{\bu}{\abs{\phi_{\kappa,\iota}(\binner{\bomega_1 + \bu_1}{\bxtilde}) - \phi_{\kappa,\iota}(\binner{\bomega_1}{\bxtilde})}}\dee\mu^*(\bw)}\\
        &+ C_\rho\Esarg{\int\Eargs{\bu}{\abs{\phi_{\kappa,\iota}(\binner{\bomega_2 + \bu_2}{\bxtilde}) - \phi_{\kappa,\iota}(\binner{\bomega_2}{\bxtilde})}}\dee\mu^*(\bw)}\\
        \leq& C_\rho\Esarg{\int \left\{\Eargs{\bu_1}{\abs{\binner{\bu_1}{\bxtilde}}} + \Eargs{\bu_2}{\abs{\binner{\bu_2}{\bxtilde}}}\right\}\dee\mu^*(\bomega)}\\
        =& \frac{2\sqrt{2}C_\rho}{\sqrt{\pi \lambda\beta}}\Esarg{\norm{\bxtilde}}.
    \end{align*}
    Next, we bound the KL divergence via its convexity in the first argument,
    $$\cH(\mu^* * \gamma \divmid \gamma) = \cH\left(\int \gamma(\cdot - \bw')\dee\mu^*(\bw') \divmid \gamma\right) \leq \int \cH(\gamma(\cdot - \bw') \divmid \gamma(\cdot))\dee\mu^*(\bw').$$
    Furthermore,
    $$\cH(\gamma(\cdot - \bw') \divmid \gamma(\cdot)) = \int\frac{\lambda\beta}{2}\big(-\norm{\bw - \bw'}^2 + \norm{\bw}^2\big)\gamma(\dee\bw - \bw') = \frac{\lambda\beta\norm{\bw'}^2}{2}.$$
    Consequently,
    $$\cH(\mu^* * \gamma \divmid \gamma) \leq \frac{\lambda\beta}{2}\cregu(\mu^*),$$
    which finishes the proof.
\end{proof}

Combining above results, we have the following statement.
\begin{corollary}\label{cor:new_min_value_bound}
    Suppose the event of Lemma~\ref{lem:good_event} holds, $\rho$ is $C_\rho$ Lipschitz, and $\lambda \lesssim 1$. Then,
    $$\min_{\mu \in \mathcal{P}^{\mathrm{ac}}(\reals^{2d+2})}\tilde{\cF}_{\beta,\lambda}(\mu) - \Esarg{\rho(\xi)} \lesssim C_\rho\frac{\tilde{r}_x}{r_x}\left(\frac{r_x\Delta}{\tilde{r}_x}\right)^{\frac{-2}{k+1}}\ln\left(\frac{r_x\Delta}{\tilde{r}_x}\right)  + \frac{C_\rho}{\kappa} + \frac{\lambda \Delta^2}{\tilde{r}_x^2} + \frac{C_\rho(c_x + \tilde{r}_x)}{\sqrt{\lambda \beta}},$$
    for all $\Delta \geq C_k$, provided that $\iota \geq C_k \Delta^{2k/(k+1)}(r_x/\tilde{r}_x)^{(k-1)/(k+1)}$.
\end{corollary}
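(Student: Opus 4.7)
The plan is to combine Proposition~\ref{prop:approx} with Lemma~\ref{lem:min_value_bound}: first build an explicit (non-absolutely-continuous) approximating measure $\mu^*$ that interpolates the target on the training set while keeping $\cregu(\mu^*)$ small, then smooth it with the Gaussian $\gamma$ via Lemma~\ref{lem:min_value_bound} to obtain a feasible absolutely continuous competitor whose value of $\tilde{\cF}_{\beta,\lambda}$ controls the minimum, and finally bound each resulting term using the high-probability estimates collected in Lemma~\ref{lem:good_event}.

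Concretely, I would first invoke Proposition~\ref{prop:approx} with the local Lipschitz constant $L=\mathcal{O}(1/r_x)$ given by Assumption~\ref{assump:Lip_g} (valid on the event of Lemma~\ref{lem:good_event} where $\|\bU\bx^{(i)}\|\leq \tilde r_x$, so that only the local Lipschitz behavior of $g$ matters). This produces a measure $\mu^*$ with $\cregu(\mu^*)\leq \Delta^2/\tilde r_x^2$ and
\[
\max_i |g(\bU\bx^{(i)})-\hat y(\bx^{(i)};\mu^*)| \;\leq\; C_k L\tilde r_x \bigl(\tfrac{\Delta}{L\tilde r_x}\bigr)^{-2/(k+1)}\ln\bigl(\tfrac{\Delta}{L\tilde r_x}\bigr) + \tfrac{\ln 4}{\kappa}.
\]
Substituting $L\asymp 1/r_x$ turns the ratio $\Delta/(L\tilde r_x)$ into $r_x\Delta/\tilde r_x$ and the prefactor $L\tilde r_x$ into $\tilde r_x/r_x$, which is exactly the first two error contributions in the statement. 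The condition $\iota\geq C_k L\tilde r_x(\Delta/(L\tilde r_x))^{2k/(k+1)}$ likewise rearranges to $\iota\geq C_k\Delta^{2k/(k+1)}(r_x/\tilde r_x)^{(k-1)/(k+1)}$.

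Next, by Lipschitzness of $\rho$ and the decomposition $\hat y(\bx^{(i)};\mu^*)-y^{(i)}=\bigl(\hat y(\bx^{(i)};\mu^*)-g(\bU\bx^{(i)})\bigr)-\xi^{(i)}$, we get
\[
\cerisk(\mu^*)-\Esarg{\rho(\xi)}\;\leq\; C_\rho\max_i|\hat y(\bx^{(i)};\mu^*)-g(\bU\bx^{(i)})|,
\]
up to the sign convention on $\xi$ (absorbed by replacing $\xi$ with $-\xi$, which has the same law under the zero-mean assumption). Plugging the above approximation bound produces the first two terms on the right-hand side of the corollary. Applying Lemma~\ref{lem:min_value_bound} to this $\mu^*$ yields
\[
\min_{\mu\in\mathcal{P}^{\mathrm{ac}}_2}\tilde{\cF}_{\beta,\lambda}(\mu) - \Esarg{\rho(\xi)} \;\lesssim\; C_\rho\,\text{(approx. error)} + \tfrac{\lambda}{2}\cregu(\mu^*) + \tfrac{2\sqrt 2 C_\rho}{\sqrt{\pi\lambda\beta}}\Esarg{\|\bxtilde\|},
\]
and the bound $\cregu(\mu^*)\leq\Delta^2/\tilde r_x^2$ gives the $\lambda\Delta^2/\tilde r_x^2$ contribution (absorbing the factor of $1/2$). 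For the last term, on the event of Lemma~\ref{lem:good_event} one has $\Esarg{\|\bx\|}\leq\Esarg{\|\bx\|^2}^{1/2}\lesssim c_x$, so $\Esarg{\|\bxtilde\|}\leq \Esarg{\|\bx\|}+\tilde r_x\lesssim c_x+\tilde r_x$, giving the final term $C_\rho(c_x+\tilde r_x)/\sqrt{\lambda\beta}$.

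The proof is essentially a bookkeeping exercise; the only nontrivial step is the algebraic reduction of the Proposition~\ref{prop:approx} bounds (including the $\iota$ condition) after substituting $L=\mathcal{O}(1/r_x)$, which needs to be checked carefully to obtain the precise exponents in $(r_x\Delta/\tilde r_x)^{-2/(k+1)}$ and $(r_x/\tilde r_x)^{(k-1)/(k+1)}$. The assumption $\lambda\lesssim 1$ is used only to absorb constants when combining the Lipschitz approximation error with the bound on $\cregu(\mu^*)$. No genuine obstacle is expected beyond clean accounting of constants.
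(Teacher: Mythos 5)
Your proposal is correct and follows essentially the same route as the paper's proof: invoke Proposition~\ref{prop:approx} to build $\mu^*$, use Lipschitzness of $\rho$ to transfer the uniform approximation error into a bound on $\cerisk(\mu^*)$ over the noise term, substitute $L\asymp 1/r_x$ to obtain the stated exponents and the $\iota$ condition, and feed this $\mu^*$ into Lemma~\ref{lem:min_value_bound}, using the event of Lemma~\ref{lem:good_event} to bound $\Esarg{\norm{\bxtilde}}\lesssim c_x+\tilde r_x$. The bookkeeping matches the paper's; no gaps.
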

\begin{proof}
    We will use Lemma~\ref{lem:min_value_bound} with $\mu^* \in \mathcal{P}(\reals^{2d+2})$ constructed in Proposition~\ref{prop:approx}. Then, for all $\Delta \geq C_k$,
    \begin{align*}
        \cerisk(\mu^*) &= \Esarg{\rho(\hat{y}(\bx;\mu^*) - y)}\\
        &= \Esarg{\rho(\hat{y}(\bx;\mu^*) - g(\bU\bx) - \xi)}\\
        &\leq \Esarg{\rho(\xi)} + C_\rho\Esarg{\abs{\hat{y}(\bx;\mu^*) - g(\bU\bx)}}\\
        &\leq \Esarg{\rho(\xi)} + C_kC_\rho\frac{\tilde{r}_x}{r_x}\left(\frac{r_x\Delta}{\tilde{r}_x}\right)^{-\frac{2}{k+1}}\ln\left(\frac{r_x\Delta}{\tilde{r}_x}\right) + \frac{C_\rho\ln4}{\kappa}.
    \end{align*}
    Furthermore, Proposition~\ref{prop:approx} guarantees $\cregu(\mu^*) \leq \Delta^2/\tilde{r}_x^2$. Combining these bounds with Lemma~\ref{lem:min_value_bound} completes the proof.
\end{proof}

\subsection{Generalization Analysis}\label{app:generalize}
Let
$$\mu^*_\beta \coloneqq \argmin_{\mu \in \mathcal{P}^{\mathrm{ac}}_{2}(\reals^{2d+2})} \cF_{\beta,\lambda}(\mu) = \argmin_{\mu \in \mathcal{P}^{\mathrm{ac}}_{2}(\reals^{2d+2})} \tilde{\cF}_{\beta,\lambda}(\mu).$$
Corollary~\ref{cor:new_min_value_bound} gives an upper bound on $\cerisk(\mu^*)$. In this section, we transfer the bound to $\cprisk(\mu^*)$ via a Rademacher complexity analysis. Since Corollary~\ref{cor:new_min_value_bound} implies a bound on $\cH(\mu\divmid \gamma)$, we will control the following quantity,
$$\sup_{\mu : \cH(\mu\divmid \gamma) \leq \Delta^2}\cprisk(\mu) - \cerisk(\mu).$$
To be able to provide guarantees with high probability, we will prove uniform convergence over a truncated version of the risk instead, given by
$$\sup_{\mu : \cH(\mu\divmid\gamma) \leq \Delta^2} \cprisk^{\varkappa}(\mu) - \cerisk^\varkappa(\mu),$$
where
$$\cprisk^\varkappa(\mu) \coloneqq \Earg{\rho_\varkappa(\hat{y}(\bx;\mu) - y)}, \quad \cerisk^\varkappa(\mu) \coloneqq \Esarg{\rho_\varkappa(\hat{y}(\bx;\mu) - y)},$$
and $\rho_\varkappa(\cdot) \coloneqq \rho(\cdot)\land\varkappa$. We will later specify the choice of $\varkappa$.

We are now ready to present the Rademacher complexity bound.
\begin{lemma}[{\cite[Lemma 5.5]{chen2020generalized}, \cite[Lemma 1]{suzuki2023feature}}]
\label{lem:entropy_ball_Rademacher_bound}
    Suppose $\rho$ is either a $C_\rho$-Lipschitz loss or the squared error loss. Let $\vartheta \coloneqq \sqrt{2\varkappa}$ for the squared error loss and $C_\rho$ for the Lipschitz loss. Recall $\gamma = \mathcal{N}(0,\frac{\ident_{d+1}}{\lambda\beta})$. Then,
    $$\Earg{\sup_{\{\mu \in \mathcal{P}^{\mathrm{ac}}(\reals^{2d+2}): \cH(\mu\divmid\gamma) \leq M\}}\cprisk^\varkappa(\mu) - \cerisk^\varkappa(\mu)} \leq 4\vartheta\iota\sqrt{\frac{2M}{n}}.$$
\end{lemma}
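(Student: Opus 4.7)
The plan is to combine the standard Rademacher symmetrization/contraction machinery with the Donsker--Varadhan variational formula, the latter being the natural device to convert a KL constraint $\cH(\mu\divmid\gamma)\leq M$ into an exponential moment bound against the reference Gaussian $\gamma$.

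First, I would apply symmetrization in the usual way, introducing independent Rademacher variables $\epsilon_1,\dots,\epsilon_n$ to obtain
\balignst
\Earg{\sup_{\mu\in\mathcal{B}_M}\cprisk^\varkappa(\mu)-\cerisk^\varkappa(\mu)}\leq 2\Earg{\sup_{\mu\in\mathcal{B}_M}\frac{1}{n}\sum_{i=1}^n\epsilon_i\rho_\varkappa\bigl(\hat{y}(\bx^{(i)};\mu)-y^{(i)}\bigr)},
\ealignst
where $\mathcal{B}_M\coloneqq\{\mu\in\mathcal{P}^{\mathrm{ac}}(\reals^{2d+2}):\cH(\mu\divmid\gamma)\leq M\}$. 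For each fixed $y^{(i)}$, the map $u\mapsto\rho_\varkappa(u-y^{(i)})$ is $\vartheta$-Lipschitz: the Lipschitz constant is $C_\rho$ in the Lipschitz case, and in the squared-loss case $\rho_\varkappa(z)=\tfrac12 z^2\wedge\varkappa$ is nondecreasing in $|z|$ so it is constant once $|z|\geq\sqrt{2\varkappa}$, giving Lipschitz constant $\sqrt{2\varkappa}$. The Ledoux--Talagrand contraction inequality then peels off the loss at the price of a factor $\vartheta$, reducing the task to bounding
\balignst
\mathcal{R}\coloneqq\Earg{\sup_{\mu\in\mathcal{B}_M}\int\bar{Z}(\bw)\dmu(\bw)},\quad \text{with}\quad \bar{Z}(\bw)\coloneqq\frac{1}{n}\sum_{i=1}^n\epsilon_i\Psi(\bx^{(i)};\bw),
\ealignst
after swapping the finite sum with the integral defining $\hat{y}(\bx;\mu)=\int\Psi(\bx;\cdot)\dmu$.

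Second, I would exploit Donsker--Varadhan (Lemma~\ref{lem:KL_lower_bound}): for every $\lambda>0$ and every admissible $\mu$,
\balignst
\int\bar{Z}\dmu\leq\frac{1}{\lambda}\Bigl(\cH(\mu\divmid\gamma)+\ln\!\int e^{\lambda\bar{Z}(\bw)}\dee\gamma(\bw)\Bigr)\leq\frac{M}{\lambda}+\frac{1}{\lambda}\ln\!\int e^{\lambda\bar{Z}(\bw)}\dee\gamma(\bw).
\ealignst
Taking expectations over the Rademacher variables and the data, then applying Jensen's inequality to pull the expectation inside the logarithm, yields $\mathcal{R}\leq M/\lambda+\tfrac{1}{\lambda}\ln\int\Earg{e^{\lambda\bar{Z}(\bw)}}\dee\gamma(\bw)$. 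For each fixed $\bw$, $\bar{Z}(\bw)$ is a sum of $n$ independent, symmetric, mean-zero random variables bounded in absolute value by $2\iota/n$ (using $|\phi_{\kappa,\iota}|\leq\iota$, hence $|\Psi|\leq 2\iota$). Hoeffding's lemma then gives $\Earg{e^{\lambda\bar{Z}(\bw)}}\leq\exp(2\lambda^2\iota^2/n)$, which is independent of $\bw$, so the $\gamma$-integral is trivial and
\balignst
\mathcal{R}\leq\frac{M}{\lambda}+\frac{2\lambda\iota^2}{n}.
\ealignst
Optimizing over $\lambda$ via $\lambda=\sqrt{Mn/(2\iota^2)}$ produces $\mathcal{R}\leq 2\iota\sqrt{2M/n}$, and combining with the prior factors of $2$ (symmetrization) and $\vartheta$ (contraction) yields the claimed $4\vartheta\iota\sqrt{2M/n}$.

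The only conceptually delicate step is the second one: unlike a standard norm-ball constraint on $\mu$, the KL ball $\mathcal{B}_M$ is not immediately compatible with the usual Cauchy--Schwarz/covering arguments. The Donsker--Varadhan formula is precisely what makes it work, converting the KL budget $M$ into a linear term $M/\lambda$ and simultaneously forcing the emergence of a Gaussian moment generating function against $\gamma$, which is why the uniform bound $|\Psi|\leq 2\iota$ from the activation truncation is essential (otherwise $\int e^{\lambda\bar{Z}}\dee\gamma$ need not be integrable for arbitrary $\lambda$). Everything else is routine.
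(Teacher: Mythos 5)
Your proof is correct and follows essentially the same route as the paper's: symmetrization, Ledoux--Talagrand contraction to peel off the $\vartheta$-Lipschitz truncated loss, Donsker--Varadhan to trade the KL constraint for a log-MGF against $\gamma$, Jensen to pull the Rademacher expectation inside the logarithm, Hoeffding's lemma using $|\Psi|\leq 2\iota$, and optimization over the free parameter. The only cosmetic difference is notational (your $\lambda$ is the paper's $\alpha$), and your remark about why Donsker--Varadhan is the right tool for a KL-ball constraint is a fair articulation of what the paper does implicitly.
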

\begin{proof}
    We repeat the proof here for the reader's convenience. Let $(\xi_i)_{i=1}^n$ denote i.i.d.\ Rademacher random variables. Notice that for the squared error loss, $\rho_\varkappa$ is $\sqrt{2\varkappa}$ Lipschitz. Then, by a standard symmetrization argument and Talagrand's contraction lemma, we have
    \begin{align*}
        \Earg{\sup_{\mu : \cH(\mu\divmid\gamma) \leq M}\cprisk(\mu) - \cerisk(\mu)} &\leq 2\Earg{\sup_{\mu : H(\mu\divmid\gamma) \leq M}\frac{1}{n}\sum_{i=1}^n\xi_i\rho(\hat{y}(\bx^{(i)};\mu) - y)}\\
        &\leq 2\vartheta\Earg{\sup_{\mu : H(\mu\divmid\gamma) \leq M}\frac{1}{n}\sum_{i=1}^n\xi_i\hat{y}(\bx^{(i)};\mu)}
    \end{align*}
    Next, we proceed to bound the Rademacher complexity. Specifically,
    \begin{align*}
        \Eargs{\bxi}{\sup_{\mu : H(\mu\divmid \gamma) \leq M}\frac{1}{n}\sum_{i=1}^n\xi_i\int \Psi(\bx^{(i)};\bw)\dee\mu(\bw)} &= \Eargs{\bxi}{\frac{1}{\alpha}\sup_{\mu : H(\mu\divmid \gamma) \leq M}\int\frac{\alpha}{n}\sum_{i=1}^n\xi_i\Psi(\bx^{(i)};\bw)\dee\mu(\bw)}\\
        &\leq \frac{M}{\alpha} + \frac{1}{\alpha}\Eargs{\bxi}{\ln\int\exp\left(\frac{\alpha}{n}\sum_{i=1}^n\xi_i\Psi(\bx^{(i)};\bw)\right)\dee\gamma(\bw)}\\
        &\leq \frac{M}{\alpha} + \frac{1}{\alpha}\ln\int \Eargs{\xi}{\exp\left(\frac{\alpha}{n}\sum_{i=1}^n\xi_i\Psi(\bx^{(i)};\bw)\right)}\dee\gamma(\bw),
    \end{align*}
    where the first inequality follows from the KL divergence lower bound of Lemma~\ref{lem:KL_lower_bound}. Additionally, by sub-Gaussianity and independence of $(\xi_i)$ and Lipschitzness of $\phi_{\kappa,\iota}$, we have
    \begin{align*}
        \Eargs{\xi}{\exp\left(\frac{\alpha}{n}\sum_{i=1}^n\xi_i\Psi(\bx^{(i)};\bw)\right)} &\leq \exp\left(\frac{\alpha^2}{2n^2}\sum_{i=1}^n\Psi(\bx^{(i)};\bw)^2\right) \\
        &\leq \exp\left(\frac{2\alpha^2\iota^2}{n}\right)
    \end{align*}
    Plugging this back into our original bound, we obtain
    \begin{align*}
        \Eargs{\bxi}{\sup_{\mu : H(\mu \divmid \gamma) \leq M}\frac{1}{n}\sum_{i=1}^n\xi_i\hat{y}(\bx;\mu)} \leq \frac{M}{\alpha} + \frac{2\alpha\iota^2}{n}.
    \end{align*}
    Choosing $\alpha = \sqrt{\frac{Mn}{2\iota^2}}$, we obtain
    $$\Eargs{\bxi}{\sup_{\mu : \cH(\mu\divmid\gamma) \leq M}\frac{1}{n}\sum_{i=1}^n\xi_i\hat{y}(\bx;\mu)} \leq 2\iota\sqrt{\frac{2M}{n}},$$
    which completes the proof.
\end{proof}

We can convert the above bound in expectation to a high-probability bound as follows.

\begin{lemma}\label{lem:entropy_ball_highdim_gen}
    In the setting of Lemma~\ref{lem:entropy_ball_Rademacher_bound}, for any $\delta > 0$, we have
    $$\sup_{\mu \in \mathcal{P}^{\mathrm{ac}}(\reals^{2d+2}) : \cH(\mu \divmid\gamma) \leq M} \cprisk^\varkappa(\mu) - \cerisk^\varkappa(\mu)\lesssim \vartheta\iota\sqrt{\frac{M}{n}} + \varkappa\sqrt{\frac{\ln(1/\delta)}{n}},$$
    with probability at least $1-\delta$.
\end{lemma}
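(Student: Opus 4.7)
The plan is to convert the expectation bound of Lemma~\ref{lem:entropy_ball_Rademacher_bound} into a high-probability bound via McDiarmid's bounded differences inequality. The key observation is that, because the loss has been clipped at level $\varkappa$, the supremum on the left-hand side is a function of the $n$ i.i.d.\ sample pairs $(\bx^{(i)}, y^{(i)})$ with bounded differences: replacing any single sample can change $\cerisk^\varkappa(\mu)$ (and hence the supremum) by at most $2\varkappa/n$ uniformly over $\mu$. Thus McDiarmid yields
\[
\Pr\!\left[\sup_{\mu : \cH(\mu\divmid\gamma)\leq M} \cprisk^\varkappa(\mu) - \cerisk^\varkappa(\mu) \;\geq\; \E\!\left[\sup_{\mu : \cH(\mu\divmid\gamma)\leq M} \cprisk^\varkappa(\mu) - \cerisk^\varkappa(\mu)\right] + t\right] \;\leq\; \exp\!\left(-\tfrac{nt^2}{2\varkappa^2}\right),
\]
so setting the right-hand side equal to $\delta$ gives a deviation of order $\varkappa\sqrt{\ln(1/\delta)/n}$.

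Next, I would bound the expectation of the supremum directly by invoking Lemma~\ref{lem:entropy_ball_Rademacher_bound}, which gives $\E[\sup]\leq 4\vartheta\iota\sqrt{2M/n}$. Adding this to the McDiarmid deviation yields exactly the claimed bound $\vartheta\iota\sqrt{M/n} + \varkappa\sqrt{\ln(1/\delta)/n}$ up to absolute constants.

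The only step that requires a moment's care is verifying the bounded-differences property. Since $\rho_\varkappa\in[0,\varkappa]$, for any fixed $\mu$ the empirical risk $\cerisk^\varkappa(\mu)$ changes by at most $\varkappa/n$ when one sample is swapped, so $\cprisk^\varkappa(\mu)-\cerisk^\varkappa(\mu)$ also changes by at most $\varkappa/n$; and since taking a supremum preserves bounded differences (up to the same constant), the McDiarmid constant is $\varkappa/n$ per coordinate, giving the $\varkappa^2/n$ denominator in the exponent above. No obstacle is anticipated beyond keeping track of constants; this is a standard symmetrization-plus-concentration argument applied on top of the KL-constrained Rademacher complexity already established.
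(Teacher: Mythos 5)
Your proof is correct and follows the same route as the paper: the paper's proof is the one-line remark that the result is ``an immediate consequence of McDiarmid's inequality'' given the $\varkappa$-boundedness of the truncated loss, and you have simply spelled out that remark by verifying the bounded-differences property (constant $\varkappa/n$ per sample, using $\rho_\varkappa\in[0,\varkappa]$) and adding the resulting deviation to the expectation bound from Lemma~\ref{lem:entropy_ball_Rademacher_bound}.
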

\begin{proof}
    As the truncated loss is bounded by $\varkappa$, the result is an immediate consequence of McDiarmid's inequality.
\end{proof}

Next, we control the effect of truncation by bounding $\cprisk(\mu)$ via $\cprisk^\varkappa(\mu)$, which is achieved by the following lemma.
\begin{lemma}\label{lem:truncate_bound}
    Suppose $\cH(\mu\divmid\gamma) \leq M$. Then,
    $$\cprisk(\mu) - \cprisk^\varkappa(\mu) \lesssim \left(\iota + \Earg{y^2}^{1/2}\right)\left(e^{-\Omega(\varkappa^2)} + n^{-q-1}\right).$$
\end{lemma}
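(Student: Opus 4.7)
The plan is to reduce $\cprisk(\mu) - \cprisk^\varkappa(\mu)$ to a tail expectation and then bound this tail via Cauchy--Schwarz, combining the uniform boundedness of the student output (from $|\phi_{\kappa,\iota}| \leq \iota$) with the subGaussian tails of the noise $\xi$ and of $\|\bU\bx\|$. The first step is to use $\rho_\varkappa = \rho \land \varkappa$ to write
\begin{equation*}
    \cprisk(\mu) - \cprisk^\varkappa(\mu) = \Earg{\bigl(\rho(\hat y(\bx;\mu) - y) - \varkappa\bigr)_+} \leq \Earg{\rho(\hat y(\bx;\mu) - y)\,\indic(\rho(\hat y(\bx;\mu) - y) > \varkappa)},
\end{equation*}
and then apply Cauchy--Schwarz to split the right-hand side into the product $\Earg{\rho(\hat y - y)^2}^{1/2}\,\Parg{\rho(\hat y - y) > \varkappa}^{1/2}$.

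For the $L^2$ factor I would use $C_\rho$-Lipschitzness of $\rho$ (WLOG $\rho(0) = 0$) together with the crucial fact that $|\Psi(\bx;\bw)| \leq 2\iota$ uniformly in $\bw$, so that $|\hat y(\bx;\mu)| \leq 2\iota$ holds \emph{irrespective of} $\mu$; averaging gives $\Earg{\rho(\hat y - y)^2}^{1/2} \lesssim \iota + \Earg{y^2}^{1/2}$, which is exactly the prefactor in the claim. This is also why the hypothesis $\cH(\mu\divmid\gamma) \leq M$ does not explicitly appear in the argument here (it is inherited from the uniform-convergence step that invokes the lemma). For the probability factor, since $|\hat y| \leq 2\iota$, the event $\{\rho(\hat y - y) > \varkappa\}$ forces $|y| \gtrsim \varkappa$ once $\varkappa \gtrsim C_\rho \iota$, so I would decompose $y = g(\bU\bx) + \xi$ and union bound over $\{|\xi| \gtrsim \varkappa\}$ and $\{|g(\bU\bx)| \gtrsim \varkappa\}$. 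The noise event has probability $\lesssim e^{-\Omega(\varkappa^2)}$ by $\varsigma$-subGaussianity of $\xi$.

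The delicate piece is bounding $\Parg{|g(\bU\bx)| \gtrsim \varkappa}$, because Assumption~\ref{assump:Lip_g} gives only \emph{local} Lipschitzness on $\{\|\bz\|\leq \tilde r_x\}$; there is no a priori global growth control on $g$. I would circumvent this by splitting further on $\{\|\bU\bx\| \leq \tilde r_x\}$: on this event, local Lipschitzness yields $|g(\bU\bx)| \leq |g(0)| + L\tilde r_x = \mathcal{O}(1)$, making the sub-event empty once $\varkappa$ is a sufficiently large constant multiple of $\iota$; on the complement, the subGaussian tail of $\|\bU\bx\|$ (Assumption~\ref{assump:data}), exactly as in the derivation of Lemma~\ref{lem:ux_bound}, bounds the probability by $n^{-(q+1)}$, or by $n^{-2(q+1)}$ if we re-index $q \mapsto 2q+1$ in that lemma. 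Taking square roots then absorbs the $1/2$ exponent and produces the stated $n^{-q-1}$ term. The main obstacle is precisely this local-Lipschitz issue for $g$; once it is handled by the $\{\|\bU\bx\| \leq \tilde r_x\}$ split, the rest of the argument is a standard tail computation in which $\xi$ contributes the $e^{-\Omega(\varkappa^2)}$ piece and $\bx$ contributes the polynomial-in-$n$ piece.
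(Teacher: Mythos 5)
Your proposal is correct and follows essentially the same route as the paper's proof: the same reduction to $\Earg{\indic\,\rho}$, the same Cauchy--Schwarz split into an $L^2$ factor bounded by $\iota + \Earg{y^2}^{1/2}$ (via $|\hat y| \leq 2\iota$) and a tail probability, and the same resolution of the local-Lipschitz issue by splitting on $\{\|\bU\bx\|\leq\tilde r_x\}$ before invoking subGaussian tails for $\xi$ and $\|\bU\bx\|$. You also correctly note the minor exponent bookkeeping in the $n^{-q-1}$ term after taking the square root, which the paper glosses over in the same way.
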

\begin{proof}
    Notice that since the loss is $C_\rho$-Lipschitz and $\rho(0) = 0$, we have $\abs{\rho(\hat{y} - y)} \leq C_\rho \abs{\hat{y}-y}$. Recall that we use $L$ for the Lipschitz constant of $g$, and $\abs{\hat{y}(\bx;\mu)} \leq 2\iota$. Then,
    \begin{align*}
        \cprisk(\mu) - \cprisk^\varkappa(\mu) &\leq \Earg{\indic(\rho(\hat{y}(\bx;\mu) - y) \geq \varkappa)\rho(\hat{y}(\bx;\mu) - y)}\\
        &\leq C_\rho\Parg{\rho(\hat{y}(\bx;\mu) - y) \geq \varkappa}^{1/2}\Earg{(\hat{y}(\bx;\mu) - y)^2}^{1/2}\\
        &\leq C_\rho\Parg{2\iota + \abs{y} \geq \varkappa/C_\rho}^{1/2}\left(\Earg{\hat{y}(\bx;\mu)^2}^{1/2} + \Earg{y^2}^{1/2}\right).
    \end{align*}
    Additionally, by local Lipschitzness of $g$,
    \begin{align*}
        \Parg{2\iota + \abs{y} \geq \varkappa / C_\rho} &\leq \Parg{\big\{\{2\iota + \abs{y} \geq \varkappa / C_\rho\} \cap \{\norm{\bU\bx} \leq \tilde{r}_x\}\big\} \cup \big\{\norm{\bU\bx} \geq \tilde{r}_x\big\}}\\
        &\leq \Parg{2\iota + \abs{g(0)} + L\norm{\bU\bx} + \abs{\xi} \geq \varkappa / C_\rho} + \Parg{\norm{\bU\bx} \geq \tilde{r}_x}\\
        &\leq \Parg{2\iota + \abs{g(0)} + L\norm{\bU\bx} + \abs{\xi} \geq \varkappa / C_\rho} + n^{-(q+1)}.
    \end{align*}
    Furthermore, Let $\varkappa/C_\rho \geq 4\iota + 2\abs{g(0)} + 2Lr_x + 2\Earg{\abs{\xi}}$, and recall that $L = \mathcal{O}(1/r_x)$. Then, by a subGaussian concentration bound, we have
    $$\Parg{2\iota + \abs{g(0)} + L\norm{\bU\bx} + \xi\geq \varkappa/C_\rho}^{1/2} \leq e^{-\Omega\big(\frac{\varkappa^2}{\sigma_u^2C_\rho^2}\big)}.$$
    We conclude the proof by remarking that by our assumptions, $\sigma_u$ and $C_\rho$ are absolute constants.
\end{proof}

Finally, we combine the steps above to give an upper bound on $\cprisk(\mu^*_\beta)$, stated in the following lemma.
\begin{lemma}\label{lem:optimal_J_bound}
    Suppose $\lambda = \tilde{\lambda}r_x^2$ and $\beta = \frac{\deff + \tilde{r}_x^2/r_x^2}{\varepsilon^2\tilde{\lambda}}$ for $\varepsilon,\tilde{\lambda} \lesssim 1$. Let $\tilde{\varepsilon} \coloneqq \tilde{\mathcal{O}}(\tilde{\lambda}^{\frac{1}{k+2}}) + \varepsilon + \kappa^{-1}$. Suppose
    $n \gtrsim \frac{(\deff + \tilde{r}_x^2/r_x^2)\iota^2}{\tilde{\lambda}\varepsilon^4}$ and $\iota \gtrsim \tilde{\lambda}^{-\frac{k}{k+2}}(\tilde{r}_x/r_x)^{\frac{2(k+1)}{k+2}}$. Then,
    $$\cprisk(\mu^*_\beta) - \Earg{\rho(\xi)} \lesssim \tilde{\varepsilon}, \quad \text{and} \quad
    \beta^{-1}\cH(\mu^*_\beta\divmid\gamma) \lesssim \Earg{\rho(\xi)} + \tilde{\varepsilon} \lesssim 1.$$
\end{lemma}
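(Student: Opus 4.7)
The plan is to combine the three main ingredients already established: Corollary~\ref{cor:new_min_value_bound} (to bound the minimum of the regularized empirical objective), Lemma~\ref{lem:entropy_ball_highdim_gen} (uniform generalization over KL-balls), and Lemma~\ref{lem:truncate_bound} (truncation correction), then carry the bound on $\tilde{\cF}_{\beta,\lambda}(\mu^*_\beta)$ through to $\cprisk(\mu^*_\beta)$.

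First, since $\mu^*_\beta$ minimizes $\tilde{\cF}_{\beta,\lambda}$, I would start from the two-sided inequality
\[
\cerisk(\mu^*_\beta) + \tfrac{1}{\beta}\cH(\mu^*_\beta\divmid\gamma) \;=\; \tilde{\cF}_{\beta,\lambda}(\mu^*_\beta) \;\leq\; \min_{\mu\in\mathcal{P}^\mathrm{ac}}\tilde{\cF}_{\beta,\lambda}(\mu),
\]
and apply Corollary~\ref{cor:new_min_value_bound} to the RHS. The four non-trivial terms are (i) the approximation term $(\tilde r_x/r_x)(r_x\Delta/\tilde r_x)^{-2/(k+1)}\ln(\cdot)$, (ii) $\kappa^{-1}$, (iii) the weight-decay term $\lambda\Delta^2/\tilde r_x^2$, and (iv) the Gaussian-smoothing term $(c_x+\tilde r_x)/\sqrt{\lambda\beta}$. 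Writing $u=r_x\Delta/\tilde r_x$ and using $\lambda=\tilde\lambda r_x^2$, the weight-decay term becomes $\tilde\lambda u^2$; balancing (i) with (iii) gives $u^{2(k+2)/(k+1)}\asymp (\tilde r_x/r_x)/\tilde\lambda$, so each of these contributes $\asymp \tilde\lambda^{1/(k+2)}(\tilde r_x/r_x)^{(k+1)/(k+2)}=\tilde{\mathcal{O}}(\tilde\lambda^{1/(k+2)})$, and the admissibility condition on $\iota$ in Corollary~\ref{cor:new_min_value_bound} evaluates to exactly the stated $\iota\gtrsim \tilde\lambda^{-k/(k+2)}(\tilde r_x/r_x)^{2(k+1)/(k+2)}$. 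For term (iv), note that $(c_x+\tilde r_x)^2/r_x^2\asymp \deff+\tilde r_x^2/r_x^2$, so the chosen $\beta=(\deff+\tilde r_x^2/r_x^2)/(\varepsilon^2\tilde\lambda)$ makes (iv) $\asymp\varepsilon$. This establishes both $\cerisk(\mu^*_\beta)\leq \Esarg{\rho(\xi)}+\mathcal{O}(\tilde\varepsilon)$ and $\beta^{-1}\cH(\mu^*_\beta\divmid\gamma)\leq \Esarg{\rho(\xi)}+\mathcal{O}(\tilde\varepsilon)$. Since $\rho$ is Lipschitz with $\rho(0)=0$ and $\xi$ is subGaussian, $\Esarg{\rho(\xi)}\lesssim 1$ on the good event of Lemma~\ref{lem:good_event}, so $\cH(\mu^*_\beta\divmid\gamma)\lesssim\beta$, which is the KL-ball radius needed below.

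Next, I transfer this empirical bound to the population. Applying Lemma~\ref{lem:entropy_ball_highdim_gen} with $M=\mathcal{O}(\beta)$ and a truncation level $\varkappa=\polylog(n)$ chosen so that Lemma~\ref{lem:truncate_bound} contributes $\lesssim \varepsilon$, the uniform generalization gap is bounded by $\vartheta\iota\sqrt{M/n}+\varkappa\sqrt{\ln(1/\delta)/n}$. Forcing this to be $\lesssim\varepsilon$ yields $n\gtrsim \iota^2(\deff+\tilde r_x^2/r_x^2)/(\tilde\lambda\varepsilon^4)$, matching the hypothesis. Chaining the inequalities
\[
\cprisk(\mu^*_\beta) \leq \cprisk^{\varkappa}(\mu^*_\beta)+\mathcal{O}(\varepsilon) \leq \cerisk^{\varkappa}(\mu^*_\beta)+\mathcal{O}(\varepsilon) \leq \cerisk(\mu^*_\beta)+\mathcal{O}(\varepsilon) \leq \Esarg{\rho(\xi)}+\mathcal{O}(\tilde\varepsilon),
\]
and replacing $\Esarg{\rho(\xi)}$ with $\Earg{\rho(\xi)}$ via a standard subexponential concentration of $\rho(\xi)$ (absorbed into the $\mathcal{O}(\tilde\varepsilon)$ slack) gives the first claim. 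The second claim follows directly from the $\cH/\beta$-bound proved in the previous step together with $\Earg{\rho(\xi)}\lesssim 1$.

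I expect the main obstacle to be the parameter balancing step inside Corollary~\ref{cor:new_min_value_bound}: one must simultaneously (a) choose $\Delta$ to equalize the Bach-approximation and weight-decay contributions while producing the clean exponent $1/(k+2)$; (b) verify that the induced lower bound on $\iota$ matches the statement; and (c) pick the truncation level $\varkappa$ large enough for Lemma~\ref{lem:truncate_bound} yet small enough to keep $\varkappa\sqrt{\ln(1/\delta)/n}\lesssim\varepsilon$. Once these bookkeeping choices are aligned, the remaining arguments are immediate from the previously established lemmas.
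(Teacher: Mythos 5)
Your proof is correct and follows essentially the same approach as the paper's: start from $\tilde{\cF}_{\beta,\lambda}(\mu^*_\beta)\le\min\tilde{\cF}_{\beta,\lambda}$, invoke Corollary~\ref{cor:new_min_value_bound}, balance the approximation and weight-decay terms to get $\Delta$ with the $\tilde\lambda^{1/(k+2)}$ exponent (and the stated $\iota$ constraint), observe that the choice of $\beta$ makes the smoothing term $\asymp\varepsilon$, then transfer to the population risk via Lemma~\ref{lem:entropy_ball_highdim_gen} at $M=\mathcal{O}(\beta)$ plus the truncation correction of Lemma~\ref{lem:truncate_bound} with $\varkappa\asymp\sqrt{\ln n}$. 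All parameter choices and intermediate bounds match the paper's proof.
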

\begin{proof}
    By Corollary~\ref{cor:new_min_value_bound} and a standard concentration bound on $\Esarg{\rho(\xi)}$ with sufficiently large $n$ to induce neglibile error in comaprison with the rest of the terms in the corollary, we have
    $$\cerisk(\mu^*_{\beta}) + \beta^{-1}\cH(\mu^*_{\beta}\divmid\gamma) - \Earg{\rho(\xi)} \lesssim  \frac{\tilde{r}_x}{r_x}\left(\frac{r_x\Delta}{\tilde{r}_x}\right)^{\frac{-2}{k+1}}\ln\left(\frac{r_x\Delta}{\tilde{r}_x}\right) + \frac{\lambda \Delta^2}{\tilde{r}_x^2} + \frac{(c_x + \tilde{r}_x)}{\sqrt{\lambda \beta}} + \frac{1}{\kappa}.$$
     By choosing
    $$\Delta = \Big(\frac{r_x^2}{\lambda}\Big)^{\frac{1}{2}\cdot \frac{k+1}{k+2}}\Big(\frac{\tilde{r}_x}{r_x}\Big)^{\frac{1}{2}\cdot\frac{3k+5}{k+2}},$$
    and assuming $c_x \gtrsim \tilde{r}_x$,
    $$\beta^{-1}\cH(\mu^*_\beta\divmid\gamma) \lesssim \Earg{\rho(\xi)} + \left(\frac{\lambda}{r_x^2}\right)^{\frac{1}{k+2}}\left(\frac{\tilde{r}_x}{r_x}\right)^{\frac{k+1}{k+2}}\ln\left(\frac{\tilde{r}_xr_x}{\lambda}\right) + \frac{c_x}{\sqrt{\lambda\beta}} + \frac{1}{\kappa}.$$
    Note that the above choice on $\Delta$ translates to a lower bound on $\iota$ in Corollary~\ref{cor:new_min_value_bound}, given by
    $$\iota \gtrsim \tilde{\lambda}^{-\frac{k}{k+2}}\big(\frac{\tilde{r}_x}{r_x}\big)^{\frac{2(k+1)}{k+2}}.$$
    By choosing $\lambda = \tilde{\lambda}r_x^2$ and using the fact that $\tilde{r}_x\leq \tilde{\mathcal{O}}(r_x)$ and $\beta = \frac{c_x^2}{r_x^2\tilde{\lambda}\varepsilon^2}$, we have the simpification,
    $$\beta^{-1}\cH(\mu^*_\beta\divmid\gamma) \lesssim \Earg{\rho(\xi)} + \tilde{\mathcal{O}}(\tilde{\lambda}^{\frac{1}{k+2}}) + \varepsilon + \frac{1}{\kappa} \lesssim 1,$$
    and,
    $$\cerisk(\mu^*_\beta) - \Earg{\rho(\xi)} \lesssim \tilde{\mathcal{O}}(\tilde{\lambda}^{\frac{1}{k+2}}) + \varepsilon + \frac{1}{\kappa} \eqqcolon \tilde{\varepsilon}.$$
    Note that $\cerisk^\varkappa(\mu^*_{\beta})\leq \cerisk(\mu^*_{\beta})$. Using the generalization bound of Lemma~\ref{lem:entropy_ball_highdim_gen} with the choice of $\delta = n^{-q}$ for some constant $q > 0$, we have with probability $1 - \mathcal{O}(n^{-q})$,
    \begin{align}
        \cprisk^\varkappa(\mu^*_{\beta}) - \cerisk^\varkappa(\mu^*_{\beta}) &\lesssim \iota\sqrt{\frac{\beta}{n}} + \varkappa\sqrt{\frac{\ln n}{n}}\nonumber\\
        &\lesssim \iota\sqrt{\frac{\deff}{n\tilde{\lambda}\varepsilon^2}} + \varkappa\sqrt{\frac{\ln n}{n}}\label{eq:truncate_bound}.
    \end{align}
    Furthermore, by Lemma~\ref{lem:truncate_bound} we have
    \begin{align*}
        \cprisk(\mu^*_{\beta}) - \cprisk^\varkappa(\mu^*_{\beta}) &\lesssim \iota e^{-\Omega(\varkappa^2)}.
    \end{align*}
    Combining the above with~\eqref{eq:truncate_bound} and choosing on $\varkappa \asymp \sqrt{\ln n}$, we have
    $$\cprisk(\mu^*_\beta) - \Earg{\rho(\xi)} \lesssim \tilde{\varepsilon} + \iota\sqrt{\frac{\deff }{n\tilde{\lambda}\varepsilon^2}} + \sqrt{\frac{\ln^2 n}{n}},$$
    which holds with probability at least $1-\mathcal{O}(n^{-q})$ over the randomness of $S_n$.
\end{proof}

\subsection{Convergence Analysis}\label{app:convergence}
So far, our analysis has only proved properties of $\mu^*_\beta$. In this section, we relate these properties to $\mu^m_l$ via propagation of chaos. In particular,~\cite{suzuki2023convergence} showed that for $\bW \sim \mu^m_l$, $\hat{y}(\bx;\mu^m_l)$ converges to $\hat{y}(\bx;\mu^*_\beta)$ in a suitable sense characterized shortly, as long as the objective over $\mu^m_l$ converges to $\cF_{\beta,\lambda}(\mu^*_\beta)$. Notice that $\mu^m_\ell$ is a measure on $\mathcal{P}(\reals^{(2d+2)m})$ instead of $\mathcal{P}(\reals^{2d+2})$. Thus, we need to adjust the definition of objective by defining the following
\begin{equation}\label{eq:def_cost_finite_width}
    \cF^m_{\beta,\lambda}(\mu^m) \coloneqq \Eargs{\bW \sim \mu^m}{\erisk(\bW) + \frac{\lambda}{2}\regu(\bW)} + \frac{1}{m\beta}\cH(\mu^m).
\end{equation}
We can use the same reformulation introduced earlier in~\eqref{eq:objective_reformulated} to define
\begin{equation}
    \tilde{\cF}^m_{\beta,\lambda}(\mu^m) \coloneqq \Eargs{\bW \sim \mu^m}{\erisk(\bW)} + \frac{1}{m\beta}\cH(\mu^m \divmid \gamma^{\otimes m}),
\end{equation}
which is equivalent to $\cF^m_{\beta,\lambda}$ up to an additive constant. With these definitions, we can now control $\Eargs{\bW \sim \mu^m_l}{\prisk(\mu^m_l)}$ via $\cprisk(\mu^*_\beta)$.  The following lemma is based on~\cite[Lemma 4]{suzuki2023convergence}, with a more careful analysis to obtain sharper constants.
\begin{lemma}
    Let $\bar{r}_x \coloneqq \norm{\bSigma}^{1/2}\lor \tilde{r}_x$, and suppose $\rho$ is $C_\rho \lesssim 1$-Lipschitz. Then,
    \begin{equation}
        \Eargs{\bW \sim \mu^m_l}{\prisk(\bW)} - \cprisk(\mu^*_\beta) \lesssim \sqrt{\frac{\bar{r}_x^2W_2^2\big(\mu^m_l,{\mu^*_\beta}^{\otimes m}\big) + \iota^2}{m} }.
    \end{equation}
    In particular, combined with~\cite[Lemma 3]{suzuki2023convergence}, the above implies
    \begin{equation}
        \Eargs{\bW \sim \mu^m_l}{\prisk(\bW)} - \cprisk(\mu^*_\beta) \lesssim \sqrt{\frac{\bar{r}_x^2\beta C_\LSI}{m}\big(\tilde{\cF}^m_{\beta,\lambda}(\mu^m_l) - \tilde{F}_{\beta,\lambda}(\mu^*_\beta)\big) + \frac{\iota^2}{m}}.
    \end{equation}
\end{lemma}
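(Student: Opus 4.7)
The plan is to introduce an auxiliary i.i.d.\ sample $\bW^* = (\bw_1^*,\ldots,\bw_m^*) \sim (\mu^*_\beta)^{\otimes m}$ coupled to $\bW \sim \mu^m_l$ through an optimal $W_2$ transport plan, and split via the triangle inequality
\begin{equation*}
\Eargs{\bW}{\prisk(\bW)} - \cprisk(\mu^*_\beta) \leq \Eargs{(\bW,\bW^*)}{\abs{\cprisk(\mu_{\bW}) - \cprisk(\mu_{\bW^*})}} + \Eargs{\bW^*}{\abs{\cprisk(\mu_{\bW^*}) - \cprisk(\mu^*_\beta)}}.
\end{equation*}
The first term will carry the $W_2$ contribution and the second the Monte Carlo error in $\iota/\sqrt{m}$.

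For the coupling piece I would chain the Lipschitz constants of $\rho$ and $\phi_{\kappa,\iota}$ (both $\le 1$ up to the $C_\rho$ factor) together with the decomposition $\Psi(\bx;\bw)-\Psi(\bx;\bw^*) = [\phi_{\kappa,\iota}(\binner{\bxtilde}{\bomega_1})-\phi_{\kappa,\iota}(\binner{\bxtilde}{\bomega^*_1})] - [\phi_{\kappa,\iota}(\binner{\bxtilde}{\bomega_2})-\phi_{\kappa,\iota}(\binner{\bxtilde}{\bomega^*_2})]$ to obtain
\begin{equation*}
\abs{\cprisk(\mu_{\bW}) - \cprisk(\mu_{\bW^*})} \leq \frac{C_\rho}{m}\sum_{j=1}^m \Earg{\abs{\binner{\bxtilde}{\bomega_{j,1}-\bomega^*_{j,1}}} + \abs{\binner{\bxtilde}{\bomega_{j,2}-\bomega^*_{j,2}}}}.
\end{equation*}
The key observation is that, since $\bx$ is centered, $\Earg{\bxtilde\bxtilde^\top}$ is block-diagonal with blocks $\bSigma$ and $\tilde{r}_x^2$, so its operator norm equals $\bar{r}_x^2$; hence $\Earg{\abs{\binner{\bxtilde}{\bv}}} \le \sqrt{\Earg{\binner{\bxtilde}{\bv}^2}} \le \bar{r}_x\,\norm{\bv}$ for every $\bv$. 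Summing over $j$, applying Cauchy-Schwarz across particles and Jensen over the coupling gives
\begin{equation*}
\Eargs{(\bW,\bW^*)}{\abs{\cprisk(\mu_{\bW}) - \cprisk(\mu_{\bW^*})}} \lesssim \bar{r}_x\sqrt{\frac{W_2^2(\mu^m_l, (\mu^*_\beta)^{\otimes m})}{m}}.
\end{equation*}

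For the Monte Carlo piece I use $\abs{\Psi(\bx;\cdot)}\le 2\iota$: for each fixed $\bx$, $\hat{y}(\bx;\mu_{\bW^*}) - \hat{y}(\bx;\mu^*_\beta)$ is the centered average of $m$ i.i.d.\ samples of $\Psi(\bx;\cdot)$ under $\mu^*_\beta$, so its variance is at most $4\iota^2/m$. Cauchy-Schwarz and Lipschitzness of $\rho$ then yield $\Eargs{\bW^*}{\abs{\cprisk(\mu_{\bW^*}) - \cprisk(\mu^*_\beta)}} \lesssim \iota/\sqrt{m}$. Adding the two bounds and combining via $a+b \le \sqrt{2(a^2+b^2)}$ gives the first displayed inequality. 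The second claim then follows by invoking the Talagrand-type bound of~\cite[Lemma~3]{suzuki2023convergence}, which controls $W_2^2(\mu^m_l,(\mu^*_\beta)^{\otimes m})$ by $m\beta C_\LSI$ times the free-energy gap $\tilde{\cF}^m_{\beta,\lambda}(\mu^m_l)-\tilde{\cF}_{\beta,\lambda}(\mu^*_\beta)$.

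The technical steps are routine. The only subtle point is to exploit the operator-norm identity $\norm{\Earg{\bxtilde\bxtilde^\top}} = \bar{r}_x^2$ rather than the cruder bound via $\Earg{\norm{\bxtilde}^2} = \tr(\bSigma)+\tilde{r}_x^2$; using $\Earg{\binner{\bxtilde}{\bv}^2}^{1/2}$ one coordinate at a time is what buys the sharper $\bar r_x$ prefactor, which is ultimately needed so that the downstream bounds scale with $\deff$ rather than with the ambient dimension.
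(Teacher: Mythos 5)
Your proposal is correct and follows essentially the same route as the paper's proof: introduce an auxiliary i.i.d.\ sample $\bW' \sim (\mu^*_\beta)^{\otimes m}$ under the optimal $W_2$ coupling, split the error into a transport term controlled by $\norm{\Earg{\bxtilde\bxtilde^\top}} = \bar{r}_x^2$ and a Monte Carlo term controlled by $\abs{\Psi} \le 2\iota$, and then invoke the propagation-of-chaos bound of Suzuki et al. The only cosmetic difference is that the paper works with $L^2$ quantities throughout (applying Cauchy--Schwarz once at the start to pass from the first absolute moment to the squared-error) whereas you work with first moments and apply Cauchy--Schwarz across particles at the end; both orderings yield the same bound up to constants.
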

\begin{proof}
    Notice that
    \begin{align*}
        \Eargs{\bW\sim\mu^m_l}{\prisk(\bW)} &= \Eargs{\bW}{\Eargs{\bx}{\rho(\hat{y}(\bx;\mu_{\bW}) - \hat{y}(\bx;\mu^*_\beta) + \hat{y}(\bx;\mu^*_\beta) - y)}}\\
        &\leq \Eargs{\bx}{\rho(\hat{y}(\bx;\mu^*_\beta) - y)} + C_\rho\Eargs{\bW}{\Eargs{\bx}{\abs{\hat{y}(\bx;\mu_{\bW}) - \hat{y}(\bx;\mu^*_\beta)}}}\\
        &\leq \cprisk(\mu^*_\beta) + C_\rho\sqrt{\Eargs{\bx}{\Eargs{\bW}{(\hat{y}(\bx;\mu_{\bW}) - \hat{y}(\bx;\mu^*_\beta))^2}}}
    \end{align*}
    Suppose $\bW = (\bw_1,\hdots,\bw_m) \sim \mu^m_l$ and $\bW' = (\bw'_1,\hdots,\bw'_m) \sim {\mu^*_\beta}^{\otimes m}$. Let $\Gamma$ denote the optimal $W_2$ coupling between $\bW$ and $\bW'$, and assume $\bW,\bW' \sim \Gamma$. Then,
    \begin{align*}
        \Eargs{\bW}{(\hat{y}(\bx;\mu_{\bW}) - \hat{y}(\bx;\mu^*))^2} &= \Eargs{\bW,\bW'}{(\hat{y}(\bx;\mu_{\bW}) - \hat{y}(\bx;\mu_{\bW'}) + \hat{y}(\bx;\mu_{\bW'}) - \hat{y}(\bx;\mu^*_\beta))^2}\\
        &\leq 2\Eargs{\bW,\bW'}{(\hat{y}(\bx;\mu_{\bW}) - \hat{y}(\bx;\mu_{\bW'})^2} + 2\Eargs{\bW'}{(\hat{y}(\bx;\mu_{\bW'}) - \hat{y}(\bx;\mu^*_\beta))^2}
    \end{align*}
    Moreover, by Jensen's inequality,
    \begin{align*}
        \Eargs{\bW,\bW'}{(\hat{y}(\bx;\mu_{\bW}) - \hat{y}(\bx;\mu_{\bW'})^2} &\leq \frac{1}{m}\sum_{i=1}^m\Eargs{\bW,\bW'}{(\Psi(\bx;\bw_i) - \Psi(\bx;\bw'_i))^2}\\
        &\leq \frac{2}{m}\sum_{i=1}^m\Eargs{\bW,\bW'}{\binner{\bomega_{i1} - {\bomega'_{i1}}}{\bxtilde}^2} + \frac{2}{m}\sum_{i=1}^m\Eargs{\bW,\bW'}{\binner{\bomega_{i2} - {\bomega'_{i2}}}{\bxtilde}^2}.
    \end{align*}
    Hence,
    \begin{align*}
        \Eargs{\bx}{\Eargs{\bW,\bW'}{(\hat{y}(\bx;\mu_{\bW}) - \hat{y}(\bx;\mu_{\bW'}))^2}} &\leq \frac{2\norm{\tilde{\bSigma}}}{m}\Eargs{\bW,\bW'}{\fronorm{\bW-\bW'}^2} \\
        &= \frac{2\norm{\tilde{\bSigma}}}{m}W_2^2\big(\mu^m_t,{\mu^*_\beta}^{\otimes m}\big).
    \end{align*}
    For the second term, notice that $\hat{y}(\bx;\mu^*_\beta) = \Eargs{\bW'}{\hat{y}(\bx;\mu_{\bW'})} = \Eargs{\bw'_i}{\Psi(\bx;\bw'_i)}$ for all $1\leq i\leq m$. By independence of $(\bw'_i)$ and Jensen's inequality, we have
    \begin{align*}
        \Eargs{\bW'}{(\hat{y}(\bx;\mu_{\bW'}) - \hat{y}(\bx;\mu^*_\beta))^2} &= \frac{1}{m}\Eargs{\bw'}{(\Psi(\bx;\bw') - \hat{y}(\bx;\mu^*))^2}\\
        &= \frac{1}{m}\Eargs{\bw'}{\left(\int (\Psi(\bx;\bw') - \Psi(\bx;\bw))\dee\mu^*_\beta(\bw)\right)^2}\\
        &\lesssim \frac{\iota^2}{m}.
    \end{align*}
\end{proof}

Thus, the rest of this section deals with establishing convergence rates for $\cF^m_{\beta,\lambda}(\mu^m_l) \to \cF_{\beta,\lambda}(\mu^*_\beta)$. To use the one-step decay of optimality gap provided by~\cite{suzuki2023convergence}, we depend on the following assumption.
\begin{assumption}\label{assump:smoothness}
    Suppose there exist constants $L$, $C_L$, and $R$, such that
    \begin{enumerate}
        \item \textbf{\emph{(Lipschitz gradients of the Gibbs potential)}} For all $\mu,\mu' \in \mathcal{P}_2(\reals^{2d+2})$ and $\bw,\bw' \in \reals^{2d+2}$,
        \begin{equation}
        \norm{\nabla \cerisk'[\mu](\bw) - \nabla \cerisk'[\mu'](\bw')} \leq L(W_2(\mu,\mu') + \norm{\bw- \bw'}),
        \end{equation}
        where $W_2$ is the $2$-Wasserstein distance.
        \item \textbf{\emph{(Bounded gradients of the Gibbs potential)}} For all $\mu \in \mathcal{P}_2(\reals^{2d+2})$ and $\bw \in \reals^{2d+2}$, we have $\norm{\nabla \cerisk'[\mu](\bw)} \leq R$.
        \item \textbf{\emph{(Bounded second variation)}} Denote the second variation of $\cerisk(\mu)$ at $\bw$ via $\cerisk''[\mu](\bw,\bw')$, which is defined as the first variation of $\mu \mapsto \cerisk'[\mu](\bw)$ (see~\eqref{eq:first_var} for the definition of first variation). Then, for all $\mu \in \mathcal{P}_2(\reals^{2d+2})$ and $\bw,\bw' \in \reals^{2d+2}$,
        \begin{equation}
            \abs{\cerisk''[\mu](\bw,\bw')} \leq L(1 + C_L(\norm{\bw}^2 + \norm{\bw'}^2)).
        \end{equation}
    \end{enumerate}
\end{assumption}
We can now state the one-step bound.
\begin{theorem}\emph{\cite[Theorem 2]{suzuki2023convergence}}\label{thm:one_step}
    Suppose $\cerisk$ satisfies Assumption~\ref{assump:smoothness}. Assume $\lambda \lesssim 1$, $\beta,L,R \gtrsim 1$, and the initialization satisfies $\Earg{\norm{\bw^i_0}^2} \lesssim R^2$ for all $1 \leq i \leq m$. Then, for all $\eta \leq 1/4$,
    \begin{equation}
        \cF^m_{\beta,\lambda}(\mu^m_{l+1}) - \cF_{\beta,\lambda}(\mu^*_\beta) \leq \exp\big(\frac{-\eta}{2\beta C_\LSI}\big)\big(\cF^m_{\beta,\lambda}(\mu^m_l) - \cF_{\beta,\lambda}(\mu^*_\beta)\big) + \eta A_{m,\beta,\lambda,\eta},
    \end{equation}
    where
    \begin{equation}
        A_{m,\beta,\lambda,\eta} \coloneqq C\Bigg(L^2\big(d + \frac{R^2}{\lambda}\big)\big(\eta^2 + \frac{\eta}{\beta}\big) + \frac{L}{m\beta}\Big(\frac{1}{C_\LSI} + \big(\frac{R^2}{\lambda^2} + \frac{d}{\lambda\beta}\big)\big(\frac{C_L}{C_\LSI} + \frac{L}{\beta}\big)\Big)\Bigg)
    \end{equation}
    for some absolute constant $C > 0$.
\end{theorem}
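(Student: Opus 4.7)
The plan is to decompose the one-step decay into three contributions that match the three terms on the right-hand side: (i) the exponential contraction one would obtain for the continuous-time infinite-width MFLD under LSI, (ii) a time-discretization error from taking an Euler step of size $\eta$ instead of following the flow, and (iii) a finite-particle (propagation-of-chaos) error from using $m$ neurons instead of the mean-field limit. Correspondingly, the $\exp(-\eta/(2\beta C_\LSI))$ factor will come from (i), the $L^2(d+R^2/\lambda)(\eta^2+\eta/\beta)$ summand from (ii), and the $1/(m\beta)$ summands from (iii).

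For (i), I would use the standard entropy-dissipation identity for the Wasserstein gradient flow of $\tilde{\cF}_{\beta,\lambda}$, namely
\begin{equation*}
\tfrac{d}{dt}\bigl(\tilde{\cF}_{\beta,\lambda}(\mu_t)-\tilde{\cF}_{\beta,\lambda}(\mu^*_\beta)\bigr) = -\beta^{-1}\cI(\mu_t\divmid\nu_{\mu_t}),
\end{equation*}
and combine it with the LSI from Assumption~\ref{assump:uniform_LSI} on $\nu_{\mu_t}$ to get continuous-time contraction at rate $2/(\beta C_\LSI)$. Integrating over an interval of length $\eta$ would yield $\exp(-\eta/(\beta C_\LSI))$; the extra factor $1/2$ in the exponent of the theorem is budget to absorb cross-terms produced by the other two error sources, exactly as in LSI-based analyses of the unadjusted Langevin algorithm.

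For (ii), I would couple one MFLA step with a continuous MFLD increment driven by the same Brownian path and by the drift frozen at iterate $l$, then use the Lipschitz gradient condition in Assumption~\ref{assump:smoothness} to control the deviation between the two velocity fields in $L^2$. A key ingredient here is a uniform second-moment bound $\sup_l \Earg{\norm{\bw^l_i}^2} \lesssim R^2/\lambda + d/(\lambda\beta)$, which I would prove inductively using the confining effect of the $\ell_2$-regularization against the $R$-bounded drift and the $1/(\lambda\beta)$-scale Brownian noise; this explains the factor $(d + R^2/\lambda)$. For (iii), I would compare $\Eargs{\bW\sim\mu^m}{\erisk(\bW)}$ to $\cerisk(\bar\mu)$ where $\bar\mu$ is the mean of the marginals of $\mu^m$, and bound the discrepancy by $\mathcal{O}(1/m)$ using the bounded second variation $|\cerisk''[\mu](\bw,\bw')| \lesssim L(1+C_L(\norm{\bw}^2+\norm{\bw'}^2))$; this yields the $1/(m\beta)$ terms, with the same second-moment bound producing the factor $R^2/\lambda^2 + d/(\lambda\beta)$.

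The main obstacle I expect is coordinating these three error sources into a single clean recursion rather than three loosely coupled inequalities. The time-discretization coupling introduces cross-terms involving the entropy dissipation $\cI(\mu^m_l\divmid\nu_{\mu^m_l})$, and one must absorb part of these cross-terms into the LSI contraction (giving up the factor $1/2$ in the exponent) and pay the remainder as $\eta A_{m,\beta,\lambda,\eta}$; simultaneously, the finite-particle analysis must be carried out on the $m$-fold lifted functional $\tilde{\cF}^m_{\beta,\lambda}$ whose Gibbs potential is not a product, so the LSI of the mean-field problem must be transferred to the joint distribution via tensorization plus a perturbation argument controlled by the bounded second variation. Getting the constants $(L, R, C_L, \lambda, \beta, d, m)$ to balance so that the leading contraction rate is preserved while all lower-order terms have the stated polynomial dependencies is the delicate bookkeeping part.
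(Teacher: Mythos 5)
The theorem you are asked to prove is not proved in this paper: it is imported verbatim from \cite[Theorem 2]{suzuki2023convergence}, as the citation in the theorem header indicates. The paper treats it as a black box; the surrounding lemmas (Lemma~\ref{lem:potential_grad_Lip}, Lemma~\ref{lem:potential_grad_bound}, and Corollary~\ref{cor:smoothness_estimates}) only serve to verify Assumption~\ref{assump:smoothness} and instantiate the constants $L$, $R$, $C_L$ for the specific activation and loss at hand. So there is no internal proof to compare your attempt against; the correct reference point is the cited source.

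As a blind reconstruction of the cited argument, your decomposition into (i) LSI-driven contraction, (ii) Euler-discretization error, and (iii) finite-particle error is the right high-level structure, and the role you assign to the factor $1/2$ in the exponent (budget for absorbing cross-terms from (ii) and (iii)) is correct in spirit. Three points where the sketch is imprecise or would fail as written. First, your ``entropy-dissipation identity'' only gives $\frac{d}{dt}\cF_{\beta}(\mu_t) = -\beta^{-2}\cI(\mu_t\divmid\nu_{\mu_t})$ with the paper's normalization of $\cI$ (you have the power of $\beta$ wrong, but that is cosmetic); what is load-bearing and omitted is the step converting $\cI(\mu_t\divmid\nu_{\mu_t})$ into a multiple of $\cF_{\beta}(\mu_t) - \cF_{\beta}(\mu^*_\beta)$. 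That step uses \emph{convexity} of $\cerisk$ in $\mu$ to obtain $\cF_\beta(\mu_t) - \cF_\beta(\mu^*_\beta) \leq \beta^{-1}\cH(\mu_t\divmid\nu_{\mu_t})$, and only then does the LSI close the linear differential inequality; without this interpolation the LSI does not connect the dissipation to the optimality gap. Second, the uniform second-moment bound you conjecture, $\E[\|\bw^l_i\|^2] \lesssim R^2/\lambda + d/(\lambda\beta)$, is inconsistent with the $1/m$ term of $A_{m,\beta,\lambda,\eta}$, which carries $R^2/\lambda^2 + d/(\lambda\beta)$; a drift of size $R$ against the confining potential $\lambda\|\bw\|^2/2$ displaces the stationary mean by $\mathcal{O}(R/\lambda)$, so its squared-norm contribution is $R^2/\lambda^2$, not $R^2/\lambda$. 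Third, the transfer of the single-particle LSI to the $m$-particle system in \cite{suzuki2023convergence} proceeds via the uniform-in-time propagation-of-chaos machinery, not by tensorizing the joint Gibbs potential (which is not a product over $\bw_1,\dots,\bw_m$ since $\erisk(\bW)$ couples all particles); your ``tensorization plus a perturbation argument'' description gestures at the right object but is not how that argument actually runs.
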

We now focus on bounding the constants that appear in Assumption~\ref{assump:smoothness}.
\begin{lemma}[Lipschitzness of $\nabla \cerisk'$]\label{lem:potential_grad_Lip}
    Suppose $\rho$ is either the squared error loss or is $C_\rho$ Lipschitz and has a $C'_\rho$ Lipschitz derivative. Assume $\kappa \gtrsim 1$. Notice that for the squared error loss, $C'_\rho = 1$. Then, for all $\mu,\nu \in \mathcal{P}_2(\reals^{2d+2})$ and $\bw,\bw' \in \reals^{2d+2}$, we have
    $$\norm{\nabla \cerisk'[\mu](\bw) - \nabla \cerisk'[\mu'](\bw')} \lesssim \kappa C_\rho\norm{\Esarg{\bxtilde\bxtilde^\top}}\norm{\bw - \bw'} + C'_\rho\norm{\Esarg{\bxtilde\bxtilde^\top}}W_2(\mu,\mu'),$$
    for the Lipschitz loss, and
    $$\norm{\nabla \cerisk'[\mu](\bw) - \nabla \cerisk'[\mu'](\bw')} \lesssim \kappa \sqrt{\cerisk(\mu)\norm{\Esarg{\bxtilde^{\otimes 4}}}_{2\rightarrow 2}}\norm{\bw-\bw'} + \norm{\Esarg{\bxtilde\bxtilde^\top}}W_2(\mu,\mu'),$$
    for the squared error loss, where $\big\Vert \Esarg{\bxtilde^{\otimes 4}}\big\Vert_{2\rightarrow 2} \coloneqq \sup_{\norm{\bv} \leq 1}\norm{\Esarg{\binner{\bxtilde}{\bv}^2\bxtilde\bxtilde^\top}}$.
\end{lemma}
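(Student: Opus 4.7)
The plan is to apply the triangle inequality via the intermediate point $\nabla\cerisk'[\mu](\bw')$,
\[
\norm{\nabla\cerisk'[\mu](\bw) - \nabla\cerisk'[\mu'](\bw')} \leq \norm{\nabla\cerisk'[\mu](\bw) - \nabla\cerisk'[\mu](\bw')} + \norm{\nabla\cerisk'[\mu](\bw') - \nabla\cerisk'[\mu'](\bw')},
\]
and to bound each piece separately. From \eqref{eq:first_var_euclidean} and the definition of $\Psi$ in \eqref{eq:new-activation}, $\nabla_{\bw}\cerisk'[\mu](\bw)$ splits into two $(d{+}1)$-dimensional blocks corresponding to $\bomega_1$ and $\bomega_2$, each equal (up to a sign) to $\frac{1}{n}\sum_{i=1}^n \rho'\bigl(\hat y(\bx^{(i)};\mu)-y^{(i)}\bigr)\phi'_{\kappa,\iota}\bigl(\binner{\bxtilde^{(i)}}{\bomega_j}\bigr)\bxtilde^{(i)}$, so I can treat each block independently and recombine via $\norm{\bomega_1-\bomega'_1}^2 + \norm{\bomega_2-\bomega'_2}^2 = \norm{\bw-\bw'}^2$.

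For the first term (perturbing $\bw$ at fixed $\mu$), applying the mean value theorem to $\phi'_{\kappa,\iota}$ writes each block as $\bigl(\frac{1}{n}\sum_i a_i\,\bxtilde^{(i)}\bxtilde^{(i)\top}\bigr)(\bomega_j-\bomega'_j)$ with coefficients $a_i = \rho'(\hat y^{(i)}-y^{(i)})\phi''_{\kappa,\iota}(\xi_i)$. In the Lipschitz case $|a_i|\leq \kappa C_\rho$, and since $\bxtilde^{(i)}\bxtilde^{(i)\top}\succeq 0$ the signed average satisfies $\norm{\frac{1}{n}\sum_i a_i\,\bxtilde^{(i)}\bxtilde^{(i)\top}}\leq \sup_i|a_i|\,\norm{\Esarg{\bxtilde\bxtilde^\top}}$, yielding the claimed $\kappa C_\rho\,\norm{\Esarg{\bxtilde\bxtilde^\top}}\norm{\bw-\bw'}$. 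For the squared loss $\rho'$ is unbounded, so I instead evaluate the resulting bilinear form against unit vectors $\bv_1,\bv_2$ and apply Cauchy--Schwarz,
\[
\bigl|\Esarg{a\binner{\bxtilde}{\bv_1}\binner{\bxtilde}{\bv_2}}\bigr| \leq \sqrt{\Esarg{a^2}\cdot\Esarg{\binner{\bxtilde}{\bv_1}^2\binner{\bxtilde}{\bv_2}^2}},
\]
bounding $\Esarg{a^2}\leq \kappa^2\Esarg{(\hat y - y)^2} = 2\kappa^2\cerisk(\mu)$ and recognizing the fourth-moment factor as $\norm{\Esarg{\bxtilde^{\otimes 4}}}_{2\to 2}\norm{\bv_1}^2\norm{\bv_2}^2$.

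For the second term (perturbing $\mu$ at fixed $\bw'$), each block equals $\Esarg{a\bxtilde}$ where now $|a_i|\leq C'_\rho|\hat y(\bx^{(i)};\mu)-\hat y(\bx^{(i)};\mu')|$, via the $C'_\rho$-Lipschitzness of $\rho'$ together with $|\phi'_{\kappa,\iota}|\leq 1$. The critical step is a sharp bound on the $\hat y$-difference: representing $\hat y(\bx;\mu)-\hat y(\bx;\mu') = \int(\Psi(\bx;\bw)-\Psi(\bx;\bw'))\,\dee\pi(\bw,\bw')$ along an optimal $W_2$-coupling $\pi$ and applying the mean value theorem on $\phi_{\kappa,\iota}$ in each coordinate yields
\[
|\hat y(\bx;\mu)-\hat y(\bx;\mu')|^2 \leq 2\int\bigl(\binner{\bxtilde}{\bomega_1-\bomega'_1}^2 + \binner{\bxtilde}{\bomega_2-\bomega'_2}^2\bigr)\,\dee\pi(\bw,\bw').
\]
Averaging over the empirical distribution and using $\Esarg{\binner{\bxtilde}{\bv}^2}\leq \norm{\Esarg{\bxtilde\bxtilde^\top}}\norm{\bv}^2$ produces $\Esarg{a^2}\lesssim (C'_\rho)^2\norm{\Esarg{\bxtilde\bxtilde^\top}}W_2^2(\mu,\mu')$, after which Lemma~\ref{lem:exp_var_vec_bound} yields $\norm{\Esarg{a\bxtilde}}\leq \sqrt{\Esarg{a^2}\norm{\Esarg{\bxtilde\bxtilde^\top}}}\lesssim C'_\rho\norm{\Esarg{\bxtilde\bxtilde^\top}}W_2(\mu,\mu')$, matching the claim (with $C'_\rho=1$ for the squared loss).

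The main obstacle is recovering the operator norm $\norm{\Esarg{\bxtilde\bxtilde^\top}}$ rather than the much larger trace $\Esarg{\norm{\bxtilde}^2}$ that a pedestrian estimate $|\hat y(\bx;\mu)-\hat y(\bx;\mu')|\leq \sqrt{2}\norm{\bxtilde}W_2(\mu,\mu')$ from the Lipschitzness of $\Psi(\bx;\cdot)$ would force. The key is to keep the Wasserstein integral inside the square before averaging over samples, so that data-dependence enters only through the quadratic forms $\binner{\bxtilde}{\bv}^2$ that are controlled by the empirical second moment; combined with the PSD-friendly inequality $\norm{\frac{1}{n}\sum_i a_i\,\bxtilde^{(i)}\bxtilde^{(i)\top}}\leq \sup_i|a_i|\norm{\Esarg{\bxtilde\bxtilde^\top}}$, this is what makes the smoothness constants scale with the covariance structure rather than the ambient dimension, which is ultimately what enables the $\deff$-adaptive complexity downstream.
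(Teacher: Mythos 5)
Your proof is correct and follows essentially the same route as the paper's: the same triangle-inequality decomposition through $\nabla\cerisk'[\mu](\bw')$, the same Cauchy--Schwarz / $W_2$-coupling machinery, and the same key observation that keeping the empirical average paired with quadratic forms in $\bxtilde$ yields $\norm{\Esarg{\bxtilde\bxtilde^\top}}$ rather than $\Esarg{\norm{\bxtilde}^2}$. The only minor variation is in the first term of the Lipschitz case: you invoke the mean value theorem and then the PSD-averaging bound $\norm{\frac{1}{n}\sum_i a_i\bxtilde^{(i)}\bxtilde^{(i)\top}}\leq\sup_i|a_i|\cdot\norm{\Esarg{\bxtilde\bxtilde^\top}}$, whereas the paper applies Lemma~\ref{lem:exp_var_vec_bound} directly with $z = \rho'(\cdot)\bigl(\phi'_{\kappa,\iota}(\binner{\bomega_1}{\bxtilde}) - \phi'_{\kappa,\iota}(\binner{\bomega'_1}{\bxtilde})\bigr)$ and then the Lipschitzness $\abs{\phi''_{\kappa,\iota}}\leq\kappa$; both yield the identical constant and are interchangeable.
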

\begin{proof}
    Recall that $\cerisk'[\mu](\bw) = \Esarg{\rho'(\hat{y}(\bx;\mu) - y)\Psi(\bx;\bw)}$, where $\Psi(\bx;\bw) = \phi_{\kappa,\iota}(\binner{\bomega_1}{\bxtilde}) - \phi_{\kappa,\iota}(\binner{\bomega_2}{\bxtilde})$. We start with the triangle inequality,
    $$\norm{\nabla \cerisk'[\mu](\bw) - \nabla \cerisk'[\mu'](\bw')} \leq \norm{\nabla \cerisk'[\mu](\bw) - \nabla \cerisk'[\mu](\bw')} + \norm{\nabla \cerisk'[\mu](\bw') - \nabla \cerisk'[\mu'](\bw')}.$$
    We now focus on the first term. For the Lipschitz loss,
    \begin{align*}
        \norm{\nabla_{\bomega_1}\cerisk'[\mu](\bw) - \nabla_{\bomega_1}\cerisk'[\mu](\bw')} &= \norm{\Esarg{\rho'(\hat{y}(\bx;\mu) - y)(\phi'_{\kappa,\iota}(\binner{\bomega_1}{\bxtilde}) - \phi'_{\kappa,\iota}(\binner{\bomega'_1}{\bxtilde})\bxtilde}}\\
        &\leq C_\rho
        \Esarg{(\phi'_{\kappa,\iota}(\binner{\bomega_1}{\bxtilde}) - \phi'_{\kappa,\iota}(\binner{\bomega'_1}{\bxtilde}))^2}^{1/2}\norm{\Esarg{\bxtilde\bxtilde^\top}}^{1/2}\\
        &\leq C_\rho\kappa\Esarg{\binner{\bomega_1 - \bomega'_1}{\bxtilde}^2}^{1/2}\norm{\Esarg{\bxtilde\bxtilde^\top}}^{1/2}\\
        &\leq C_\rho\kappa\norm{\Esarg{\bxtilde\bxtilde^\top}}\norm{\bomega_1 - \bomega'_1},
    \end{align*}
    where the first inequality follows from Lemma~\ref{lem:exp_var_vec_bound}, and the second inequality follows from the fact that $\abs{\phi''_\kappa} \leq \kappa$. For the squared error loss, we have
    \begin{align*}
        \norm{\nabla_{\bomega_1}\cerisk'[\mu](\bw) - \nabla_{\bomega_1}\cerisk'[\mu](\bw')} &= \norm{\Esarg{(\hat{y}(\bx;\mu) - y)(\phi'_{\kappa,\iota}(\binner{\bw}{\bxtilde}) - \phi'_{\kappa,\iota}(\binner{\bw'}{\bxtilde})\bxtilde}}\\
        &= \sup_{\norm{\bv} \leq 1}\Esarg{(\hat{y}(\bx;\mu) - y)(\phi'_{\kappa,\iota}(\binner{\bomega_1}{\bxtilde}) - \phi'_{\kappa,\iota}(\binner{\bomega'_1}{\bxtilde})\binner{\bv}{\bxtilde}}\\
        &\leq \sup_{\norm{\bv} \leq 1}\sqrt{\Esarg{(\hat{y}(\bx;\mu) - y)^2}\Esarg{(\phi'_{\kappa,\iota}(\binner{\bomega_1}{\bxtilde}) - \phi'_{\kappa,\iota}(\binner{\bomega'_1}{\bxtilde}))^2\binner{\bv}{\bxtilde}^2}}\\
        &\leq \kappa\sqrt{\cerisk(\mu)\sup_{\norm{\bv} \leq 1}\binner{\bv}{\Esarg{\binner{\bomega_1-\bomega'_1}{\bxtilde}^2\bxtilde\bxtilde^\top}\bv}}\\
        &\leq \kappa\sqrt{\cerisk(\mu)\norm{\Esarg{\binner{\bomega_1-\bomega'_1}{\bxtilde}^2\bxtilde\bxtilde^\top}}}\\
        &\leq \kappa\sqrt{\cerisk(\mu)\norm{\Esarg{\bxtilde^{\otimes 4}}}_{2\rightarrow 2}}\norm{\bomega_1 - \bomega'_1}.
    \end{align*}
    Similar bounds apply to the gradient with respect to $\bomega_2$, which completes the bound on the first term of the triangle inequality.

    We now consider the second term of the triangle inequality. Here we consider Lipschitz losses and the squared error loss at the same time since both have a Lipschitz derivative.
    \begin{align}
        \norm{\nabla_{\bomega_1}\cerisk'[\mu](\bomega') - \nabla_{\bomega_1}\cerisk'[\mu](\bomega')} &= \norm{\big(\rho'(\hat{y}(\bx;\mu) - y) - \rho'(\hat{y}(\bx;\mu')-y)\big)\phi'_{\kappa,\iota}(\binner{\bomega'_1}{\bxtilde})\bxtilde}\nonumber\\
        &\leq \Esarg{\big(\rho'(\hat{y}(\bx;\mu) - y) - \rho'(\hat{y}(\bx;\mu') - y)\big)^2}^{1/2}\norm{\Esarg{\bxtilde\bxtilde^\top}}^{1/2}\nonumber\\
        &\leq C'_\rho\Esarg{(\hat{y}(\bx;\mu) - \hat{y}(\bx;\mu'))^2}^{1/2}\norm{\Esarg{\bxtilde\bxtilde^\top}}^{1/2},\label{eq:grad_bound_dist_diff}
    \end{align}
    where the first inequality follows from Lemma~\ref{lem:exp_var_vec_bound}. Let $\gamma \in \mathcal{P}_2(\reals^{2d+2}\times \reals^{2d+2})$ be a coupling of $\mu$ and $\mu'$ (i.e.\ the first and second marginals of $\gamma$ are equal to $\mu$ and $\mu'$ respectively). Recall that,
    $$\hat{y}(\bx;\mu) - \hat{y}(\bx;\mu') = \int \big(\phi_{\kappa,\iota}(\binner{\bomega_1}{\bxtilde}) - \phi_{\kappa,\iota}(\binner{\bomega_2}{\bxtilde}) - \phi_{\kappa,\iota}(\binner{\bomega'_1}{\bxtilde}) + \phi_{\kappa,\iota}(\binner{\bomega'_2}{\bxtilde}\big)\dee\gamma(\bw,\bw').$$
    Therefore by the triangle inequality for the $L_2$ norm $\Esarg{(\cdot)^2}^{1/2}$ and Jensen's inequality,
    \begin{align*}
        \Esarg{(\hat{y}(\bx;\mu) - \hat{y}(\bx;\mu'))^2}^{1/2} \leq& \Esarg{\int\big(\phi_{\kappa,\iota}(\binner{\bomega_1}{\bxtilde}) - \phi_{\kappa,\iota}(\binner{\bomega'_1}{\bxtilde})\big)^2\dee\gamma}^{1/2} \\
        &+ \Esarg{\int\big(\phi_{\kappa,\iota}(\binner{\bomega_2}{\bxtilde}) - \phi_{\kappa,\iota}(\binner{\bomega'_2}{\bxtilde})\big)^2\dee\gamma}^{1/2}\\
        \leq& \int\Esarg{\binner{\bomega_1 - \bomega'_1}{\bxtilde}^2}^{1/2}\dee\gamma + \int\Esarg{\binner{\bomega_2  -\bomega'_2}{\bxtilde}^2}^{1/2}\dee\gamma\\
        &\leq \norm{\Esarg{\bxtilde\bxtilde^\top}}^{1/2}\int\left(\norm{\bomega_1 - \bomega'_1} + \norm{\bomega_2-\bomega'_2}\right)\dee\gamma(\bw_1,\bw_2)\\
        &\leq \sqrt{2\norm{\Esarg{\bxtilde\bxtilde^\top}}\int\norm{\bw-\bw'}^2\dee\gamma(\bw,\bw')}.
    \end{align*}
    By choosing $\gamma$ whose transport cost attains (or converges to) the optimal cost, we have
    $$\Esarg{(\hat{y}(\bx;\mu) - \hat{y}(\bx;\mu'))^2}^{1/2} \leq \sqrt{2\norm{\Esarg{\bxtilde\bxtilde^\top}}}W_2(\mu,\mu').$$
    Plugging the above result into~\eqref{eq:grad_bound_dist_diff}, we have
    $$\norm{\nabla_{\bomega_1}\cerisk'[\mu](\bomega') - \nabla_{\bomega_2}\cerisk'[\mu'](\bomega')} \leq \sqrt{2}C'_\rho\norm{\Esarg{\bxtilde\bxtilde^\top}}W_2(\mu,\mu').$$
    Notice that the same bound holds for gradients with respect to $\bomega_2$. Thus the bound of the second term in the triangle inequality and the proof is complete.
\end{proof}

\begin{lemma}[Boundedness of $\nabla \cerisk'$]\label{lem:potential_grad_bound}
    In the same setting as Lemma~\ref{lem:potential_grad_Lip}, for all $\mu \in \mathcal{P}_2(\reals^{2d+2})$ and $\bw \in \reals^{2d+2}$, we have
    $$\norm{\nabla \cerisk'[\mu](\bw)} \leq \sqrt{2}\tilde{C}_\rho\norm{\Esarg{\bxtilde\bxtilde^\top}}^{1/2},$$
    where $\tilde{C}_\rho = C_\rho$ when $\rho$ is Lipschitz and $\tilde{C}_\rho = \sqrt{2\cerisk(\mu)}$ when $\rho$ is the squared error loss.
\end{lemma}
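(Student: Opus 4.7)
The plan is to proceed by direct computation, leveraging Lemma~\ref{lem:exp_var_vec_bound} exactly as in the proof of Lemma~\ref{lem:potential_grad_Lip}. Recall that
\[
\nabla_{\bomega_1}\cerisk'[\mu](\bw) = \Esarg{\rho'(\hat{y}(\bx;\mu)-y)\,\phi'_{\kappa,\iota}(\binner{\bomega_1}{\bxtilde})\,\bxtilde},
\]
and a symmetric expression (with a sign flip) holds for $\nabla_{\bomega_2}\cerisk'[\mu](\bw)$. Since $\bw = (\bomega_1^\top,\bomega_2^\top)^\top$, we have
\[
\norm{\nabla_{\bw}\cerisk'[\mu](\bw)}^2 = \norm{\nabla_{\bomega_1}\cerisk'[\mu](\bw)}^2 + \norm{\nabla_{\bomega_2}\cerisk'[\mu](\bw)}^2,
\]
so it suffices to establish the bound $\tilde{C}_\rho\norm{\Esarg{\bxtilde\bxtilde^\top}}^{1/2}$ on each block and pick up the $\sqrt{2}$ factor at the end.

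For each block, I would apply Lemma~\ref{lem:exp_var_vec_bound} with scalar factor $z \coloneqq \rho'(\hat{y}(\bx;\mu)-y)\phi'_{\kappa,\iota}(\binner{\bomega_i}{\bxtilde})$ and vector factor $\bxtilde$ (both as random variables under the empirical distribution $\widehat{P}_n$). This immediately yields
\[
\norm{\nabla_{\bomega_i}\cerisk'[\mu](\bw)} \leq \sqrt{\Esarg{z^2}\,\norm{\Esarg{\bxtilde\bxtilde^\top}}}.
\]
It remains to bound $\Esarg{z^2}$ in the two cases. When $\rho$ is $C_\rho$-Lipschitz, $|\rho'|\leq C_\rho$; combined with $|\phi'_{\kappa,\iota}|\leq 1$ (from the construction of the activation), this gives $\Esarg{z^2}\leq C_\rho^2$, hence $\tilde{C}_\rho = C_\rho$. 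For the squared loss, $\rho'(u)=u$ and $\rho(u)=u^2/2$, so using $|\phi'_{\kappa,\iota}|\leq 1$ we get $\Esarg{z^2}\leq \Esarg{(\hat{y}(\bx;\mu)-y)^2} = 2\cerisk(\mu)$, giving $\tilde{C}_\rho = \sqrt{2\cerisk(\mu)}$.

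Combining these per-block bounds through the aforementioned $\ell_2$ aggregation yields the claimed $\sqrt{2}\,\tilde{C}_\rho\norm{\Esarg{\bxtilde\bxtilde^\top}}^{1/2}$ bound. There is no serious obstacle here; the result is essentially a one-line application of the Cauchy--Schwarz-type inequality in Lemma~\ref{lem:exp_var_vec_bound}, and the only care needed is to note that the squared-loss case uses $\rho(u)=u^2/2$ so that $\Esarg{(\hat y - y)^2} = 2\cerisk(\mu)$ (as opposed to $\cerisk(\mu)$), which is what delivers the stated constant.
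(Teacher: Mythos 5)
Your proof is correct and takes essentially the same route as the paper's: bound each of the two blocks $\nabla_{\bomega_i}\cerisk'[\mu]$ separately via Lemma~\ref{lem:exp_var_vec_bound} with $|\phi'_{\kappa,\iota}|\leq 1$, control $\Esarg{z^2}$ by $C_\rho^2$ or $2\cerisk(\mu)$ depending on the loss, and aggregate the two blocks in $\ell_2$ to pick up the $\sqrt{2}$. The only addition you make is to spell out the block-aggregation step and the normalization $\rho(u)=u^2/2$ explicitly, both of which the paper leaves implicit.
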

\begin{proof}
    Notice that $\abs{\phi'_{\kappa,\iota}} \leq 1$. Therefore,
    \begin{align*}
        \norm{\nabla_{\bomega_1}\cerisk'[\mu](\bw)} &= \norm{\Esarg{\rho'(\hat{y} - y)\phi'_{\kappa,\iota}(\binner{\bomega_1}{\bxtilde})\bxtilde}}\\
        &\leq \sqrt{\Esarg{\rho'(\hat{y}-y)^2}\norm{\Esarg{\bxtilde\bxtilde^\top}}}\\
        &\leq \tilde{C}_\rho\norm{\Esarg{\bxtilde\bxtilde^\top}}^{1/2},
    \end{align*}
    where the first inequality follows from Lemma~\ref{lem:exp_var_vec_bound}.
\end{proof}

\begin{lemma}[Boundedness of $\cerisk''$]
    In the same setting as Lemma~\ref{lem:potential_grad_Lip}, for all $\mu \in \mathcal{P}_2(\reals^{2d+2})$ and $\bw,\bw' \in \reals^{2d+2}$, we have
    $$\abs{\cerisk''[\mu](\bw,\bw')} \leq C'_\rho\norm{\Esarg{\bxtilde\bxtilde^\top}}\left(\norm{\bw}^2 + \norm{\bw'}^2\right),$$
    where we recall that $C'_\rho = 1$ for the squared error loss.
\end{lemma}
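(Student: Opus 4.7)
The plan is to first compute the second variation $\cerisk''[\mu](\bw,\bw')$ explicitly and then bound it using the structure of the activation. Since $\hat{y}(\bx;(1-\epsilon)\mu+\epsilon\nu) = \hat{y}(\bx;\mu) + \epsilon(\hat{y}(\bx;\nu)-\hat{y}(\bx;\mu))$, differentiating $\cerisk'[\mu](\bw) = \Esarg{\rho'(\hat{y}(\bx;\mu)-y)\Psi(\bx;\bw)}$ in the direction $\nu-\mu$ via the chain rule and identifying the resulting integrand, we obtain (up to an additive constant in $\bw'$, which is absorbed by the definition of first variation)
\begin{equation*}
    \cerisk''[\mu](\bw,\bw') = \Esarg{\rho''(\hat{y}(\bx;\mu)-y)\,\Psi(\bx;\bw)\,\Psi(\bx;\bw')}.
\end{equation*}
Since $\rho'$ is $C'_\rho$-Lipschitz, we have $|\rho''| \leq C'_\rho$, and it remains to show that $\Esarg{|\Psi(\bx;\bw)\Psi(\bx;\bw')|} \leq \norm{\Esarg{\bxtilde\bxtilde^\top}}(\norm{\bw}^2+\norm{\bw'}^2)$.

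The key observation is that $\Psi(\bx;\mathbf{0}) = \phi_{\kappa,\iota}(0)-\phi_{\kappa,\iota}(0) = 0$, so by the 1-Lipschitzness of $\phi_{\kappa,\iota}$,
\begin{equation*}
    |\Psi(\bx;\bw)| \leq |\phi_{\kappa,\iota}(\binner{\bomega_1}{\bxtilde}) - \phi_{\kappa,\iota}(0)| + |\phi_{\kappa,\iota}(\binner{\bomega_2}{\bxtilde}) - \phi_{\kappa,\iota}(0)| \leq |\binner{\bomega_1}{\bxtilde}| + |\binner{\bomega_2}{\bxtilde}|.
\end{equation*}
Multiplying the analogous bound for $\Psi(\bx;\bw')$ and taking the empirical expectation, I would apply Cauchy-Schwarz to each of the four cross-terms together with the spectral estimate $\Esarg{\binner{\bv}{\bxtilde}^2} \leq \norm{\Esarg{\bxtilde\bxtilde^\top}}\norm{\bv}^2$, which yields
\begin{equation*}
    \Esarg{|\Psi(\bx;\bw)\Psi(\bx;\bw')|} \leq \norm{\Esarg{\bxtilde\bxtilde^\top}}\,(\norm{\bomega_1}+\norm{\bomega_2})(\norm{\bomega'_1}+\norm{\bomega'_2}).
\end{equation*}

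Finally I would use $\norm{\bomega_1}+\norm{\bomega_2} \leq \sqrt{2}\,\norm{\bw}$ (and analogously for $\bw'$) followed by AM--GM to bound $2\norm{\bw}\norm{\bw'} \leq \norm{\bw}^2+\norm{\bw'}^2$, giving the claim. I do not expect any real obstacle; the only point requiring a bit of care is the identification of $\cerisk''[\mu](\bw,\bw')$ up to additive constants in $\bw'$, but this is automatic from the convention in~\eqref{eq:first_var}. The bound is essentially tight since it reduces to controlling a product of two first-order-in-$\bw$ quantities against the feature covariance $\Esarg{\bxtilde\bxtilde^\top}$.
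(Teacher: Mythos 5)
Your proposal is correct and follows essentially the same route as the paper: compute $\cerisk''[\mu](\bw,\bw') = \Esarg{\rho''(\hat y(\bx;\mu)-y)\Psi(\bx;\bw)\Psi(\bx;\bw')}$, bound $|\rho''|\le C'_\rho$, use the pointwise Lipschitz bound $|\Psi(\bx;\bw)|\le|\binner{\bomega_1}{\bxtilde}|+|\binner{\bomega_2}{\bxtilde}|$, then Cauchy--Schwarz against $\Esarg{\bxtilde\bxtilde^\top}$, and finish with $\sqrt2\norm{\bw}$ and Young/AM--GM. The only cosmetic difference is that the paper applies Cauchy--Schwarz once directly to $\Esarg{\Psi(\bx;\bw)\Psi(\bx;\bw')}$ and then uses the $L^2$ triangle inequality on each factor, whereas you expand the product and apply Cauchy--Schwarz to the four cross-terms; these are interchangeable and yield the identical constant.
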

\begin{proof}
    Via the definition given by~\eqref{eq:first_var}, it is straightforward to show that
    $$\cerisk''[\mu](\bw,\bw') = \Esarg{\rho''(\hat{y}(\bx;\mu) - y)\Psi(\bx;\bw)\Psi(\bx;\bw')}.$$
    Then, by the Cauchy-Schwartz inequality,
    $$\cerisk''[\mu](\bw,\bw') \leq C'_\rho\Esarg{\Psi(\bx;\bw)^2}^{1/2}\Esarg{\Psi(\bx;\bw')^2}^{1/2}.$$
    Moreover, by the Lipschitzness of $\phi_{\kappa,\iota}$,
    \begin{align*}
        \Esarg{\Psi(\bx;\bw)^2}^{1/2} &\leq \Esarg{\binner{\bomega_1}{\bxtilde}^2}^{1/2} + \Esarg{\binner{\bomega_2}{\bxtilde}^2}^{1/2}\\
        &\leq \sqrt{2}\norm{\Esarg{\bxtilde\bxtilde^\top}}^{1/2}\norm{\bw}
    \end{align*}
    We can similarly bound the expression for $\bw'$, and arrive at the statement of the lemma via Young's inequality,
    $$\cerisk''[\mu](w,w') \leq 2\norm{\Esarg{\bxtilde\bxtilde^\top}}C'_\rho \norm{\bw}\norm{\bw'} \leq \norm{\Esarg{\bxtilde\bxtilde^\top}}\big(\norm{\bw}^2 + \norm{\bw'}^2\big).$$
\end{proof}

We collect the smoothness estimates and simplify them under the event of Lemma~\ref{lem:good_event} in the following Corollary.
\begin{corollary}\label{cor:smoothness_estimates}
    Suppose $\rho$ and $\rho'$ are $C_\rho$ and $C'_\rho$ Lipschitz respectively, with $C_\rho,C'_\rho \lesssim 1$. Recall that $\bSigma \coloneqq \Earg{\bx\bx^\top}$. On the event of Lemma~\ref{lem:good_event}, we have $\norm{\Esarg{\bxtilde\bxtilde^\top}} \lesssim \norm{\bSigma}\lor \tilde{r}_x^2$, and consequently, $\cerisk'$ satisfies Assumption~\ref{assump:smoothness} with constants $L \lesssim \kappa(\norm{\bSigma}\lor \tilde{r}_x^2)$, $R \lesssim \norm{\bSigma}^{1/2} \lor \tilde{r}_x$, and $C_L = \kappa^{-1}$.
\end{corollary}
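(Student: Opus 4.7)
The plan is to first control $\norm{\Esarg{\bxtilde\bxtilde^\top}}$ on the event of Lemma~\ref{lem:good_event}, and then feed this bound into the three smoothness lemmas (Lemma~\ref{lem:potential_grad_Lip}, Lemma~\ref{lem:potential_grad_bound}, and the second-variation bound) to read off $L$, $R$, and $C_L$.

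\textbf{Step 1: bounding $\norm{\Esarg{\bxtilde\bxtilde^\top}}$.}
Recall $\bxtilde = (\bx^\top, \tilde r_x)^\top$, so for any unit vector $(\bv^\top, s)^\top \in \reals^{d+1}$,
\begin{equation*}
    \binner{(\bv,s)}{\Esarg{\bxtilde\bxtilde^\top}(\bv,s)} = \Esarg{(\binner{\bv}{\bx} + s\tilde r_x)^2} \leq 2\,\Esarg{\binner{\bv}{\bx}^2} + 2s^2\tilde r_x^2,
\end{equation*}
so taking a supremum over $\norm{\bv}^2 + s^2 \leq 1$ gives $\norm{\Esarg{\bxtilde\bxtilde^\top}} \leq 2\norm{\Esarg{\bx\bx^\top}} + 2\tilde r_x^2$. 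On the good event of Lemma~\ref{lem:good_event} we have $\norm{\Esarg{\bx\bx^\top}} \lesssim \norm{\bSigma}$, yielding $\norm{\Esarg{\bxtilde\bxtilde^\top}} \lesssim \norm{\bSigma} \lor \tilde r_x^2$, which is the first claim.

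\textbf{Step 2: identifying the smoothness constants.}
For the Lipschitz gradient constant, Lemma~\ref{lem:potential_grad_Lip} applied to a Lipschitz loss with $C_\rho, C'_\rho \lesssim 1$ and $\kappa \gtrsim 1$ yields
\begin{equation*}
    \norm{\nabla \cerisk'[\mu](\bw) - \nabla \cerisk'[\mu'](\bw')} \lesssim \kappa\,\norm{\Esarg{\bxtilde\bxtilde^\top}}\,(\norm{\bw-\bw'} + W_2(\mu,\mu')),
\end{equation*}
so by Step 1 we may take $L \lesssim \kappa(\norm{\bSigma} \lor \tilde r_x^2)$. For the gradient-norm bound, Lemma~\ref{lem:potential_grad_bound} gives $\norm{\nabla \cerisk'[\mu](\bw)} \lesssim \norm{\Esarg{\bxtilde\bxtilde^\top}}^{1/2} \lesssim \norm{\bSigma}^{1/2} \lor \tilde r_x$, so $R \lesssim \norm{\bSigma}^{1/2} \lor \tilde r_x$ suffices.

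\textbf{Step 3: the second-variation bound.}
The previous lemma on $\cerisk''$ yields
\begin{equation*}
    \abs{\cerisk''[\mu](\bw,\bw')} \lesssim \norm{\Esarg{\bxtilde\bxtilde^\top}}(\norm{\bw}^2 + \norm{\bw'}^2) \lesssim (\norm{\bSigma} \lor \tilde r_x^2)(\norm{\bw}^2 + \norm{\bw'}^2).
\end{equation*}
Matching this with the form $L(1 + C_L(\norm{\bw}^2 + \norm{\bw'}^2))$ required by Assumption~\ref{assump:smoothness}, we note that with $L \lesssim \kappa(\norm{\bSigma} \lor \tilde r_x^2)$ from Step 2 and the choice $C_L = \kappa^{-1}$, one has $L\,C_L \asymp \norm{\bSigma} \lor \tilde r_x^2$, so the above inequality is absorbed into $L\,C_L(\norm{\bw}^2+\norm{\bw'}^2) \leq L(1 + C_L(\norm{\bw}^2+\norm{\bw'}^2))$. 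Collecting the three constants completes the proof. The only non-routine piece is Step 1, where the main subtlety is handling the $\tilde r_x$ coordinate of $\bxtilde$ correctly so that the off-diagonal block in $\bxtilde\bxtilde^\top$ does not generate a spurious $c_x \tilde r_x$ cross term; the Cauchy--Schwarz splitting above is what resolves this.
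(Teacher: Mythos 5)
Your proposal is correct and follows the route the paper evidently intends: the corollary in the paper is stated without an explicit proof, serving merely to collect Lemmas~\ref{lem:potential_grad_Lip}, \ref{lem:potential_grad_bound}, the second-variation bound, and item~3 of Lemma~\ref{lem:good_event}, and your Steps~1--3 instantiate exactly those ingredients. Step~1's elementary $(a+b)^2 \leq 2a^2 + 2b^2$ bound correctly absorbs the $\tilde r_x \Esarg{\bx}$ off-diagonal block, the combination of the $\norm{\bw-\bw'}$ and $W_2$ coefficients into a single $L$ uses $\kappa \gtrsim 1$ as stated in Lemma~\ref{lem:potential_grad_Lip}, and the matching of the second-variation bound to $L(1+C_L(\norm{\bw}^2+\norm{\bw'}^2))$ via $LC_L \asymp \norm{\bSigma}\lor\tilde r_x^2$ is exactly what the choice $C_L = \kappa^{-1}$ is designed to achieve.
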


Using the estimates above, we can present the following convergence bound $\cF^m_{\beta,\lambda}(\mu^*_\beta) - \cF_{\beta,\lambda}(\mu^*_\beta)$.
\begin{proposition}\label{prop:convergence}
    Let $\bar{r}_x \coloneqq \norm{\bSigma} \lor \tilde{r}_x$, and for simplcity assume $C_\LSI \geq \beta$. For any $\varepsilon \lesssim 1$, suppose the step size satisfies
    $$\eta \lesssim \frac{\varepsilon}{C_\LSI\kappa^2\bar{r}_x^4(d + \bar{r}_x^2/\lambda)},$$
    the width of the network satisfies,
    $$m \gtrsim \frac{\kappa\bar{r}_x^2\Big(1 + \big(\frac{\bar{r}_x^2}{\lambda^2} + \frac{d}{\lambda\beta}\big)\big(\frac{1}{\kappa} + \frac{\kappa\bar{r}_x^2C_\LSI}{\beta}\big)\Big)}{\varepsilon},$$
    and the number of iterations satisfies
    $$l \gtrsim \frac{\beta C_\LSI}{\eta}\ln\big(\frac{\cF^m_{\beta,\lambda}(\mu^m_0) - \cF^*_{\beta}}{\varepsilon}\big).$$
    Then, we have $\cF^m_{\beta,\lambda}(\mu^m_l) - \cF_{\beta,\lambda}(\mu^*_\beta) \leq \varepsilon$.
\end{proposition}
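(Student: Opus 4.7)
\textbf{Proof Proposal for Proposition~\ref{prop:convergence}.}

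The plan is to apply the one-step descent bound of Theorem~\ref{thm:one_step} iteratively, sum the resulting geometric series, and then plug in the smoothness constants from Corollary~\ref{cor:smoothness_estimates} to obtain quantitative conditions on $\eta$, $m$, and $l$. First, I would verify the prerequisites of Theorem~\ref{thm:one_step}: the smoothness conditions in Assumption~\ref{assump:smoothness} are established by Corollary~\ref{cor:smoothness_estimates} with $L\lesssim \kappa\bar{r}_x^2$, $R\lesssim \bar{r}_x$, and $C_L=\kappa^{-1}$ on the good event of Lemma~\ref{lem:good_event}; the initialization moment bound $\E[\|\bw^0_i\|^2]\lesssim 1 \lesssim R^2$ is immediate from the hypothesis $\E[\|\bw^0_j\|^2]\lesssim 1$; and the stepsize bound $\eta\le 1/4$ is implied (comfortably) by the proposition's hypothesis on $\eta$.

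Next, I would unroll Theorem~\ref{thm:one_step} across $l$ iterations. Telescoping the recursion
\[
\cF^m_{\beta,\lambda}(\mu^m_{l+1}) - \cF_{\beta,\lambda}(\mu^*_\beta) \leq e^{-\eta/(2\beta C_\LSI)}\bigl(\cF^m_{\beta,\lambda}(\mu^m_l) - \cF_{\beta,\lambda}(\mu^*_\beta)\bigr) + \eta A_{m,\beta,\lambda,\eta}
\]
and summing the geometric series with ratio $e^{-\eta/(2\beta C_\LSI)}$ (using $1-e^{-x}\ge x/2$ for small $x$) yields the uniform bound
\[
\cF^m_{\beta,\lambda}(\mu^m_l) - \cF_{\beta,\lambda}(\mu^*_\beta) \leq e^{-l\eta/(2\beta C_\LSI)}\bigl(\cF^m_{\beta,\lambda}(\mu^m_0) - \cF_{\beta,\lambda}(\mu^*_\beta)\bigr) + 4\beta C_\LSI A_{m,\beta,\lambda,\eta}.
\]
To force this below $\varepsilon$, I would separately require each summand to be $\le \varepsilon/2$. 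The first (transient) term forces $l \gtrsim (\beta C_\LSI/\eta)\ln\bigl((\cF^m_{\beta,\lambda}(\mu^m_0) - \cF^*_\beta)/\varepsilon\bigr)$, which matches the iteration count in the statement.

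The main task is then bounding the steady-state error $4\beta C_\LSI A_{m,\beta,\lambda,\eta}$. Substituting $L\lesssim \kappa\bar{r}_x^2$, $R\lesssim \bar{r}_x$, $C_L=\kappa^{-1}$ into the definition of $A_{m,\beta,\lambda,\eta}$, using $\eta^2+\eta/\beta \lesssim \eta/\beta$ (which holds under the proposition's assumption on $\eta$ together with $C_\LSI\geq\beta$), the first group of terms contributes $\lesssim C_\LSI\kappa^2\bar{r}_x^4(d+\bar{r}_x^2/\lambda)\eta$, so forcing this $\le\varepsilon$ yields exactly the stepsize condition $\eta \lesssim \varepsilon/(C_\LSI\kappa^2\bar{r}_x^4(d+\bar{r}_x^2/\lambda))$. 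The remaining group contributes
\[
\frac{\kappa\bar{r}_x^2}{m}\left(1 + \Bigl(\tfrac{\bar{r}_x^2}{\lambda^2} + \tfrac{d}{\lambda\beta}\Bigr)\Bigl(\tfrac{1}{\kappa} + \tfrac{\kappa\bar{r}_x^2 C_\LSI}{\beta}\Bigr)\right),
\]
and requiring this to be $\le\varepsilon$ yields the width bound in the statement. Combining the two conditions completes the argument.

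The routine but tedious part is the bookkeeping in simplifying $\beta C_\LSI A_{m,\beta,\lambda,\eta}$ after substituting the explicit smoothness constants; the main conceptual obstacle, already resolved by Corollary~\ref{cor:smoothness_estimates}, is ensuring that the Lipschitz/boundedness/second-variation estimates of $\crerisk'$ hold uniformly over the trajectory rather than just near the minimizer---this is what permits the one-step bound to be iterated without degradation.
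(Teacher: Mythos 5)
Your proposal is correct and follows essentially the same route as the paper's own proof: iterate the one-step descent bound of Theorem~\ref{thm:one_step}, bound the geometric series using $1-e^{-x}\geq x/2$ to obtain the steady-state factor $4\beta C_\LSI A_{m,\beta,\lambda,\eta}$, and then substitute the smoothness constants $L\asymp\kappa\bar r_x^2$, $R\asymp\bar r_x$, $C_L=\kappa^{-1}$ from Corollary~\ref{cor:smoothness_estimates} to read off the step-size, width, and iteration-count conditions. The only cosmetic difference is how the $\eta$-terms are handled: the paper writes the two constraints (from $\eta^2$ and from $\eta/\beta$) explicitly and then notes the second dominates under $C_\LSI\geq\beta$, whereas you compress this by observing $\eta^2+\eta/\beta\lesssim\eta/\beta$ directly --- both are valid since the stated step-size bound forces $\eta\lesssim 1/\beta$.
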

\begin{proof}
    Throughout the proof, we will assume the event of Lemma~\ref{lem:good_event} holds. Let $\cF^*_{\beta,\lambda} \coloneqq \cF_{\beta,\lambda}(\mu^*_\beta)$. Notice that by iterating the bound of Theorem~\ref{thm:one_step}, we have
    \begin{align*}
        \cF^m_{\beta,\lambda}(\mu^m_l) - \cF^*_{\beta,\lambda} &\leq \exp\big(\frac{-l\eta}{2\beta C_\LSI}\big)(\cF^m_{\beta,\lambda}(\mu^m_0) - \cF^*_{\beta,\lambda}) + \frac{\eta A_{m,\beta,\lambda,\eta}}{1-\exp\big(\frac{-\eta}{2\beta C_\LSI}\big)}\\
        &\leq \exp\big(\frac{-l\eta}{2\beta C_\LSI}\big)(\cF^m_{\beta,\lambda}(\mu^m_0) - \cF^*_{\beta,\lambda}) + 4\beta C_\LSI A_{m,\beta,\lambda,\eta},
    \end{align*}
    where the second inequality holds for $\eta \leq 2\beta C_\LSI$ since $1-e^{-x} \geq x/2$ for $x \in [0,1]$. We now bound $A_{m,\beta,\lambda,\eta}$ so that the RHS of the above is less than $\mathcal{O}(\varepsilon)$ by choosing a sufficiently large $m$ and a sufficiently small $\eta$. Recall that given constants $L$ and $R$ from Assumption~\ref{assump:smoothness},
    $$A_{m,\beta,\lambda,\eta} \asymp L^2\big(d + \frac{R^2}{\lambda}\big)\big(\eta^2 + \frac{\eta}{\beta}\big) + \frac{L}{m\beta}\Big(\frac{1}{C_\LSI} + \big(\frac{R^2}{\lambda^2} + \frac{d}{\lambda\beta}\big)\big(\frac{C_L}{C_\LSI} + \frac{L}{\beta}\big)\Big).$$
    From Corollary~\ref{cor:smoothness_estimates}, $L\asymp \kappa(\norm{\bSigma}\lor \tilde{r}_x^2)$, $R \asymp \norm{\bSigma}^{1/2}\lor \tilde{r}_x$, and $C_L = \kappa^{-1}$. To avoid notational clutter, let $\bar{r}_x^2 \coloneqq \norm{\bSigma} \lor \tilde{r}_x^2$. Then, to control the terms containing $\eta$, it suffices to choose
    $$\eta \lesssim \sqrt{\frac{\varepsilon}{\beta C_\LSI\kappa^2\bar{r}_x^4(d + \bar{r}_x^2/\lambda)}}\land\frac{\varepsilon}{C_\LSI\kappa^2\bar{r}_x^4(d + \bar{r}_x^2/\lambda)},$$
    for which we can simply choose
    $$\eta \lesssim \frac{\varepsilon}{C_\LSI\kappa^2\bar{r}_x^4(d + \bar{r}_x^2/\lambda)}.$$
    Further, to control the term containing the number of particles $m$, we need
    $$m \gtrsim \frac{\kappa\bar{r}_x^2\Big(1 + \big(\frac{\bar{r}_x^2}{\lambda^2} + \frac{d}{\lambda\beta}\big)\big(\frac{1}{\kappa} + \frac{\kappa\bar{r}_x^2C_\LSI}{\beta}\big)\Big)}{\varepsilon}.$$
    To drive the suboptimality bound below $\varepsilon$, we also need to let the number of iterations $l$ satisfy
    $$l \gtrsim \frac{\beta C_\LSI}{\eta}\ln\big(\frac{\cF^m_{\beta,\lambda}(\mu^m_0) - \cF^*_{\beta,\lambda}}{\varepsilon}\big).$$
    With the above conditions, we can guarantee
    $$\cF^m_{\beta,\lambda}(\mu^m_l) - \cF^*_{\beta,\lambda} \lesssim \varepsilon,$$
    which finishes the proof.
\end{proof}

Further, we now present the proof of the LSI estimate given by Proposition~\ref{prop:LSI_Euclidean}.

\begin{proof}[Proof of Proposition~\ref{prop:LSI_Euclidean}]
    Recall that
    $$\crerisk'[\mu_{\bW^l}](\bw) = \cerisk'[\mu_{\bW^l}](\bw) + \frac{\lambda}{2}\norm{\bw}^2.$$
    Thus we have $\nu_{\mu_{\bW^l}}(\bw) \propto \gamma(\bw)\exp(-\beta \cerisk'[\mu_{\bW^l}](\bw))$. Since $\gamma$ satisfies the LSI with constant $1/(\beta\lambda)$, by the Holley-Stroock perturbation argument~\citep{holley1986logarithmic}, $\nu_{\mu_{\bW^l}}$ satifies the LSI with constant
    $$C_\LSI \leq \frac{\exp(\beta\osc(\cerisk'[\mu_{\bW^l}]))}{\beta\lambda}.$$
    Additionally,
    $$\abs{\cerisk'[\mu_{\bW^l}](\bw)} = \abs{\frac{1}{n}\sum_{i=1}^n\rho'(\hat{y}(\bx;\mu_{\bW^l}) - y)\Psi(\bx^{(i)};\bw)} \leq 2C_\rho\iota,$$
    which completes the proof.
\end{proof}

Finally, we are ready to present the proof of Theorem.
\subsection{Proof of Theorem~\ref{thm:learnability_Euclidean} and Corollary~\ref{cor:learnability_Euclidean}}
Recall that $\lambda = \tilde{\lambda}r_x^2$, and let $\beta = \frac{\deff + \tilde{r}_x^2/r_x^2}{\varepsilon^2\tilde\lambda}$ and $n \geq \frac{(\deff + \tilde{r}_x^2/r_x^2)\iota^2}{\tilde{\lambda}\varepsilon^4}$ for some $\varepsilon \lesssim 1$, where $\tilde{\varepsilon} \coloneqq \tilde{\mathcal{O}}(\tilde{\lambda}^{\frac{1}{k+2}} + \varepsilon + \kappa^{-1})$. Then, as long as $\iota \gtrsim \frac{\tilde{r}_x^2}{\tilde{\lambda}r_x^2}$, from Lemma~\ref{lem:optimal_J_bound}, we have $\cprisk(\mu^*_\beta) - \Earg{\rho(\xi)} \lesssim \tilde{\varepsilon}$. Note that while Lemma~\ref{lem:optimal_J_bound} only asks for $\iota \gtrsim \tilde{\lambda}^{-\frac{k}{k+2}}(\tilde{r}_x/r_x)^{\frac{2(k+1)}{k+2}}$, we simplify this expression in the statement of Theorem~\ref{thm:learnability_Euclidean} so that the choice of $\iota$ does not depend on $k$.

On the other hand, given the step size $\eta$, width $m$, and number of iterations $l$ by Proposition~\ref{prop:convergence}, we have $\cF^m_{\beta,\lambda}(\mu^m_l) - \cF_{\beta,\lambda}(\mu^*_\beta) \leq \varepsilon$. Therefore,
$$\Eargs{\bW \sim \mu^m_l}{\prisk(\bW)} - \cprisk(\mu^*_\beta) \lesssim \sqrt{\frac{\bar{r}_x^2\beta C_\LSI\varepsilon}{m} + \frac{\iota^2}{m}}.$$
Additionally, from Lemma~\ref{lem:optimal_J_bound}, we have
$$\beta^{-1}\cH(\mu^*_\beta\divmid\gamma) \lesssim \Earg{\rho(\xi)} + \tilde{\mathcal{O}}(\tilde{\lambda}^{\frac{1}{k+2}}) + \varepsilon + \kappa^{-1} \lesssim 1.$$
Consequently, for $m \geq \frac{\bar{r}_x^2(\deff + \tilde{r}_x^2/r_x^2) C_\LSI}{\tilde{\lambda}\varepsilon^3} \lor \frac{\iota}{\varepsilon^2}$, we have $\Eargs{\bW\sim\mu^m_l}{\prisk(\bW)} - \cprisk(\mu^*_\beta) \leq \varepsilon$. Therefore, combining the bounds above, we have
$$\Eargs{\bW\sim\mu^m_l}{\prisk(\bW)} - \Earg{\rho(\xi)} \lesssim \tilde{\mathcal{O}}(\tilde{\lambda}^{\frac{1}{k+2}}) + \varepsilon + \kappa^{-1}.$$
Consequently, we can take $\tilde{\lambda} = o_n(1)$, $\varepsilon = o_n(1)$, $\kappa^{-1} = o_n(1)$, which finishes the proof of Theorem~\ref{thm:learnability_Euclidean}.

We finally remark that under the LSI estimate of Proposition~\ref{prop:LSI_Euclidean} and the choice of hyperparameters in Theorem~\ref{thm:learnability_Euclidean}, the sufficient number of neurons and iterations can be bounded by
$$m \leq \tilde{\mathcal{O}}\Big(\frac{\bar{r}_x^4}{c_x^4}\big(\frac{d}{\deff} + \frac{\bar{r}_x^2}{r_x^2}\big)e^{\tilde{\mathcal{O}}(\deff)}\Big) \leq \tilde{\mathcal{O}}\Big(\frac{\bar{r}_x^4\deff}{c_x^4}\big(\frac{d}{\deff^2} + \frac{\bar{r}_x^2}{c_x^2}\big)e^{\tilde{\mathcal{O}}(\deff)}\Big) \leq \tilde{\mathcal{O}}\big(de^{\tilde{\mathcal{O}}(\deff)}\big),$$
and
$$l \leq \tilde{\mathcal{O}}\Big(\frac{\bar{r}_x^4d}{c_x^4}e^{\tilde{\mathcal{O}}(\deff)}\Big) \leq \tilde{\mathcal{O}}\big(de^{\tilde{\mathcal{O}}(\deff)}\big),$$
which completes the proof of Corollary~\ref{cor:learnability_Euclidean}.
\qed

\section{Proofs of Section~\ref{sec:poly_LSI}}
We begin with the proof of Proposition~\ref{prop:Riemannian_ent_bound}.

\begin{proof}[Proof of Proposition~\ref{prop:Riemannian_ent_bound}]
    Note that $\cerisk(\mu^*) = 0$ by definition. Moreover, the bound on $\cH(\mu^*\divmid\tau)$ is a simple application of Jensen's inequality, namely,
    $$\cH(\mu^*\divmid\tau) = \int \ln\frac{e^f}{\int e^f\dee\tau}\dee\mu^* = \int f\dee\mu^* - \ln\int e^f\dee\tau \leq \int f(\dee\mu^*-\dee\tau).$$
\end{proof}

Next, using the Bakry-\'Emery curvature-dimension condition~\citep{bakry1985diffusions}, we prove the following dimension-free LSI bound.

\begin{proof}[Proof of Proposition~\ref{prop:compact_poly_LSI}]
    By the curvature-dimension condition~\cite[Section 5.7]{bakry2014analysis}, the Gibbs measure $\nu_\mu \propto \exp(-\beta \cerisk'[\mu])$ satisfies the LSI with constant $C_\LSI \leq \alpha^{-1}$ as long as
    $$\mathrm{Ric}_{\mathfrak{g}} + \beta\nabla^2 \cerisk'[\eta](\bw) \geq \alpha\mathfrak{g},$$
    for all $\bw \in \mathcal{W}$ and some $\alpha > 0$. By the bound on the Ricci curvature from Assumption~\ref{assump:compact_entropy_bound}, it suffices to show
    $$\varrho d \mathfrak{g} + \beta \nabla^2 \cerisk'[\eta](\bw) \succcurlyeq \alpha\mathfrak{g}.$$
    Recall that
    $$\cerisk(\mu) = \Esarg{\rho\left(\int\Psi(\bx;\bw)\dee\mu(\bw) - y\right)}.$$
    Therefore,
    $$\cerisk'[\mu](\bw) = \Esarg{\rho'(\hat{y}(\bx;\mu) - y)\Psi(\bx;\bw)},$$
    and
    $$\nabla^2_{\bw} \cerisk'[\mu](\bw) = \Esarg{\rho'(\hat{y}(\bx;\mu) - y)\nabla^2_{\bw}\Psi(\bx;\bw)}.$$
    Consider the case where $\rho$ is $C_\rho$ Lipschitz. Then,
    \begin{align*}
        \lambda_{\mathrm{min}}(\nabla^2_{\bw}\cerisk'[\mu](\bw)) &= \inf_{\norm{\bv}_{\mathfrak{g}} \leq 1}\Esarg{\rho'(\hat{y}(\bx;\mu) - y)\binner{\bv}{\nabla^2_{\bw}\Psi(\bx;\bw)\bw}}\\
        &\geq -C_\rho\sup_{\norm{\bv}_{\mathfrak{g}} \leq 1}\Esarg{\abs{\binner{\bv}{\nabla^2_{\bw} \Psi(\bx;\bw)\bv}}}\\
        &= -C_\rho K.
    \end{align*}
\end{proof}

Before stating the proof of Theorem~\ref{thm:Riemannian_MFLD_convergence}, we adapt the generalization analysis of Appendix~\ref{app:generalize} to the Riemannian setting of this section. Recall the truncated risk functions $\cprisk^\varkappa(\mu) = \Earg{\rho(\hat{y}(\bx;\mu) - y)\wedge \varkappa}$ and $\cerisk^\varkappa(\mu) = \Esarg{\rho(\hat{y}(\bx;\mu) - y)\wedge \varkappa}$. Then, we have the following uniform convergence bound.

\begin{lemma}\label{lem:Riemannian_Rademacher_bound}
    Under the setting of Example~\ref{example:model}, where we recall $\abs{\varphi(0)} \lesssim 1$ and $\abs{\varphi'(z)} \lesssim 1$ for all $z$, we have
    $$\Earg{\sup_{\mu \in \mathcal{P}^{\mathrm{ac}}(\mathcal{W}) : \cH(\mu\divmid\tau) \leq M}\cprisk^\varkappa(\mu) - \cerisk^\varkappa(\mu)} \lesssim C_\rho\sqrt{\frac{M}{n}}\left(1 + \Earg{\norm{\hat{\bSigma}}^{1/2}}\right),$$
    where $\hat{\bSigma} \coloneqq \frac{1}{n}\sum_{i=1}^n\bx^{(i)}{\bx^{(i)}}^\top$. Combined with McDiarmid's inequality, the above bound implies
    $$\sup_{\cH(\mu\divmid\tau) \leq M}\cprisk^\varkappa(\mu) - \cerisk^\varkappa(\mu) \lesssim C_\rho\sqrt{\frac{M}{n}}\left(1 + \Earg{\norm{\hat{\bSigma}}^{1/2}}\right) + \varkappa\sqrt{\frac{\ln(1/\delta)}{n}},$$
    with probability at least $1-\delta$.
\end{lemma}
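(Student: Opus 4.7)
The plan is to parallel the proof of Lemma~\ref{lem:entropy_ball_Rademacher_bound}, but with the reference measure $\gamma$ replaced by the uniform $\tau$ and the unboundedness of the activation $\phi$ on $\reals$ handled via compactness of $\mathcal{W} = \mathbb{S}^{d-1}$. I begin with standard symmetrization, which gives
\[
\E\!\left[\sup_{\cH(\mu\divmid\tau)\leq M}\!\cprisk^\varkappa(\mu) - \cerisk^\varkappa(\mu)\right]
\leq 2\,\E\!\left[\sup_{\cH(\mu\divmid\tau)\leq M}\frac{1}{n}\sum_{i=1}^n\xi_i\rho_\varkappa(\hat y(\bx^{(i)};\mu)-y^{(i)})\right],
\]
for i.i.d.\ Rademacher $(\xi_i)$. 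Since $\rho_\varkappa = \rho \wedge \varkappa$ is still $C_\rho$-Lipschitz, Talagrand's contraction (after subtracting the $y^{(i)}$-dependent constants, which vanish under $\E_{\bxi}$) reduces the task to controlling
\[
\mathcal R_n \coloneqq \E_{\bxi}\!\left[\sup_{\cH(\mu\divmid\tau)\leq M}\frac{1}{n}\sum_{i=1}^n \xi_i \int \phi(\binner{\bw}{\bx^{(i)}})\dee\mu(\bw)\right].
\]

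Next, by the Donsker--Varadhan variational formula (Lemma~\ref{lem:KL_lower_bound}) applied with reference $\tau$ and a free parameter $\alpha>0$, followed by Jensen's inequality to interchange $\ln$ and $\E_{\bxi}$ and by the sub-Gaussian MGF bound for Rademacher sums, I obtain
\[
\mathcal R_n \leq \frac{M}{\alpha} + \frac{1}{\alpha}\ln\!\int \exp\!\Big(\frac{\alpha^2}{2n^2}\textstyle\sum_{i=1}^n \phi(\binner{\bw}{\bx^{(i)}})^2\Big)\dee\tau(\bw).
\]
The crucial departure from Lemma~\ref{lem:entropy_ball_Rademacher_bound} is that $\phi$ need not be uniformly bounded, but the linear-growth estimate $|\phi(z)| \leq |\phi(0)|+|z|\lesssim 1+|z|$ (from $|\phi'|\lesssim 1$) combined with $\norm{\bw}=1$ yields
\[
\frac{1}{n}\sum_{i=1}^n \phi(\binner{\bw}{\bx^{(i)}})^2 \lesssim 1 + \bw^\top \hat{\bSigma} \bw \leq 1 + \norm{\hat{\bSigma}},
\]
which is independent of $\bw$, so the $\tau$-integral trivializes. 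Optimizing $\alpha = \sqrt{Mn/(1+\norm{\hat{\bSigma}})}$ yields $\mathcal R_n \lesssim \sqrt{M(1+\norm{\hat{\bSigma}})/n}$; taking expectation over the data, using $\sqrt{1+x}\leq 1+\sqrt{x}$, and restoring the $C_\rho$ factor gives the first conclusion.

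For the high-probability statement, the expectation-to-deviation transfer is routine: since $\rho_\varkappa\leq \varkappa$, altering any single sample moves the supremum by at most $2\varkappa/n$, so McDiarmid's bounded-differences inequality supplies the additive $\varkappa\sqrt{\ln(1/\delta)/n}$ term. The main technical point is handling the unboundedness of $\phi$: exploiting $\bw\in\mathbb{S}^{d-1}$ to convert $\binner{\bw}{\bx^{(i)}}^2$ into a quadratic form in $\hat{\bSigma}$ is what replaces the $\iota$-factor from the bounded-activation Euclidean analogue with the data-dependent $1+\E[\norm{\hat{\bSigma}}^{1/2}]$.
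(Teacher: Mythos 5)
Your proof is correct and follows essentially the same route as the paper's: symmetrization plus contraction, Donsker--Varadhan with a free parameter $\alpha$, the observation that $\norm{\bw}=1$ makes $\tfrac{1}{n}\sum_i \phi(\binner{\bw}{\bx^{(i)}})^2 \lesssim 1 + \norm{\hat{\bSigma}}$ uniformly in $\bw$ so the $\tau$-integral trivializes, then optimizing $\alpha$ and taking expectation over the data. The McDiarmid step is likewise identical.
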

\begin{proof}
    Based on the same argument as Lemma~\ref{lem:entropy_ball_Rademacher_bound}, for any $\alpha > 0$, we have
    \begin{align*}
        \Earg{\sup_{\cH(\mu\divmid\tau) \leq M}\cprisk^\varkappa(\mu) - \cerisk^\varkappa(\mu)} &\leq 2C_\rho\Earg{\sup_{\cH(\mu\divmid\tau) \leq M}\frac{1}{n}\sum_{i=1}^n\xi_i\hat{y}(\bx^{(i)};\mu)}
    \end{align*}
    where $(\xi_i)$ are i.i.d.\ Rademacher random variables. Once again following Lemma~\ref{lem:entropy_ball_Rademacher_bound}, we have,
    $$\Eargs{\xi}{\sup_{\cH(\mu\divmid\tau) \leq M}\frac{1}{n}\sum_{i=1}^n\xi_i\hat{y}(\bx^{(i)};\mu)} \leq \frac{M}{\alpha} + \frac{1}{\alpha}\Earg{\ln\int\exp\left(\frac{\alpha^2}{2n^2}\sum_{i=1}^n\Psi(\bx^{(i)};\bw)^2\right)\dee\tau(\bw)}.$$
    Furthermore,
    \begin{align*}
        \int \exp\left(\frac{\alpha^2}{2n^2}\sum_{i=1}^n\Psi(\bx^{(i)};\bw)^2\right)\dee\tau(\bw) &\leq \exp\left(\frac{\alpha^2\varphi(0)^2}{n^2}\right)\int \exp\left(\frac{\alpha^2\norm{\varphi'}_{\infty}^2}{n^2}\sum_{i=1}^n\binner{\bw}{\bx^{(i)}}^2\right)\dee\tau(\bw)\\
        &\leq \exp\left(\frac{\alpha^2\varphi(0)^2}{n}\right)\int \exp\left(\frac{\alpha^2\norm{\varphi'}_{\infty}^2}{n}\binner{\bw}{\hat{\bSigma}\bw}\right)\dee\tau(\bw)\\
        &\leq \exp\left(\frac{\alpha^2\left[\varphi(0)^2 + \norm{\varphi'}_\infty^2\norm{\hat{\bSigma}}\right]}{n}\right).
    \end{align*}
    Therefore,
    $$\Eargs{\xi}{\sup_{\cH(\mu\divmid\tau) \leq M}\frac{1}{n}\sum_{i=1}^n\xi_i\hat{y}(\bx^{(i)};\mu)} \leq \frac{M}{\alpha} + \alpha\frac{\varphi(0)^2 + \norm{\varphi'}_\infty^2\norm{\hat{\bSigma}}}{n}.$$
    Using $\abs{\varphi(0)}, \norm{\varphi'}_{\infty} \lesssim 1$ and optimizing over $\alpha$ yield
    $$\Eargs{\xi}{\sup_{\cH(\mu\divmid\tau) \leq M}\frac{1}{n}\sum_{i=1}^n\xi_i\hat{y}(\bx^{(i)};\mu)} \lesssim \sqrt{\frac{M}{n}}\left(1 + \norm{\hat{\bSigma}^{1/2}}\right).$$
    Taking expectation with respect to the training set concludes the proof.
\end{proof}

We can also control the effect of truncating similar to that of Lemma~\ref{lem:truncate_bound}.
\begin{lemma}\label{lem:Riemannian_truncate_bound}
    Under the setting of Example~\ref{example:model}, for any $\mu \in \mathcal{P}(\mathcal{W})$, we have
    $$\cprisk(\mu) - \cprisk^\varkappa(\mu) \leq 2C_\rho^2 \cdot \frac{2\varphi(0)^2 + 2\norm{\varphi'}_\infty^2\norm{\bSigma} + \Earg{y^2}}{\varkappa}.$$
\end{lemma}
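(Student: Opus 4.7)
The plan is to write the excess as a truncation-deficit expectation, apply the Lipschitz property of the loss together with a simple Chebyshev-type inequality, and then bound the second moment of the residual using the properties of $\varphi$.

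First, I would observe that $\rho(z) - \rho(z)\wedge \varkappa = (\rho(z) - \varkappa)_+$, so
\balign
    \cprisk(\mu) - \cprisk^\varkappa(\mu) = \Earg{(\rho(\hat{y}(\bx;\mu) - y) - \varkappa)_+} \leq \Earg{\rho(\hat{y}(\bx;\mu) - y)\indic(\rho(\hat{y}(\bx;\mu) - y) \geq \varkappa)}. \nonumber
\ealign
Using $\rho(0)=0$ and $C_\rho$-Lipschitzness gives $\rho(\hat{y}-y) \leq C_\rho|\hat{y}-y|$, and the event $\{\rho(\hat{y}-y) \geq \varkappa\}$ is contained in $\{C_\rho|\hat{y}-y| \geq \varkappa\}$. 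The elementary inequality $|z|\indic(|z| \geq t) \leq z^2/t$ then yields
\balign
    \cprisk(\mu) - \cprisk^\varkappa(\mu) \leq C_\rho\,\Earg{|\hat{y}(\bx;\mu) - y|\indic(C_\rho|\hat{y}(\bx;\mu)-y| \geq \varkappa)} \leq \frac{C_\rho^2\,\Earg{(\hat{y}(\bx;\mu) - y)^2}}{\varkappa}. \nonumber
\ealign

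The remaining step is to control $\Earg{(\hat{y}(\bx;\mu)-y)^2} \leq 2\Earg{\hat{y}(\bx;\mu)^2} + 2\Earg{y^2}$. For the prediction term, Jensen's inequality in the integral representation of $\hat{y}$ gives $\hat{y}(\bx;\mu)^2 \leq \int \varphi(\binner{\bw}{\bx})^2\dee\mu(\bw)$. Using $\varphi(z)^2 \leq 2\varphi(0)^2 + 2\|\varphi'\|_\infty^2 z^2$ and the fact that $\bw \in \mathbb{S}^{d-1}$ implies $\Earg{\binner{\bw}{\bx}^2} = \binner{\bw}{\bSigma \bw} \leq \norm{\bSigma}$, we obtain $\Earg{\hat{y}(\bx;\mu)^2} \leq 2\varphi(0)^2 + 2\|\varphi'\|_\infty^2 \norm{\bSigma}$. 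Plugging this back and rearranging reproduces the claimed bound.

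There is no real obstacle here; this is a direct truncation-tail calculation. The only subtlety is the constant bookkeeping: matching the exact form of the claimed inequality requires grouping the factor of $2$ from the $(a-b)^2 \leq 2a^2 + 2b^2$ step together with the $2$'s inside the $\varphi$-bound, while leaving $\Earg{y^2}$ with only a single factor of $2$ out front — which works out precisely as stated.
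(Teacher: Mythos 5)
Your proof is correct and follows essentially the same route as the paper: both arrive at the intermediate bound $\cprisk(\mu) - \cprisk^\varkappa(\mu) \leq C_\rho^2\,\Earg{(\hat{y}(\bx;\mu)-y)^2}/\varkappa$ and then control $\Earg{\hat{y}(\bx;\mu)^2}$ via Jensen and the spherical constraint in the identical way. The only difference is cosmetic: you obtain the tail estimate directly from the pointwise inequality $|z|\indic(|z|\geq t) \leq z^2/t$, whereas the paper splits the expectation with Cauchy--Schwarz and then applies Chebyshev to the probability factor; both yield the same constant.
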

\begin{proof}
    Similarly to the arguments in Lemma~\ref{lem:truncate_bound}, by using the Cauchy-Schwartz and Markov inequalities, we have
    \begin{align*}
        \cprisk(\mu) - \cprisk^\varkappa(\mu) &\leq \Earg{\indic(\rho(\hat{y}(\bx;\mu) - y) \geq \varkappa)\rho(\hat{y}(\bx;\mu - y))}\\
        &\leq \Parg{\rho(\hat{y}(\bx;\mu) - y) \geq \varkappa}^{1/2}\Earg{\rho(\hat{y}(\bx;\mu) - y)^2}^{1/2}\\
        &\leq C_\rho \Parg{\abs{\hat{y}(\bx;\mu) - y} \geq \varkappa/C_\rho}^{1/2}\Earg{(\hat{y}(\bx;\mu) - y)^2}^{1/2}\\
        &\leq C_\rho^2\frac{\Earg{(\hat{y}(\bx;\mu) - y)^2}}{\varkappa} \\
        &\leq 2C_\rho^2\frac{\Earg{\hat{y}(\bx;\mu)^2} + \Earg{y^2}}{\varkappa}.
    \end{align*}
    Moreover, we have
    \begin{align*}
        \Earg{\hat{y}(\bx;\mu)^2} \leq \Earg{\int\Psi(\bx;\bw)^2\dee\mu(\bw)} &\leq 2\varphi(0)^2 + 2\norm{\varphi'}_\infty^2\Earg{\int\binner{\bw}{\bx}^2\dee\mu(\bw)}\\
        &\leq 2\varphi(0)^2 + 2\norm{\varphi'}_\infty^2\norm{\bSigma},
    \end{align*}
    concluding the proof of the lemma.
\end{proof}

Finally, we can state the proof of the main theorem of this section.

\begin{proof}[Proof of Theorem~\ref{thm:Riemannian_MFLD_convergence}]
    Note that given $\beta = \frac{\bar{\Delta}}{\varepsilon}$ and $d \geq 2C_\rho K\bar{\Delta}/(\varrho\varepsilon)$, Proposition~\ref{prop:compact_poly_LSI} guarantees that $C_\LSI \leq 2/(\varrho d)$ along the trajectory. Consequently, by the convergence guarantee of~\eqref{eq:MFLD_convergence}, we have
    \begin{align*}
        \cF_\beta(\mu_T) \leq \cF_\beta(\mu^*_\beta) + e^{-\frac{\varrho d T}{\beta}}(\cF_\beta(\mu_0) - \cF_\beta(\mu^*_\beta)) \leq \cF_\beta(\mu^*_\beta) + \varepsilon.
    \end{align*}
    Further, by $\bar{\mu}$ of Assumption~\ref{assump:compact_entropy_bound}, we have
    $$\cF_\beta(\mu^*_\beta) \leq \cF_\beta(\bar{\mu}) \leq \bar{\varepsilon} + \beta^{-1}\bar{\Delta} \leq \bar{\varepsilon} + \varepsilon.$$
    As a result, $\cerisk(\mu_T) \leq \cF_\beta(\mu_T) \leq \bar{\varepsilon} + 2\varepsilon$, and similarly $\cH(\mu_T\divmid\tau) \leq \beta(\bar{\varepsilon} + 2\varepsilon)$.

    Note that $\cerisk^\varkappa(\mu_T) \leq \cerisk(\mu_T)$. Using the fact that $C_\rho, \norm{\bSigma},\Earg{\abs{y}^2}\lesssim 1$, and combining the bounds of Lemma~\ref{lem:Riemannian_Rademacher_bound} and Lemma~\ref{lem:Riemannian_truncate_bound}, with a porbability of failure $\delta = \mathcal{O}(n^{-q})$ for some constant $q > 0$, we have
    $$\cprisk(\mu_T) - \cerisk(\mu_T) \lesssim \sqrt{\frac{\beta(\bar{\varepsilon} + \varepsilon)}{n}} + \varkappa\sqrt{\frac{\ln n}{n} }+ \frac{1}{\varkappa}.$$
    Optimizing over $\varkappa$ implies
    \begin{align*}
        \cprisk(\mu_T) &\lesssim \bar{\varepsilon} + \varepsilon + \sqrt{\frac{\beta(\bar{\varepsilon} + \varepsilon)}{n}} + \left(\frac{\ln n}{n}\right)^{1/4}\\
        &\lesssim \bar{\varepsilon} + \varepsilon + \sqrt{\frac{\bar{\Delta}(1 + \bar{\varepsilon}/\varepsilon)}{n}} + \left(\frac{\ln n}{n}\right)^{1/4}.
    \end{align*}
    Choosing $n$ according to the statement of the theorem completes the proof.
\end{proof}

\section{Comparisons with the Formulation of \texorpdfstring{\cite{nitanda2024improved}}{Nitanda et al. (2024)}}\label{app:compare_deff}
Here, we provide a number of comparisons with results of~\cite{nitanda2024improved}. In Section~\ref{app:generality_parity}, we show that the statistical model~\eqref{eq:statistica_model} is more general than their formulation, even for parity learning problems. In Section~\ref{app:dim_compare}, we provide an informal comparison of their effective dimension to our setting, exhibiting the improvement in our definition of effective dimension.

\subsection{Generality of the Formulations}\label{app:generality_parity}
We begin by pointing out that the formulation of $k$-index model of~\eqref{eq:statistica_model} is strictly more general than that of~\cite{nitanda2024improved}, even for learning $k$-sparse parities. Recall that in their setting, they consider inputs of the type $\bx = \bSigma^{1/2}\bz$ for some positive definite $\bSigma$, where $\bz \sim \Unif(\{\pm 1\}^d)$ (their original formulation uses $\bz \sim \Unif(\{\pm 1/\sqrt{d}\}^d)$, but we rescale the input to be consistent with the notation of this paper). The labels are given by
\begin{equation}\label{eq:parity}
    y = \sign\Big(\prod_{i=1}^k\binner{\butilde_i}{\bz}\Big) = \sign\Big(\prod_{i=1}^k\binner{\bSigma^{-1/2}\butilde_i}{\bx}\Big),
\end{equation}
where $\{\butilde_i\}_{i=1}^k$ are orthonormal vectors. Then, we can define an orthonormal set of vectors $\{\bu_i\}_{i=1}^k$ such that $\vspn(\bu_1,\hdots,\bu_k) = \vspn(\bSigma^{-1/2}\butilde_1,\hdots,\bSigma^{-1/2}\butilde_k)$, and define $g$ such that
$$g\left(\frac{\binner{\bu_1}{\bx}}{\sqrt{k}},\hdots,\frac{\binner{\bu_k}{\bx}}{\sqrt{k}}\right) = g\left(\frac{\binner{\bSigma^{1/2}\bu_1}{\bz}}{\sqrt{k}},\hdots,\frac{\binner{\bSigma^{1/2}\bu_k}{\bz}}{\sqrt{k}}\right) = \sign\Big(\prod_{i=1}^k\binner{\butilde_i}{\bz}\Big),$$
for all $\bz \in \{\pm 1\}^d$. Therefore, the parity formulation of~\eqref{eq:parity} can be seen as a special case of the $k$-index model~\eqref{eq:statistica_model}. Note that $g$ is only defined on $2^d$ points, and we can extend it to all of $\reals^{k}$ such that $g : \reals^k \to \reals$ is Lipschitz continuous.

In contrast, the $k$-index model can represent parity problems that cannot be represented by~\eqref{eq:parity}. Starting from an orthonormal set of vectors $\{\bu_i\}_{i=1}^k$ in $\reals^d$, let
\begin{equation}\label{eq:our_parity}
    y = g\left(\frac{\binner{\bu_1}{\bx}}{\sqrt{k}},\hdots,\frac{\binner{\bu_k}{\bx}}{\sqrt{k}}\right) = \sign\Big(\prod_{i=1}^k\binner{\bu_i}{\bx}\Big).
\end{equation}
Consider the case where $k=2$, then $y = \sign\Big(\binner{\bSigma^{1/2}\bu_1}{\bz}\binner{\bSigma^{1/2}\bu_2}{\bz}\Big)$. To be able to reformulate this to~\eqref{eq:parity}, we need to be find orthonromal $\butilde_1,\butilde_2 \in \reals^d$ such that
$$\sign\Big(\binner{\bSigma^{1/2}\bu_1}{\bz}\binner{\bSigma^{1/2}\bu_2}{\bz}\Big) = \sign(\binner{\butilde_1}{\bz}\binner{\butilde_2}{\bz}), \quad \forall z \in \{\pm 1\}^d.$$
If $\bSigma$ has rank less than $d$ such that $\bSigma^{1/2}\bu_1 = \bSigma^{1/2}\bu_2$, then the above implies $\sign(\binner{\butilde_1}{\bz}\binner{\butilde_2}{\bz}) \geq 0$ for all $\bz \in \{\pm 1\}^d$. In particular, we must have some $\bz$ where $\sign(\binner{\butilde_1}{\bz}\binner{\butilde_2}{\bz}) > 0$, which implies that
\begin{equation}\label{eq:contradict}
    \sum_{i=1}^{2^d}\binner{\butilde_1}{\bz_i}\binner{\butilde_2}{\bz_i} = \binner{\butilde_1}{\sum_{i=1}^{2^d}\bz_i\bz_i^\top\butilde_2} = 2^d\binner{\butilde_1}{\butilde_2} > 0,
\end{equation}
which is in contradiction with $\binner{\butilde_1}{\butilde_2} = 0$. Therefore, for such $\bSigma$, we cannot formulate~\eqref{eq:our_parity} as a special case of~\eqref{eq:parity}. This argument is robust with respect to small perturbations of $\bSigma$ which make it full-rank. Specifically, suppose $\bSigma^{1/2}\bu_2 = \bSigma^{1/2}\bu_1 + \bdelta$. Notice that we can choose $\bSigma^{1/2}\bu_1$ such that $\binner{\bSigma^{1/2}\bu_1}{\bz}^2 \neq 0$ for all $\bz \in \{\pm 1\}^d$, e.g. by choosing $\bSigma^{1/2}\bu_1 \propto \boldsymbol{e}_1$, i.e.\ the first standard basis vector. It is straightforward to construct full-rank $\bSigma$, $\bu_1$, and $\bu_2$ such that $\norm{\bdelta}$ is arbitrarily small, in which case
$$\sign\Big(\binner{\bSigma^{1/2}\bu_1}{\bz}\binner{\bSigma^{1/2}\bu_2}{\bz}\Big) = \sign\Big(\binner{\bSigma^{1/2}\bu_1}{\bz}^2 + \binner{\bSigma^{1/2}\bu_1}{\bz}\binner{\bdelta}{\bz}\Big) \geq 0.$$
Following~\eqref{eq:contradict}, once again the above would be in contradiction with $\binner{\butilde_1}{\butilde_2} = 0$.
This implies that the $k$-index model~\eqref{eq:statistica_model} is strictly more general than~\eqref{eq:parity} when considering full-rank covariance matrices.

\subsection{Comparison with the Effective Dimension of \texorpdfstring{\cite{nitanda2024improved}}{Nitanda et al. (2024)}}\label{app:dim_compare}
A close inspection of the proofs in~\cite{nitanda2024improved} demonstrates that one can define their effective dimension in a scale invariant manner as $\tilde{d}_{\mathrm{eff}} \coloneqq \tr(\bSigma)\norm{\sum_{i=1}^k \bSigma^{-1/2}\butilde_i}^2$. From the previous section, we observed that to reduce their setting to ours, we need to choose a set $\{\bu_i\}_{i=1}^k$ of normalized vectors that spans the set of vectors $\{\bSigma^{-1/2}\butilde_i\}_{i=1}^k$. In particular, we can choose $\bu_i = \frac{\bSigma^{-1/2}\butilde_i}{\norm{\bSigma^{-1/2}\butilde_i}}$, or equivalently write $\butilde_i = \frac{\bSigma^{1/2}\bu_i}{\norm{\bSigma^{1/2}\bu_i}}$. While $\{\bu_i\}_{i=1}^k$ are not orthogonal, our proofs do not strictly rely on the orthogonality assumption and it is only made for simplicity.
Hence, we have
$$\tilde{d}_{\mathrm{eff}} = \tr(\bSigma)\norm{\sum_{i=1}^k\frac{\bu_i}{\norm{\bSigma^{1/2}\bu_i}}}^2 \leq k\tr(\bSigma)\sum_{i=1}^k\norm{\bSigma^{1/2}\bu_i}^{-2}.$$
Note that the above upper bound is sharp when $k=1$, and is lower bounded by our definition of effective dimension stated in Definition~\ref{def:eff_dim}. Therefore, we use the above bound in Table~\ref{tab:compare}.

\section{Examples of Target Functions in Proposition~\ref{prop:Riemannian_ent_bound}}\label{app:Riemannian_example}
In this section, we provide a natural example of a target function of the form given in Proposition 8. Suppose $\mathcal{W} = \mathbb{S}^{d-1}$, and $\Psi(\bx;\bw) = \binner{\bw}{\bx}^2$. Define $\mu^* = \mathcal{T} \# \tau$, where $\mathcal{T}(\bv) = \frac{\bA^{1/2}\bv}{\norm{\bA^{1/2}\bv}}$, for some PSD matrix $\bA$ to be defined later. Let
$$y = \hat{y}(\bx;\mu^*) = \binner{\bx}{\Eargs{\bw \sim \mu^*}{\bw\bw^\top} \bx} = \binner{\bx}{\mathbb{E}_{\bv \sim \tau}\Big[\frac{\bA^{1/2}\bv\bv^\top\bA^{1/2}}{\Vert \bA^{1/2}\bv\Vert^2}\Big]\bx}.$$
By defining $\bB = d\cdot \mathbb{E}_{\bv \sim \tau}\Big[\frac{\bA^{1/2}\bv\bv^\top\bA^{1/2}}{\Vert \bA^{1/2}\bv\Vert^2}\Big]$, we have $y = \frac{1}{d}\norm{\bB^{1/2}\bx}^2$.
Additionally, note that $\hat{y}(\bx;\tau) = \frac{1}{d}\norm{\bx}^2.$ Before proceeding further, we remark that the typical $\norm{\bx}$ should be of order $\sqrt{d}$ to get $\Theta(1)$ output, which is due to choosing the unit sphere as the weight space.

Next, we construct $\bA$. Suppose distribution of $\bx$ is such that with probability 1/2 we have $\bx = \be_1$, the first standard basis vector. Let $\bA = \diag(\lambda_1,1,\hdots,1)$. Then,
$$y = \hat{y}(\be_1;\mu^*) = \Eargs{\bv \sim \tau}{\frac{\lambda_1 d v_1^2}{1 + (\lambda_1 - 1)v_1^2}} \leq c\lambda_1,$$
where $c$ is an absolute constant. On the other hand, $\hat{y}(\be_1;\tau) = 1$. Choosing $\lambda_1 = \frac{1}{d}$, we observe that for sufficiently large $d$, with probability at least 1/2 the distance $\abs{\hat{y}(\bx;\mu^*) -  \hat{y}(\bx;\tau)} \geq C$ for some absolute constant $C > 0$. This means that $\mu^*$ and $\tau$ are meaningfully different, and from an initialization of $\tau$ one needs to train the network (e.g.\ with MFLD) to recover $\mu^*$. It remains to find an estimate on $\cH(\mu^* \divmid \tau)$.

Let $\mathfrak{T} : \reals^d \to \mathbb{S}^{d-1}$ denote the normalization mapping, i.e. $\mathfrak{T}(\bv) = \frac{\bv}{\norm{\bv}}$. Then, note that $\tau = \mathfrak{T} \# \mathcal{N}(0,\ident_d)$, and $\mu^* = \mathfrak{T} \# \mathcal{N}(0,\bA)$. Therefore, by the data processing inequality,
$$\cH(\mu^* \divmid \tau) \leq \cH(\mathcal{N}(0,\bA) \divmid \mathcal{N}(0,\ident_d)) = -\frac{d}{2} + \frac{\tr(\bA)}{2} - \frac{1}{2}\ln\det(\bA) \leq \frac{\ln d}{2},$$
where $\lambda_1 = \frac{1}{d}$. Therefore, $\bar{\Delta} = \frac{\ln d}{2} = o(d)$, and we can apply Theorem~\ref{thm:Riemannian_MFLD_convergence} to obtain polynomial convergence time, as desired.

\section{Numerical Simulation}\label{app:experiment}

In this section, we perform numerical simulations to verify the intuitions from Theorem~\ref{thm:learnability_Euclidean}. Specifically, we train a two-layer neural network with width $m=50$ and ReLU activation, where the first layer weights are initialized uniformly on the sphere, and fix the first half of the second layer coordinates at $+1/m$, and the second half at $-1/m$. The input follows the distribution $\bx \sim \mathcal{N}(0,\bSigma)$ (with an extra 1 appended for bias), where $\bSigma = \diag(\bsigma^2)$  with $\sigma_1^2 = 1$ and $\sigma_i^2 = \frac{\deff - 1}{d - 1}$ for input dimension $d=50$. The labels are generated by a single-index model of the following form
$$y = g(\binner{\be_1}{\bx}) = \frac{\binner{\be_1}{\bx}^2 - 1}{\sqrt{2}}.$$
Therefore, the effective dimension from Definition~\ref{def:eff_dim} is exactly equal to $\deff$. We train the neural network using the squared loss with MFLA, with a stepsize of $0.1$, weight decay parameter $0.01$, temperature $0.001$.

\begin{wrapfigure}{r}{0.45\textwidth}  
    \vspace{-0.4cm} 
    \centering  
    \includegraphics[width=0.39\textwidth]{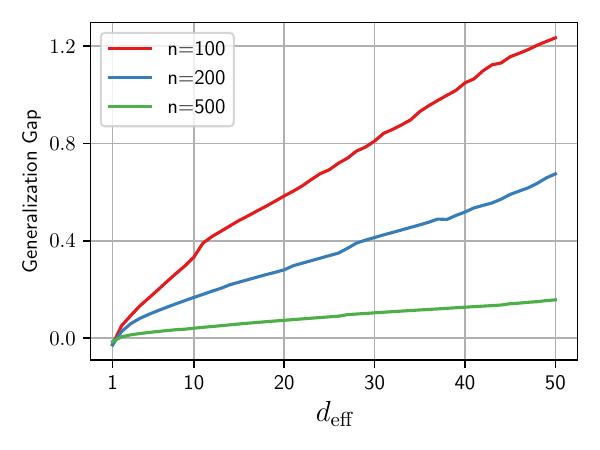}  
    \vspace{-0.4cm}
    \caption{\small Generalization gap measured by variying the effective dimension.}  
    \label{fig:gen_gap}  
\end{wrapfigure}
Figure~\ref{fig:n_vs_deff} shows the test loss at the end of 200 iterations of MFLA for different numbers of training samples $n$ and effective dimension $\deff$. For each value of $n$ and $\deff$, we average the test loss over 5 independent runs with different realizations of data and initialization. In Figure~\ref{fig:gen_gap} we measure the generalization gap, i.e.\ the average loss difference on the training set of $n$ samples, and a test set of $100000$ samples, at the end of 3000 iterations of training with MFLA. For this experiment, we try $n=100$, $n=200$, and $n=500$. As seen from both figures, $\deff$ controls the generalization gap and test loss, both of which decay with larger $n$. 

The code to reproduce the experimental results is provided at: \url{https://github.com/mousavih/MFLD-Learnability}.

\end{document}